\title{Self-Directed Linear Classification}
\newif\ifred
\titleformat*{\paragraph}{\bfseries}
\pgfplotsset{compat=1.17}
\definecolor[named]{ACMBlue}{cmyk}{1,0.1,0,0.1}
\definecolor[named]{ACMYellow}{cmyk}{0,0.16,1,0}
\definecolor[named]{ACMOrange}{cmyk}{0,0.42,1,0.01}
\definecolor[named]{ACMRed}{cmyk}{0,0.90,0.86,0}
\definecolor[named]{ACMLightBlue}{cmyk}{0.49,0.01,0,0}
\definecolor[named]{ACMGreen}{cmyk}{0.20,0,1,0.19}
\definecolor[named]{ACMPurple}{cmyk}{0.55,1,0,0.15}
\definecolor[named]{ACMDarkBlue}{cmyk}{1,0.58,0,0.21}
\crefname{ineq}{Inequality}{Inequality}
\crefname{sub}{Subsection}{Subsection}
\crefname{sdp}{SDP}{SDP}
\crefname{lp}{LP}{LP}
\crefname{ineq}{Inequality}{Inequality}
\crefname{sub}{Subsection}{Subsection}
\crefname{sdp}{SDP}{SDP}
\crefname{lp}{LP}{LP}
\newenvironment{Ualgorithm}[1][htpb]{\def\@algocf@post@ruled{\kern\interspacealgoruled\hrule  height\algoheightrule\kern3pt\relax}\def\@algocf@capt@ruled{under}\setlength\algotitleheightrule{0pt}\SetAlgoCaptionLayout{centerline}\begin{algorithm}[#1]}
{\end{algorithm}}
      \definecolor{mycyan}{RGB}{42,161,152}
\definecolor{myred}{RGB}{220,50,47}
\definecolor{gray}{RGB}{70,70,70}
\newtheorem{theorem}{Theorem}
    \newaliascnt{lemma}{theorem}
    \newtheorem{lemma}[lemma]{Lemma} 
    \crefname{lemma}{lemma}{lemmas}
        \newaliascnt{proposition}{theorem}
    \newtheorem{proposition}[proposition]{Proposition} 
    \crefname{proposition}{proposition}{propositions}
     \newaliascnt{corollary}{theorem}
    \crefname{corollary}{corollary}{corollaries}
        \newaliascnt{definition}{theorem}
    \newtheorem{definition}[definition]{Definition}
    \crefname{definition}{definition}{definitions}
\newtheorem{informal}[theorem]{Informal Theorem}
\newaliascnt{claim}{theorem}
  \newtheorem{claim}[claim]{Claim}
    \crefname{claim}{claim}{claims}
  \newaliascnt{fact}{theorem}
\newtheorem{fact}[fact]{Fact}
    \crefname{fact}{fact}{facts}
            \newaliascnt{remark}{theorem}
    \newtheorem{remark}[remark]{Remark}
    \crefname{remark}{remark}{remarks}
\newcommand{\eqdef}{\coloneqq}
\renewcommand\vec[1]{\mathbf{#1}}
\DeclareMathOperator*{\pr}{\mathbf{Pr}}
\DeclareMathOperator*{\E}{\mathbf{E}}
\DeclareMathOperator*{\argmax}{argmax}
\newcommand{\bx}{\mathbf{x}}
\newcommand{\by}{\mathbf{y}}
\newcommand{\bw}{\mathbf{w}}
\newcommand{\err}{\mathrm{err}}
\newcommand{\R}{\mathbb{R}}
\newcommand{\Z}{\mathbb{Z}}
\newcommand{\eps}{\epsilon}
\newcommand{\poly}{\mathrm{poly}}
\newcommand{\sgn}{\mathrm{sign}}
\newcommand{\sign}{\mathrm{sign}}
\newcommand{\D}{D}
\newcommand{\Ind}{\mathds{1}}
\newcommand{\1}{\Ind}
\newcommand{\littlesum}{\mathop{\textstyle \sum}}
\newcommand{\wt}{\widetilde}
\newcommand{\wstar}{\bw^{\ast}}
\newcommand{\x}{\vec x}
\newcommand{\w}{\vec w}
\newcommand{\tth}{^{(t)}}
\newcommand{\citet}{\cite}
\newcommand{\citep}{\cite}
\author{
Ilias Diakonikolas\thanks{Supported by NSF Medium Award CCF-2107079,
		NSF Award CCF-1652862 (CAREER), and
		a DARPA Learning with Less Labels (LwLL) grant.}\\
UW Madison\\
{\tt ilias@cs.wisc.edu}\\
\and
Vasilis Kontonis\thanks{Supported in part by NSF 
Award CCF-2144298 (CAREER).}\\
UW Madison\\
{\tt kontonis@wisc.edu } \\
\and
Christos Tzamos\thanks{Supported by NSF Award CCF-2144298 (CAREER).}\\
UW Madison and NKUA\\
{\tt tzamos@wisc.edu}
\and
Nikos Zarifis\thanks{Supported in part by NSF award 2023239,  NSF Medium Award CCF-2107079, and a DARPA Learning with Less Labels (LwLL) grant.}\\
UW Madison\\
{\tt zarifis@wisc.edu}\\
}
\begin{document}

\maketitle

\begin{abstract}
In online classification, a learner is presented with a sequence of examples 
and aims to predict their labels in an online fashion so as to minimize 
the total number of mistakes. 
In the self-directed variant, the learner knows in advance 
the pool of examples and can adaptively choose the order in which predictions are made. 
Here we study the power of choosing the prediction order and establish 
the first strong separation between worst-order and random-order learning 
for the fundamental task of linear classification. 
Prior to our work, such a separation was known only for very restricted concept classes, 
e.g., one-dimensional thresholds or axis-aligned rectangles.

We present two main results.
If $X$ is a dataset of $n$ points drawn uniformly at random from the $d$-dimensional unit sphere, 
we design an efficient self-directed learner that
makes $O(d \log \log(n))$ mistakes and classifies the entire dataset.
If $X$ is an arbitrary $d$-dimensional dataset of size $n$, 
we design an efficient self-directed learner that predicts the labels 
of $99\%$ of the points in $X$ with mistake bound independent of $n$. 
In contrast, under a worst- or random-ordering, the number of mistakes 
must be at least $\Omega(d \log n)$, even when the 
points are drawn uniformly from the unit sphere 
and the learner only needs to predict the labels for $1\%$ of them. 
\end{abstract}

\setcounter{page}{0}

\thispagestyle{empty}

\newpage

\section{Introduction}

Online prediction has a rich history going back to the
pioneering works of  \cite{robbins1951asymptotically,
hannan1957approximation,blackwell1954controlled}. In the online setting, 
the learner aims to resolve a
prediction task by learning a hypothesis from a sequence of examples one at a
time. The goal is to minimize the total number of incorrect predictions (aka mistake bound) 
given the knowledge of the correct answers to previously queried examples
\citep{Littlestone:88,Littlestone:89,Blum:90,LittlestoneWarmuth:94,MT:94}.  A
standard, worst-case assumption is that an adversary controls the sequence of
examples and/or labels.  In this worst-case setting, a wide range of algorithms based on
exponential reweighting
\citep{vovk1990aggregating,LittlestoneWarmuth:94,FreundSchapire:97,
vovk1995game,cesa2006prediction} and online convex optimization
\citep{hazan2016introduction,orabona2019modern} have been developed. 
Motivated by the fact that in many applications the sequence of examples is not
adversarial, the problem of online prediction has also been studied 
in more benign settings, such as assuming that the examples 
are given to the learner by a teacher (who knows the ground-truth hypothesis)
\citep{goldman1993teaching,mathias1997model,doliwa2010recursive,mansouri2022batch}
or making regularity assumptions about the sequence of examples
\citep{rakhlin2013online,jadbabaie2015online}.  In this work, we study
the model of Self-Directed learning \citep{goldman1993learning,goldman1994power}, where the
learning algorithm can choose which example to label next.  Self-directed
learning has many applications. For example, in direct marketing
\citep{ni2011direct}, the learner must study customers' characteristics and
needs and adaptively select examples (customers) to market their products. 
Moreover, it is related to curriculum learning 
--- proposed in the influential work of \cite{bengio2009curriculum} --- 
where the training examples are sorted from ``easier'' to ``harder'' in order to help
the model learn faster (see \Cref{sec:related} for more details).

Here we consider the mistake-bound model \citep{Littlestone:88, Littlestone:89} in
the realizable setting, where the labels revealed to the learner in each round
are consistent with a ground-truth classifier $f$ that belongs to a known 
concept class $\mathcal C$ fixed before the online learning phase starts.  We
now formally define the self-directed online prediction model
\citep{goldman1994power} and its random- and worst-order variants that we
consider in this work.

\begin{definition}[Self-Directed Online Learning \citep{goldman1994power}]
    \label{def:self-directed-learning}
Let $f \in \mathcal{C}$ be an unknown target concept from some concept class
$\mathcal{C}$ of boolean functions from $\R^d$ to $\{\pm 1\}$ and let $X =
\{\x^{(1)},\ldots, \x^{(n)}\}$ be a subset of $n \in \mathbb N$ points in 
$\R^d$.
The learner has access to the full set of (unlabeled) points $X$. 
\\
Until the labels of all examples of $X$ have been predicted: 
\vspace{-0.3em}
\begin{itemize} 
\itemsep0em
    \item The learning algorithm picks a point $\x \in X $ making a prediction
$z \in \{\pm 1\}$ about its label.
    \item The true label $f(\x)$ of $\x$ is revealed to the learning algorithm.
\end{itemize}
\vspace{-0.3em}
We say that the learner makes $M$ mistakes to label $X$ if, with probability at
least $99\%$, it holds that the number of incorrect predictions of the learner
is at most $M$.
\end{definition}
In this work, we investigate the power of the above self-directed online
learning setting compared to the worst- and random-order settings.
\begin{remark}[Random-order and Worst-order Online Learning]
	We shall refer to the setting where the point $\x$ during the training phase is
	picked uniformly at random (without replacement) from the unlabeled data $X$ as
	\textbf{random-order} learning.  Moreover, we  shall refer to the setting where
	the next example $\x$ is chosen by an adversary as \textbf{worst-order}
	learning. 
\end{remark}

Observe that in \Cref{def:self-directed-learning} the learner must predict
labels for \emph{all examples} of the dataset $X$.  We will also consider weaker
learning notions allowing the learner to avoid labeling a fraction of examples
in $X$.  As we will see, these weaker learning notions are especially useful
when dealing with an unstructured dataset $X$, containing potentially
ambiguous or adversarial examples that may be inherently hard to predict.
\begin{remark}[Perfect, Strong, and Weak Self-Directed Learners]
{\em	We will refer to the setting of \Cref{def:self-directed-learning}, where the
	learner predicts the labels of all examples in $X$, as \textbf{perfect}
	self-directed learning.  Moreover, we refer to the setting where the learning
	algorithm provides labels for at least $(1-\epsilon)$-fraction of $X$ for every
	$\epsilon \in (0,1]$ as \textbf{strong} self-directed learning. Finally, in
	\textbf{weak} self-directed learning, the learner predicts the labels of some
	fixed fraction of $X$, say $1\%$.}
\end{remark}

\paragraph{Online Linear Classification}
We focus on the fundamental setting of online linear
classification that dates back to Rosenblatt's perceptron
~\citep{Rosenblatt:58}.  A (homogeneous) {\em halfspace} or Linear Threshold
Function (LTF) is a Boolean-valued function $f: \R^d \mapsto \{\pm 1\}$ of the
form $f(\x) = \sgn(\vec w^\ast \cdot \x)$, for a vector $\vec w^\ast \in \R^d$
(known as the weight vector).  Halfspaces are a central class of Boolean
functions in several areas of computer science, including complexity theory,
learning theory, and optimization~\cite{Novikoff:62, Yao:90, GHR:92,
FreundSchapire:97, Vapnik:98, CristianiniShaweTaylor:00,}.  In (realizable)
online linear classification, the halving algorithm --- first appeared in
\cite{barzdi1972prediction} and further analyzed and refined in
\cite{Mitchell:82,Angluin:87,Littlestone:88} --- makes $O(d \log n)$ mistakes for
perfect classification of $n$ points in $d$ dimensions.  This mistake bound and
in particular its dependence on the dataset size $n$ is known to be
information-theoretically optimal for linear classification \citep{Littlestone:88} 
in the worst-order online learning setting.  
We show  (see \Cref{pro:random-order-lower-bound})
that, even when the order of examples is random, $\Omega(d \log n)$ mistakes
are required.  In fact, this is true even when the $n$ points of $X$ are drawn
uniformly from the $d$-dimensional unit-sphere $\mathbb{S}_d$ and the learner
only needs to predict the labels of $1\%$ of them (weak-learning).

\paragraph{Improved Mistake Bounds via Self-Directed Learning}
In the self-directed learning setting, it is known \citep{goldman1994power} that
for one-dimensional threshold functions one mistake suffices --- significantly
improving over the $\Theta(\log n)$ mistake-bound for the same class in the
worst-order setting \citep{Littlestone:88,Littlestone:89}. Similar improvements
in self-directed learning have also been shown for other simple concept classes
such as monotone-monomials and axis-aligned rectangles (see
\citet{ben1997online} for a discussion on the gaps between self-directed
learning and worst-order learning).  In this work, we study whether
self-directed learning can improve the number of mistakes to learn more
complicated, high-dimensional concept classes focusing on the fundamental class
of $d$-dimensional LTFs.  We remark that, beyond the $O(d \log n)$
mistake-bound given by the halving algorithm, no other mistake bounds are known
for self-directed learning of halfspaces, even when the dataset $X$ is assumed
to be structured, e.g., the $n$ examples of $X$ are drawn uniformly at random
from the $d$-dimensional unit sphere $\mathbb S_d$.  We ask the following
natural question.
\begin{center}
\emph{Can self-directed learners bypass the $\Omega(d \log n)$ mistake barrier 
of worst- and random-order learning for $d$-dimensional LTFs?}
\end{center}
We give a positive answer to the above question showing that allowing the
learner to choose the order of examples significantly improves the mistake bound 
under both structured and arbitrary datasets.

\subsection{Our Results and Techniques}

Our first result assumes that the dataset $X$ is structured: its $n$ examples 
are drawn uniformly at random from the unit sphere $\mathbb S_d$.  In this case, 
we give a self-directed learning algorithm that only makes $O(d \log \log n)$ mistakes 
to predict the labels of all examples in $X$ (\textbf{perfect learning}), establishing
an exponential improvement over the $\Omega(d \log n)$ mistake bound 
of worst- and random-order learners. 
Since it is known \citep{ben1997online} that even self-directed learners must make
$\Omega(d)$-mistakes, our $O(d \log \log n)$ is close to best-possible.

\begin{informal}[Perfect Self-Directed Learner on $\mathbb{S}_d$]\label{inf-thm:sd-learning-sphere}
There exists a perfect self-directed learner for halfspaces that makes 
$O(d \log \log n)$ mistakes to classify a set of $n$ points 
drawn uniformly from $\mathbb{S}_d$.
\end{informal}

Our learner maintains a 
halfspace hypothesis and at each round predicts the label of the example with the largest margin from the current halfspace.  At a high level, this corresponds to
choosing the example for which the current hypothesis is most-confident in an 
``easy examples first'' manner.   When the prediction on such an example is 
incorrect, we use the so-called margin-perceptron update rule used in 
\cite{DunaganV04} in the context of linear programming.  We show that the margin-perceptron update used on examples for which the classifier is very confident (large-margin) -- but made a mistake -- converges super-linearly to the ground-truth 
halfspace $\vec w^\ast$.
For a more detailed overview, we refer to \Cref{sec:spherical-roadmap}.

Our second result is a self-directed learner for arbitrary datasets.  In this case,
we give a \textbf{strong} self-directed learner that makes $\poly(d)$ mistakes
and labels $99\%$ of $X$. Recall that, in contrast, any worst- or random-order
learner makes $\Omega(d \log n)$ mistakes to label $1\%$ of the dataset (even when
the data are drawn uniformly from the unit sphere).

 \begin{informal}[Strong Self-Directed Learner for Arbitrary Data]\label{inf-thm:sd-distribution-free}
There exists a strong, self-directed learner for halfspaces that, given an arbitrary set $X$ of $n$ points in $d$ dimensions, makes 
    $\poly(d)$ mistakes to classify $99\%$ of $X$.
\end{informal}

At a high level, similarly to our algorithm for uniformly spherical data, 
our self-directed learner for arbitrary data again picks
the examples with the largest margin from the current hypothesis and 
uses the margin-perceptron update rule of \cite{DunaganV04}.
The second ingredient of our learner is the Forster transform \citep{forster2002} 
that puts the examples in (approximately) Radially Isotropic Position,
i.e., make $X$ isotropic and also normalize all points so that they lie on the unit sphere; see \Cref{def:rip}.  Recent works (\citet{HardtM13,AKS20,DTK22})
have provided efficient algorithms to compute the Forster transform.
When the dataset is in Radially Isotropic Position, one can show that it satisfies
a ``soft-margin'' condition (i.e., that a non-trivial fraction of $X$ has non-trivial
margin with every halfspace).  We use this property to first obtain a 
\emph{weak learner} that does $O(d \log d)$ mistakes to label an 
$\Omega(1/d)$-fraction of $X$.  We then use a generic boosting approach to transform
this weak learner into a \emph{strong learner} that does $\poly(d)$ mistakes to label
$99\%$ of $X$, see \Cref{sec:roadmap-distribution-free-sd} and \Cref{ssec:boosting}.

\subsection{Related Work}
\label{sec:related}
Related to the setting of self-directed learning is active learning \citep{cohn1994improving},
where the learner has access to a large pool of unlabeled examples and chooses the ``most informative'' to ask for their labels. 
The goal is to find a classifier with good generalization while minimizing the number of label queries. 
There is a long line of research on active linear classification in the distribution-specific setting (e.g., under the uniform distribution on the unit sphere) \citep{dasgupta2005,Hanneke2011,balcan2016active}.  We remark that our goal
of minimizing the number of mistakes is orthogonal to that of active learning:
at a high-level, our algorithms pick the examples for which the current hypothesis
is most confident (``easiest examples'') while in active learning one typically asks for the labels of the ``hardest examples'',  e.g., those with the smallest margin
with respect to the current guess (see, e.g., \citep{AwasthiBHU15,AwasthiBHZ16,ZSA20}).

In deep learning, stochastic gradient descent typically trains models by considering
the examples in a random order.  In the influential work of \cite{bengio2009curriculum}
the authors proposed curriculum learning: training machine learning models in a 
``meaningful order'' -- from easy examples to harder ones.  There is a long line 
of research (see the surveys \citep{hacohen2019power,wang2021survey,soviany2022curriculum} and references therein) giving empirical evidence that curriculum learning provides significant
benefits in convergence speed and generalization over training with random order.
Our results provide theoretical evidence that ordering the examples 
from easier to harder significantly reduces the mistakes made by the learner.

\section{Self-Directed Learning on $\mathbb{S}_d$}\label{sec:uniform}

In this section, we present our self-directed learning algorithm for datasets uniformly distributed on the  unit sphere.  We first state the formal version of \Cref{inf-thm:sd-learning-sphere}.
\begin{theorem}[Perfect Self-Directed Learner on $\mathbb S_d$]
\label{thm:sd-learning-sphere}
Let $\delta \in (0,1/2]$ and let $n$ be larger than 
some sufficiently large universal constant.
Let $X$ be a set of $n$ i.i.d.\ samples from $\mathbb S_d$
with true labels given by a homogeneous halfspace 
$f(\x) = \sgn(\vec w^\ast \cdot \x)$.
There exists a self-directed classifier that makes $O(d \log \log n ~  \log(1/\delta))$ mistakes, runs in time $\poly(d,n)$ and classifies 
all points of $X$ with probability at least $1-\delta$.
\end{theorem}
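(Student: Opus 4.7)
The algorithm I would propose is the one sketched by the authors: maintain a unit weight vector $\vec{w}_k$, initialized by querying any point and setting $\vec{w}_0$ to its signed copy (so that $\theta_0 \leq \pi/2$ at the cost of a single query). At each round, pick the unlabeled $\x \in X$ maximizing $|\vec{w}_k \cdot \x|$, predict $\sgn(\vec{w}_k \cdot \x)$, and on a mistake apply the margin-perceptron update $\vec{w}_{k+1} \propto \vec{w}_k - (\vec{w}_k\cdot\x)\,\x$.

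The first piece of the analysis is a per-mistake angular contraction. Let $\theta_k = \angle(\vec{w}_k, \vec{w}^*)$ and $\gamma_k = |\vec{w}_k \cdot \x|$ at the mistake. Since $\x$ lies in the disagreement double-wedge of $\vec{w}_k$ and $\vec{w}^*$, a short geometric argument gives $\gamma_k \leq \sin\theta_k$; expanding $\vec{w}^*\cdot\vec{w}_{k+1}$ and $\|\vec{w}_{k+1}\|$ yields
\[
\cos^2\theta_{k+1} \;\geq\; \frac{\cos^2\theta_k}{1-\gamma_k^2}, \qquad \sin^2\theta_{k+1} \;\leq\; \frac{\sin^2\theta_k - \gamma_k^2}{1-\gamma_k^2}\,.
\]
The main step is to lower-bound $\gamma_k$ using the largest-margin rule: I would show that, with probability $1-\delta/2$ over $X$, every mistake made at angle $\theta_k \in [2^{-(i+1)}, 2^{-i}]$ satisfies $\gamma_k^2 \geq c\sin^2\theta_k \cdot (\log n - i)/d$ for a universal constant $c$. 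The argument uses that $(\vec{w}_k\cdot\x, \vec{w}^*\cdot\x)$ for uniform $\x \in \mathbb{S}_d$ is approximately a centered bivariate Gaussian with variance $1/d$, that the disagreement region (of spherical measure $\theta_k/\pi$) contains $\Theta(n\theta_k)$ sample points with high probability by Chernoff, and an extreme-value estimate on the conditional density of $\vec{w}_k \cdot \x$ inside the disagreement region, which peaks near $\pm\sin\theta_k$.

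Plugging this bound into the contraction lemma gives $\sin^2\theta_{k+1} \leq \sin^2\theta_k\bigl(1 - c(\log n - i)/d\bigr)$, so the algorithm clears each dyadic phase $\theta_k \in [2^{-(i+1)},2^{-i}]$ in $O\bigl(d/(\log n - i)\bigr)$ mistakes. Summing over $i = 0,\ldots,\log n - 1$ yields $O\bigl(d\sum_{j=1}^{\log n}1/j\bigr) = O(d\log\log n)$ mistakes. For termination, a union bound using that the density of $\vec{w}^*\cdot\x$ on $\mathbb{S}_d$ is $O(\sqrt{d})$ near zero shows that with probability $1-\delta/2$ every point of $X$ has $|\vec{w}^*\cdot\x| \geq c'\delta/(n\sqrt{d})$; once $\sin\theta_k$ falls below this threshold, the disagreement region contains no sample point and no further mistake is possible. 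A standard amplification step (for instance, $O(\log(1/\delta))$ independent restarts combined with a final verification on a small subset) absorbs the $\log(1/\delta)$ factor in the mistake bound.

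The principal difficulty is the margin lower bound. By the time the $k$-th mistake occurs, the algorithm has already labeled many points, some of which may fall in the current disagreement region of $\vec{w}_k$ because they were processed under earlier hypotheses. To close the argument I would show that the ``high-margin corner'' of the current disagreement region (the neighborhood of $\vec{w}^*$'s boundary inside $\vec{w}_k$'s disagreement wedge) always retains an unlabeled point of margin $\Omega\bigl(\sin\theta_k \sqrt{(\log n-i)/d}\bigr)$: the union, over prior hypotheses $\vec{w}_j$, of (current disagreement $\cap$ past agreement) has measure controlled by the past $\theta_j$'s, so the greedy largest-margin rule cannot exhaust the relevant candidates. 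This delicate interplay between the adaptive selection and the spherical geometry is where the main work lies.
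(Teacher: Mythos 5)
Your high-level strategy (greedy largest-margin selection, margin-perceptron update, anti-concentration of the maximum margin inside the disagreement wedge) is the same as the paper's, and your accounting via dyadic angle phases with the harmonic sum $\sum_j d/j = O(d\log\log n)$ is a perfectly good substitute for the paper's ``super-linear convergence'' recursion. But the proposal has a genuine gap exactly where you say the main work lies: the margin lower bound $\gamma_k^2 \gtrsim \sin^2\theta_k(\log n - i)/d$ is proved for i.i.d.\ points, whereas by the $k$-th mistake the pool of \emph{unlabeled} points is a highly non-i.i.d.\ residue of adaptive, margin-dependent selections under previous hypotheses. You sketch a fix (``the union over prior hypotheses of current-disagreement $\cap$ past-agreement has measure controlled by the past $\theta_j$'s'') but do not carry it out, and it is not clear it closes: the already-labeled points are precisely the \emph{largest-margin} points with respect to the $\vec w_j$'s, which is the same corner of the sphere from which you need to extract a large-margin point for $\vec w_k$. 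The paper avoids this entirely by a structural device you do not propose: it partitions $X$ into $2k$ random buckets, performs exactly one margin-perceptron update per fresh bucket (so each anti-concentration application sees an independent sample), and then uses a train/test split --- labeling each half with the hypothesis trained on the other half --- so that the final generalization bound is also applied to independent data. Any complete proof needs either this sample-splitting or a worked-out version of your adaptive argument; as written, the central lemma does not apply to the points the algorithm actually has available.

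Two secondary issues. First, your termination argument requires driving $\sin\theta_k$ below the minimum margin $\approx \delta/(n\sqrt d)$, but your contraction guarantee is vacuous once $\theta_k \lesssim 1/n$ (the factor $\log n - i$ becomes nonpositive and the disagreement region is expected to contain no points), so the last phases have no progress guarantee; a fix is to instead bound the number of sample points in the band $|\vec w^*\cdot\x|\leq \sin\theta_{k_0}$ once $\theta_{k_0}=O(1/n)$ and charge one mistake to each (the paper instead accepts $O(T)$ residual mistakes on the held-out half via a Chernoff generalization lemma). Second, ``$O(\log(1/\delta))$ independent restarts'' is not a standard move in the mistake-bound model --- failed runs still incur mistakes and you must identify the successful hypothesis without spending further mistakes; the paper instead absorbs $\log(1/\delta)$ directly into the iteration count of the stochastic convergence lemma via Azuma, plus an $O(d\log(1/\delta))$-mistake initialization.
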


\subsection{Roadmap of the Proof of \Cref{thm:sd-learning-sphere}}
\label{sec:spherical-roadmap}

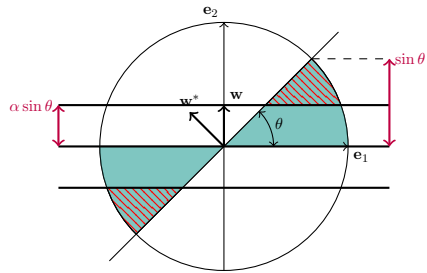
\begin{wrapfigure}[16]{r}{0.4\textwidth}
	\centering
	\begin{tikzpicture}[scale=0.55, every node/.style={transform shape} ]
	\coordinate (start) at (0.5,0.5);
	\coordinate (center) at (0,0);
	\coordinate (end) at (0,0.7);
	\coordinate (end2) at (-0.12/0.7634,0.366/0.7634);
	\coordinate (start2) at (0,0.7);
	\coordinate (start3) at (3,0);
	\coordinate (end3) at (45:3);
	\coordinate (start4) at (135:2);
	\coordinate (end4) at (-2,0);
	\draw (0,0) circle (3);

\draw[fill=mycyan, opacity=0.6] (0,0) -- (3,0) arc (0:45:3.0cm) -- cycle;
 \begin{scope}[thin]
\clip(0,1) rectangle (3,3);
\draw[pattern=north west lines, pattern color=red, opacity=1] (0,0) -- (3,0) arc (0:45:3.0cm) -- cycle;
\end{scope}

\draw[fill=mycyan, opacity=0.6] (0,0) -- (-3,0) arc (180:225:3.0cm) -- cycle;
	 \begin{scope}[thin]
\clip(0,-1) rectangle (-3,-3);
\draw[pattern=north west lines, pattern color=red, opacity=1] (0,0) -- (-3,0) arc (180:225:3.0cm) -- cycle;
\end{scope}
	\draw[black,thick,-] (-4,0) -- (4,0) ;
 	\draw[black,thick,-] (-4,1) -- (4,1) ;
  	\draw[black,thick,-] (-4,-1) -- (4,-1) ;
	\draw[->] (-3,0) -- (3,0) node[anchor=north west,black] {$\vec e_1$};
	\draw[->] (0,-3) -- (0,3) node[anchor=south east] {$\vec e_2$};
\draw[black,thick,->] (0,-0) -- (-0.707*4.7/4,0.707*4.7/4) node[anchor= south ] {$\wstar$};
\draw[purple,thick,<->] (-4,0) -- (-4,1)node[anchor= south east,left] {$\alpha \sin\theta$};
 \draw[purple,thick,<->] (4,0) -- (4,2.12)node[anchor= south east,right] {$ \sin\theta$};
 
	\draw[black] (-0.707*4.7/1.2,-0.707*4.7/1.2) -- (0.707*4.7/1.2,0.707*4.7/1.2);
	\draw[black,thick ,->] (0,0) -- (0,1) node[above right] {$\bw$};
\pic [draw, <->, angle radius=12mm, angle eccentricity=1.2, "$\theta$"] {angle = start3--center--end3};
\draw[black,dashed ,-] (2*1.414*3/4,2*1.414*3/4) -- (4,2.12) node[above right] {};
\end{tikzpicture}
	\caption{The probability that the maximum-margin mistake is has
 margin larger than $\alpha \sin \theta$ corresponds to the probability
 of the shaded subset of the disagreement region (shown in cyan).}
	\label{fig:uniform-large-margin}
\end{wrapfigure} The first ingredient of our algorithm 
is an adaptive way to pick examples: at every step the 
learner predicts the labels of examples for which the current hypothesis is
most confident.  The intuition behind this choice is that for 
those examples the hypothesis is more often correct and,
when it predicts incorrectly, they can be used to improve
it significantly.  The second ingredient is the ``margin-perceptron''
algorithm of \cite{DunaganV04}.  The margin-perceptron iteration
is a variant of the standard perceptron update rule that scales
the update with the signed margin of the example, i.e.,
given an example $\x$ that the current hypothesis $\vec w^{(t)}$
predicts incorrectly (i.e., $\sgn(\vec w^{(t)} \cdot \x) 
\neq \sgn(\vec w^\ast \cdot \x)$), we update $\vec w^{(t)}$ as
follows:
\begin{equation}
\label{eq:margin-perceptron}
\vec w^{(t+1)} \gets \vec w^{(t)} - (\vec w^{(t)} \cdot \x)  \x 
\end{equation}
Contrary to the standard perceptron update (i.e., 
$\vec w^{(t+1)} \gets \vec w^{(t)} - \x $), the perceptron update rule of \Cref{eq:margin-perceptron} does 
not rely on improving the correlation with the target vector 
$\vec w^\ast$ but on \emph{decreasing the norm} $\|\vec w^{(t)}\|_2$
and \emph{not decreasing the correlation} with $\vec w^\ast$.
We show that, as long as the margin  $|\vec w^{(t)} \cdot \x|$ 
is large, the margin-perceptron update of \Cref{eq:margin-perceptron} will 
significantly reduce the angle $\theta(\vec w^{(t)}, \vec w^\ast)$ 
between $\vec w^{(t)}$ and $\vec w^\ast$.

\paragraph{Reducing $\tan (\theta(\vec w^{(t)}, \vec w^{\ast}))$ via Margin-Perceptron}

We first observe that given an example that $\vec w^{(t)}$ mispredicts, 
its maximum
possible margin with the current guess $\vec w^{(t)}$ is equal to
$\sin(\theta(\vec w^{(t)}, \vec w^\ast))$; see \Cref{fig:uniform-large-margin}.
We show (see \Cref{lem:update-no-buckets-main}) 
that when 
$|\vec w^{(t)} \cdot \x| \geq r \sin \theta(\vec w^{(t)}, \vec w^\ast))$, 
the margin-perceptron update reduces $\tan(\theta(\vec w^{(t)} , \vec w^\ast))$ multiplicatively:
\begin{equation}
\label{eq:tantheta-decay}
\tan^2(\theta(\vec w^{(t+1)}, \vec w^\ast))
\leq  (1- r^2) \tan^2(\theta(\vec w^{(t)}, \vec w^\ast)) 
\,.
\end{equation}
Observe that, the closer $r$ is to $1$ (i.e., the larger the margin), 
the faster $\theta^{(t)} = \theta(\vec w^{(t)}, \vec w^{\ast})$ will
converge to $0$.  Using the fact that the distribution is uniform on 
the sphere, the probability that some halfspace with normal vector 
$\vec u$ disagrees with the  ground-truth halfspace $\vec w^\ast$ is equal to  $\theta(\vec u, \vec w^\ast)/\pi$.  Thus, achieving 
$\theta(\vec w^{(t)}, \vec w^\ast) \leq O(1/n)$, implies
that, in expectation, the number of mistakes on a sequence of $n$ i.i.d.\
examples from $\mathbb S_d$ is going to be $O(1)$; see \Cref{lem:generalization}.
Therefore, in what follows, our goal will be to make $\theta^{(t)}$
smaller than $O(1/n)$. \footnote{More precisely, it suffices to have $\theta^{(t)} \leq O(d \log \log n /n)$, as this would imply an $O(d \log \log n)$ mistake
bound on a sequence of $n$ data.}

\paragraph{Achieving Large Margin on $\mathbb S_d$}
Given the current hypothesis with normal vector $\vec w^{(t)}$, 
a uniformly random point on the $d$-dimensional sphere has
margin roughly $\Omega(1/\sqrt{d})$.  Given that the point falls in
the disagreement region of $\vec w^{(t)}$ and $\vec w^\ast$, 
the margin can be shown to be roughly 
$\Omega(\sin \theta^{(t)}/\sqrt{d})$. Therefore, in each round,
\Cref{eq:tantheta-decay} implies that the angle $\theta$ (recall that for small $\theta$
it holds that $\theta \approx \tan \theta$) is going to decrease roughly 
by a factor of $\sqrt{1-\Omega(1/d)}$:
\[
\theta^{(t+1)} \leq  \sqrt{1- \Omega(1/d)} ~ \theta^{(t)}
\,.
\]
In order to make $\theta^{(t)} \leq O(1/n)$, 
the above iteration requires roughly $d \log n$ updates which 
only implies a mistake bound of  $O(d \log n)$.
This is where the fact that at every iteration $t$ we choose the example with maximum margin with respect to $\vec w^{(t)}$ comes into play.
We show that for $n$ examples distributed uniformly on the sphere, 
the maximum-margin mistake with respect to $\vec w^{(t)}$
has margin roughly 
$\Omega(\sqrt{\log(n \theta^{(t)})/ d}) ~ \sin(\theta^{(t)})$ 
when $n \theta^{(t)} \leq e^{O(d)}$, 
and 
$(1- (1/(n \theta^{(t)}))^{2/d}) ~ \sin(\theta^{(t)}) $ when $n \theta^{(t)}$ is larger
than $e^{O(d)}$.  We show the following proposition; for the formal
statements, see \Cref{pro:conditional-max-margin} and 
\Cref{cor:unconditional-max-margin}.
\begin{proposition}[Informal: Max-Margin in the Disagreement Region]
\label{pro:informal-unconditional-max-margin}
Let $\vec v, \vec u \in \R^d$ be unit vectors with angle $\theta(\vec v, \vec u) = \theta$.
Let $C$ be the indicator of the disagreement region of the two homogeneous
halfspaces defined by  $\vec v, \vec u$, i.e., $C = \1\{ (\vec v \cdot \x) 
(\vec u \cdot \x) \leq 0\}$.
Let $X$ be a dataset with $n$ i.i.d.\ samples from $\mathbb S_d$.
Then with probability at least $2/3$:
\vspace{-0.4em}
\begin{enumerate}
\itemsep0em
\item 
    If $n \theta \leq e^d$, it holds that
    \(
    \max_{\x \in X \cap C} |\vec u \cdot \x| \geq 
    \Omega(\sqrt{ {\log (n \theta) }/{d} })  
    \sin \theta
    \).
\item  
    Otherwise, 
    \(
    \max_{\x \in X \cap C} |\vec u \cdot \x|
    \geq (1- ({1}/{(n \theta)})^{2/d}) \sin\theta 
    \).
\end{enumerate}
\end{proposition}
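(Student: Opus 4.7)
The plan is to reduce to a two-dimensional calculation using the rotational symmetry of $\mathbb{S}_d$ about $\mathrm{span}(\vec u, \vec v)$. After rotating we may assume $\vec u = \vec e_d$ and $\vec v = \sin\theta\,\vec e_1 + \cos\theta\,\vec e_d$, so both defining events depend only on $(x_1, x_d)$, whose marginal density under $\x \sim \mathbb{S}_d$ is $\tfrac{d-2}{2\pi}(1 - x_1^2 - x_d^2)^{(d-4)/2}$ on the unit disk. Passing to polar $(x_1, x_d) = (\rho\cos\phi, \rho\sin\phi)$, the upper half of the disagreement region corresponds to $\phi \in [\pi - \theta, \pi]$ and the margin condition $x_d \geq \alpha\sin\theta$ to $\rho \geq \alpha\sin\theta/\sin\phi$. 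Integrating out $\rho$ first and using symmetry between the two wedges, I expect to arrive at
\[
p(\alpha) \;:=\; \pr_{\x \sim \mathbb{S}_d}\bigl[\,\x \in C,\; |\vec u \cdot \x| \geq \alpha\sin\theta\,\bigr] \;=\; \frac{1}{\pi}\int_0^\theta \left(1 - \frac{\alpha^2 \sin^2\theta}{\sin^2\phi}\right)_+^{(d-2)/2} d\phi.
\]
Since $\pr[\max_{\x \in X \cap C}|\vec u\cdot\x|\geq \alpha\sin\theta] = 1 - (1-p(\alpha))^n \geq 1 - e^{-np(\alpha)}$, the task in each regime reduces to exhibiting the claimed $\alpha$ together with the bound $n p(\alpha) \gtrsim 1$.

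For the regime $n\theta \leq e^d$, I would take $\alpha = c\sqrt{\log(n\theta)/d}$ for a small absolute constant $c$, restrict the integral to $\phi \in [\theta/2, \theta]$ (where $\sin\phi \geq (\sin\theta)/2$), and bound $1 - \alpha^2 \sin^2\theta/\sin^2\phi \geq 1 - 4\alpha^2$. The elementary inequality $(1-x)^k \geq e^{-2kx}$ for $x \leq 1/2$ then yields $p(\alpha) \geq (\theta/(2\pi))(n\theta)^{-4c^2}$; choosing, say, $c^2 \leq 1/8$ gives $n p(\alpha) \gtrsim (n\theta)^{1/2}$, which exceeds $\log 3$ once $n\theta$ is above a small absolute constant. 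For the regime $n\theta > e^d$, I would take $\alpha = 1 - \beta$ with $\beta$ of order $(n\theta)^{-2/d}$, substitute $\phi = \theta(1-u)$, and use the concavity of $\sin$ on $[0,\pi/2]$ to deduce $\sin\phi/\sin\theta \geq 1 - u$, which makes the integrand at least $(2(\beta - u))^{(d-2)/2}$ on $u \in [0,\beta]$. Integrating gives $p(\alpha) \gtrsim \theta (2\beta)^{d/2}/d$, and solving $np(\alpha) \geq \log 3$ recovers the desired $\beta$.

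The main obstacle I anticipate is uniform quantitative control of the trigonometric quantities across the range of $\theta$: I would first reduce WLOG to $\theta \leq \pi/2$ using the reflection symmetry $\vec u \mapsto -\vec u$, which preserves both $C$ and $|\vec u \cdot \x|$, and then use the linear bound $\sin\phi \geq (2/\pi)\phi$ on $[0,\pi/2]$ whenever a concrete comparison is needed. A secondary subtlety is that the derivation of the closed form for $p(\alpha)$ must be exact (no small-$\theta$ Taylor expansion), but the polar computation above is in fact exact on $\theta \in (0,\pi/2]$. Finally, matching the prefactor exactly to the stated $(1/(n\theta))^{2/d}$ in Case~2 requires absorbing a $d^{2/d}$ factor (which tends to $1$ as $d\to\infty$) into universal constants, so the form stated in the proposition is correct up to those constants.
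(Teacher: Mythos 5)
Your proposal is correct and follows essentially the same route as the paper: reduce to the two-dimensional marginal in polar coordinates, lower-bound the probability that a single sample lands in $C$ with margin at least $\alpha\sin\theta$ (restricting to $\phi\in[\theta/2,\theta]$ in the first regime and handling the coupled constraint near $\phi=\theta$ in the second), and conclude via $1-(1-p(\alpha))^n\geq 1-e^{-np(\alpha)}$; working directly with the unconditional $p(\alpha)$ even spares you the paper's intermediate Chernoff bound on $|X\cap C|$. One slip: the reflection $\vec u\mapsto -\vec u$ does \emph{not} preserve $C$ --- it maps the disagreement region to the agreement region --- so that WLOG reduction fails as stated; fortunately your actual estimates (e.g., $\sin(\theta/2)\geq \sin(\theta)/2$ by concavity) hold for all $\theta\in(0,\pi)$, and in the paper's application one always has $\theta<\pi/2$. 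A second, cosmetic point: $1-(1-\beta)^2/(1-u)^2=(\beta-u)(2-u-\beta)/(1-u)^2$ is only $\geq \beta-u$ (not $2(\beta-u)$) for small $\beta$, but as you already note such constant factors are absorbed in the final bound.
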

We observe that when $n \theta \to \infty$ the maximum-margin 
over the $n$ samples converges to its maximum value of $\sin \theta$.
Since the analysis of the general case turns out to be similar
to the case of $n \theta \leq e^d$,  for simplicity, in this 
overview we will focus on this case.  

\paragraph{(Stochastic) Super-Linear Convergence}
We observe that by taking
the maximum-margin sample and using \Cref{pro:informal-unconditional-max-margin}, we improved the decay of the angle to roughly:
\begin{equation}
\label{eq:super-linear-angle-decay}
\theta^{(t+1)} \leq  \sqrt{1- \frac{\log(n\theta^{(t)}) }{d} } ~ \theta^{(t)}
\leq 
e^{-\log(n \theta^{(t)})/(2d) } ~ \theta^{(t)} 
=
 (\theta^{(t)})^{1-1/(2d)} (1/n)^{1/(2d)} \,.
\end{equation}
Therefore, the angle after an update on the maximum-margin mistake 
is \emph{the weighted geometric mean between  $\theta^{(t)}$ and $1/n$}
with weights $1-1/(2d)$ and $1/(2d)$.  It is not hard to show that 
after $m = O(d \log \log n)$ such updates we have $\theta^{(m)}
\leq O(1/n)$, achieving our goal.  One issue is that \Cref{pro:informal-unconditional-max-margin} only gives ``good'' probability, i.e., 2/3, 
that such a large-margin update will happen.  In \Cref{lem:super-linear-convergence-main}, we show that increasing the iterations by a constant factor is enough to show that $\theta^{(m)}$ will be $O(1/n)$ with good probability.

\paragraph{Dealing with the Dependencies}
In our discussion so far, we have ignored the fact that
if at some step $t$ the algorithm searches over the whole dataset $X$
in order to find the example $\x$ with the largest margin 
with respect to the guess $\vec w^{(t)}$, in the next step the remaining points
of $X$ are no-longer i.i.d. samples from $\mathbb S_d$; therefore,
many of our claims using the independence of the samples (e.g.,
\Cref{pro:informal-unconditional-max-margin}) no longer work.
We handle this issue by only considering a large-enough subset of examples
in each step of the margin-perceptron update, i.e., instead of selecting
the point of maximum-margin over the whole dataset, we select 
the maximum-margin example of a random subset (see Steps 3,4 in 
\Cref{alg:sd-margin-perceptron-sphere-main}) 
inside which we only perform a single
margin-perceptron update.  Since the total number of margin-perceptron
updates required is only $O(d \log \log n)$, we split the dataset
into $k = O(d \log \log n)$ random subsets of equal size; 
therefore, assuming that $n$ is larger than roughly $\Omega(d \log \log d)$, 
each bucket will have enough samples to guarantee that a margin-perceptron update with large margin will happen with good probability.

Finally, as we use examples of $X$ to update the guess $\w^{(t)}$,
when we reach the target angle, say $\theta^{(t)} \leq O(1/n)$, we cannot
guarantee that $\vec w^{(t)}$ will make few mistakes on the samples that we
used to train it (as $\vec w^{(t)}$ depends on those samples).
To avoid this issue, we split the initial dataset into two random subsets
of equal size $A$ and $B$, and then train a linear classifier for each part.
In the final step, we use the linear classifier trained on $A$ to label the dataset $B$ (that was not used during its training) and the linear classifier trained on $B$ to label the examples of $A$; see 
Step 5 in \Cref{alg:sd-margin-perceptron-sphere-main}.

\begin{Ualgorithm}
	\centering
	\fbox{\parbox{5.8in}{
 {\bf Input:} An initialization $\w$. 
 {\bf Output:} A sequence of labeled data $(\x^{(t)}, z^{(t)})$.
\begin{enumerate}
  \setlength\itemsep{0.1em}
\item Initialize guesses $\vec w^{(0)} \gets \vec w$,
~~~
$\vec v^{(0)} \gets \vec w$.
\item Initialize the set of unlabeled data $U \gets X$.
\item Split $U$ in $2 k$ sets $U_{1},\ldots, U_{2k}$.

\item For $t =1,\ldots, k$:
\smallskip
\\
$~~\vec w^{(t)} \gets$ \textsc{Margin-Perceptron}($U_t$, $\vec w^{(t-1)}$), 
$\vec v^{(t)} \gets$ \textsc{Margin-Perceptron}($U_{k + t}$, $\vec v^{(t-1)}$).
\item For $t =1,\ldots, k$:
\\
$~~$ Label points of $U_{k + t}$ with $\vec w^{(k)}$
    and  label points of $U_t$ with $\vec v^{(k)}$.
\end{enumerate}
\centering
	\fbox{\parbox{5.5in}{
\textsc{Margin-Perceptron}($U,\vec w$)\\
 {\bf Input:} An initialization $\w$ and a set of points $U$. 
 {\bf Output:} A vector $\vec w'$.
\begin{enumerate}
  \setlength\itemsep{0.1em}
\item Obtain $U'$ by sorting the points of $U$ in decreasing order of margin from 
        $\vec w$, i.e., $|\vec x^{(i+1)} \cdot \vec w| \leq
        |\vec x^{(i)} \cdot \vec w|$.
\item For $\x\in U'$:
\begin{enumerate}
  \setlength\itemsep{0.1em}
\item Predict the label of $\x$ with $\vec w$.
\item If the prediction is incorrect, 
exit the loop and return $\vec w' \gets \vec w-(\vec w\cdot \x)\x$.
\end{enumerate}
\end{enumerate}}}
    }}
    \vspace{0.1cm}
	\caption{Self-Directed Learning on $\mathbb S_d$}
	\label{alg:sd-margin-perceptron-sphere-main}
\end{Ualgorithm}

\subsection{Proof of \Cref{thm:sd-learning-sphere}}

We first give a proof sketch of \Cref{pro:informal-unconditional-max-margin}
showing that maximum-margin mistakes will have margin significantly larger
than the margin of an ``average'' mistake.  In the following sketch
we only show the first case of \Cref{pro:informal-unconditional-max-margin};
for the full proof we refer to \Cref{app:uniform}.

\smallskip

\begin{proof}{(sketch of \Cref{pro:informal-unconditional-max-margin})}
We observe that by the rotational symmetry of the uniform distribution on the sphere, the probability that a sample $\x$ falls in the disagreement region $C$ is exactly  $\theta/\pi$; see \Cref{fig:uniform-large-margin}.
Therefore, on expectation, out of the $n$ samples that
we draw from $\mathbb S_d$, $n \theta/\pi$ fall in $C$.
Since it is not hard to show that with
high probability $\Omega(n\theta)$ samples fall in $C$ (see \Cref{app:uniform});
for this sketch we assume that this is case with probability $1$.

We now show that conditionally on 
observing $m$ samples in the disagreement region $C$,  the maximum-margin has strong \emph{anti-concentration}.
Denote by $\mathbb S_d(C)$ the conditional distribution on the disagreement region $C$.
For any $\alpha \in [0, 1]$, it holds:
\[
\pr_{\x^{(1)},\ldots, \x^{(m)} \sim \mathbb S_d(C)} 
\left[\max_{i=1,\ldots,m} |\vec u \cdot \x^{(i)}| \leq 
\alpha\sin(\theta/2)
\right] \leq \exp\left(-m ~  (1-\alpha^2)^{d/2-1}/2\right)  \,.
\]
To simplify notation, set $\Delta = \alpha \sin (\theta/2)$.
We first compute the probability that a single sample in $C$ 
has $|\vec u \cdot \x| \geq \Delta$.
By the symmetry of the set $C$ and the uniform distribution on the sphere, 
it holds 
\(
\pr_{\x \sim \mathbb S_d}[ \vec x \in C, |\vec u \cdot \x| \geq \Delta]
=
2 
\pr_{\x \sim \mathbb S_d}[ \vec x \in C, \vec u \cdot \x \geq \Delta] 
=
2 
\pr_{\x \sim \mathbb S_d}[E_1]\, ,
\)
where $E_1 = \{\x : \vec x \in C, \vec u \cdot \x \geq \Delta\}$
($E_1$ corresponds to the upper shaded cell in \Cref{fig:uniform-large-margin}).
Assume without loss of generality that $\vec u = \vec e_2$
and $\vec v = - \sin \theta \vec e_1 + \cos \theta \vec e_2$.
Using polar coordinates $\x_1 = r \cos \phi$,
$\x_2 = r \sin \phi$, we have that (see \Cref{fig:uniform-large-margin} and \Cref{app:uniform}),
\(
E_1 = \{(r,\phi): \alpha \leq r \leq 1, r \sin \phi 
\geq \alpha \sin \theta, \phi \leq \theta\}
\).
The set $E_1$ has coupled constraints 
(i.e., constraints that depend on both $r, \phi$).
To avoid this, we show that the set 
$E_2 = \{(r, \phi): \alpha \leq r \leq 1, 
\theta/2 \leq \phi \leq \theta  \} $ that has decoupled constraints is a subset of $E_1$.  
The $2$-dimensional marginal of the uniform distribution on $\mathbb S_d$
has density $\frac{d-2}{2 \pi} (1-r^2)^{d/2-2} r$ (in polar coordinates).  
\begin{align*}
\pr_{\x \sim \mathbb S_d}[E_2]
=\frac{d-2}{2 \pi} \int_{\alpha}^1 \int_{\theta/2}^\theta 
(1-r^2)^{d/2-2} ~ r ~ d \phi d r
=
 \frac{\theta}{4 \pi} (1-\alpha^2)^{d/2-1}
 \,.
\end{align*}
Recall that by, the symmetry of $\mathbb S_d$, we directly obtain that
$\pr_{\x \sim \mathbb S_d}[C] = \theta /\pi$.
We conclude that the conditional probability 
$\pr_{\x \sim \mathbb S_d(C)}[ |\vec u\cdot \x| \geq \Delta]
\geq (1/2) (1-\alpha^2/d)^{d/2-1}$.
We can now bound by above the probability that the maximum-margin 
of $m$ independent samples from $\mathbb S_d(C)$ is small.
\begin{align*}
\pr_{\x_1,\ldots, \x_m \sim \mathbb S_d(C)} 
&\left[\max_{i=1,\ldots,m} |\vec u \cdot \x_i| \leq \Delta
\right] 
= (1- \pr_{\x \sim \mathbb S_d(C)}[|\vec u \cdot \x| \geq \Delta])^m
\\
&\leq \exp\left(-m \pr_{\x \sim \mathbb S_d(C)}[|\vec u \cdot \x| \geq \Delta]\right)
\leq  \exp\left(-m ~ (1-\alpha^2)^{d/2-1}/2\right) \,,
\end{align*}    
where, for the first inequality, we used the fact $e^{x} \geq 1 + x$.
Using $m = \Omega(n \theta)$ (since we know that roughly $n\theta$ examples land in the disagreement region $C$) and $\alpha = \Omega(\sqrt{\log m}/{\sqrt{d}})$, 
we obtain the result (for the first case of \Cref{pro:informal-unconditional-max-margin}). 
\end{proof}

We prove the following lemma, showing on each mistake
the margin-perceptron has good probability of significantly 
(super-linearly) decreasing $\tan \theta$.  We require that 
the current guess $\vec w$ is not exactly orthogonal with the target
$\vec w^\ast$ (notice the assumption $1/\cos \theta \leq O(\zeta)$).
We show that it is not hard to obtain such an initialization. 
\begin{lemma}[Stochastic Multiplicative Decay of $\tan \theta$]
\label{lem:update-no-buckets-main}
  Let $\vec w \in \R^d$ and let $\theta(\vec w^\ast,\vec w)=\theta \in [0, \pi/2)$ and assume that for some $\zeta>0$, 
  $1/\cos\theta \leq \zeta/(12 \pi)$.
  Let $C$ be the indicator of the disagreement region of $\vec w^\ast,\vec w$, i.e., $C = \1\{ (\vec w^\ast \cdot \x) 
(\vec w \cdot \x) \leq 0\}$. Let $ X=\{\x^{(1)},\ldots,\x^{(n)}\}$ be a sample set drawn from $\mathbb{S}_d$ with $n$ larger than a sufficiently large constant and let $\widehat{\x}=\argmax_{\x\in X}  |\vec w \cdot \x| \1\{ \x \in C\}$. Let $\vec w' =\vec w -
(\vec w\cdot \widehat{\x})\widehat{\x}$ and 
$\theta'=\theta(\vec w',\vec w^\ast)$. 
\begin{enumerate}
\item  We have that $\tan^2(\theta')\leq \tan^2(\theta)$. (Monotonicity)
\item  With probability at least $2/3$, it holds 
\(
\tan(\theta')\leq \tan^{1-1/(8d)}(\theta) ({\zeta}/{n})^{1/(8d)}\;.
\)
\end{enumerate}
\end{lemma}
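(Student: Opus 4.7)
I would prove the two parts separately: the (deterministic) monotonicity statement by direct manipulation of the margin-perceptron update, and the (stochastic) super-linear decay by combining Proposition~\ref{pro:informal-unconditional-max-margin} with the deterministic geometric bound of Equation~\eqref{eq:tantheta-decay}.

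For Part~1, the key observation is that $\widehat{\x}$ is a mistake, so $(\vec w\cdot\widehat{\x})(\vec w^*\cdot\widehat{\x})\leq 0$. A direct expansion gives $\vec w'\cdot\vec w^* = \vec w\cdot\vec w^* - (\vec w\cdot\widehat{\x})(\widehat{\x}\cdot\vec w^*)\geq\vec w\cdot\vec w^*$ and $\|\vec w'\|^2 = \|\vec w\|^2 - (\vec w\cdot\widehat{\x})^2 \leq \|\vec w\|^2$, so the correlation with $\vec w^*$ can only increase while the norm can only decrease. Since $\cos\theta > 0$ by the hypothesis $1/\cos\theta \leq \zeta/(12\pi)$, we conclude $\cos\theta' = (\vec w'\cdot\vec w^*)/\|\vec w'\| \geq \cos\theta$, and monotonicity of $\tan$ on $[0,\pi/2)$ yields $\tan\theta'\leq\tan\theta$.

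For Part~2, I would apply Proposition~\ref{pro:informal-unconditional-max-margin} with $\vec u = \vec w/\|\vec w\|$ and $\vec v = \vec w^*$: with probability at least $2/3$, the max-margin mistake $\widehat{\x}$ satisfies $|\vec w\cdot\widehat{\x}|/\|\vec w\|\geq r\sin\theta$, with $r = \Omega(\sqrt{\log(n\theta)/d})$ when $n\theta \leq e^d$ and $r = 1 - (1/(n\theta))^{2/d}$ otherwise. Substituting into Equation~\eqref{eq:tantheta-decay} gives $\tan\theta' \leq \sqrt{1-r^2}\,\tan\theta$. The final step is log-algebra: in Case~1, $\sqrt{1-r^2}\leq \exp(-\Omega(\log(n\theta)/d)) = (n\theta)^{-\Omega(1/d)}$, and in Case~2, $\sqrt{1-r^2}\leq \sqrt{2}\,(n\theta)^{-1/d}$. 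Reaching the target form $\tan\theta' \leq \tan^{1-1/(8d)}(\theta)(\zeta/n)^{1/(8d)}$ then amounts, after taking logarithms and rearranging, to verifying $\log(n\tan\theta/\zeta) = O(\log(n\theta))$ with constants compatible with the $1/(8d)$ slack.

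\textbf{Main obstacle.} The delicate part is precisely this final log-algebra, where the hypothesis $1/\cos\theta \leq \zeta/(12\pi)$ plays its essential role: it ensures $\tan\theta \leq (\zeta/(12\pi))\,\theta$, so that $n\tan\theta/\zeta \leq n\theta/(12\pi)$ and the two logarithms differ only by an additive constant that is absorbed by the $1/(8d)$ exponent. A secondary concern is unifying the two regimes ($n\theta \leq e^d$ versus $n\theta > e^d$) so that the same conclusion holds in both; since the second regime actually gives a faster rate, this reduces to choosing the constant inside the $\Omega(\cdot)$ of Proposition~\ref{pro:informal-unconditional-max-margin} large enough that Case~1 already meets the $1/(8d)$ exponent, after which Case~2 follows automatically.
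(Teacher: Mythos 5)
Your proposal is correct and matches the paper's argument: the same deterministic computation (correlation with $\vec w^\ast$ non-decreasing, norm shrinking by $(1-r^2\sin^2\theta)$, hence $\tan^2\theta' \leq (1-r^2)\tan^2\theta$), the same use of the max-margin anti-concentration in the two regimes, and the same final use of $1/\cos\theta \leq \zeta/(12\pi)$ to pass from $(12\pi/(n\theta))^{1/(8d)}$ to $(\zeta/(n\tan\theta))^{1/(8d)}$ --- indeed the inequality $n\tan\theta/\zeta \leq n\theta/(12\pi)$ you isolate is exactly the one the paper verifies, with no exponent slack needed. The only caveat is that to pin down the precise $1/(8d)$ you must invoke the formal quantitative version of the max-margin statement (\Cref{cor:unconditional-max-margin}) rather than the informal $\Omega(\cdot)$ form, which is what the paper does.
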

\begin{proof}{(sketch)}
First, we assume that we perform an update on an example where
    $(\x\cdot \vec w) (\x\cdot \vec w^\ast) < 0$ (i.e., $\vec w$ makes a mistake on $\x$) and the margin of $\x$ is large: $|\vec w \cdot \x|\geq r\sin\theta$. Assuming that $\theta \in [0, \pi/2)$ we can show that $\tan \theta$ decreases multiplicatively.
We first observe that the correlation with $\vec w^\ast$ does not decrease,
\(
            \w'\cdot\wstar=(\w-(\w \cdot\x)\x)\cdot \wstar =\w\cdot\wstar -(\w
    \cdot\x)(\x\cdot \wstar )\geq \w\cdot\wstar
    \), where we used that the hypothesis $\vec w$ disagrees with the ground-truth
      $\vec w^\ast$ on $\x$, i.e., $(\w \cdot\x)(\x\cdot \vec w^\ast)\leq 0$. 
    Furthermore, since $\vec w\cdot \vec w^\ast \geq 0$ (by the assumption that 
    $\theta \in [0,\pi/2)$) we also have that 
    $(\vec w'\cdot \wstar)^2 \geq (\vec w\cdot \wstar)^2$.
We next show that the norm of $\w'$ decreases multiplicatively.
    We have that
\(
       \|\w'\|_2^2= \|(\w-(\w \cdot\x)\x)\|_2^2=\|\w\|_2^2-(\w \cdot\x)^2
        \leq \|\w\|_2^2(1-r^2\sin^2\theta)
        \,.
        \)
Using (twice) the trigonometric identity $\tan^2\phi =(1/\cos^2\phi)-1$,  
    we show that $\tan^2\theta$ decreases by a factor of $(1-r^2)$:
    \begin{align*}
        \tan^2\theta'&=\frac{\|\vec w'\|_2^2}{(\vec w'\cdot \wstar)^2}-1
\leq  \frac{\|\vec w\|_2^2(1-r^2\sin^2\theta)}{(\vec w\cdot \wstar)^2}-1
         =\frac{1-r^2\sin^2\theta}{\cos^2\theta}-1= (1-r^2) \tan^2\theta\;.
    \end{align*}
We note that from the above derivation, we have that whenever we use the update rule on mistakes, it holds that $\tan(\theta')\leq \tan(\theta)$. We show that with constant probability, the decrease is significantly larger.
For this sketch, we assume that $n \theta$ is not exponentially large 
and refer to \Cref{app:uniform} for details. 
From \Cref{pro:informal-unconditional-max-margin}, we have that with probability at least $2/3$, 
it holds $|\vec w\cdot \widehat{\x}|\geq \Omega(\sqrt{\log ( n \theta)/d}) \sin\theta$. 
Simplifying the expression for $\tan \theta'$ similarly to \Cref{eq:super-linear-angle-decay}, we obtain the result.
\end{proof}
We now show that given a non-increasing stochastic process $\xi_t$ that has good probability to decrease at a super-linear rate, then after $T = O(\log \log(1/\alpha))$ iterations it holds that $\xi_T \leq \alpha$.

\begin{lemma}[Super-Linear Convergence]
\label{lem:super-linear-convergence-main}
Fix $\kappa, \rho \in (0,1)$.
Consider a stochastic process $\xi_{t}$ adapted to 
a filtration $\mathcal F_t$ that satisfies:
(i) $0\leq \xi_0 \leq M$ (Bounded Initialization);
~~ (ii)  for all $t$: $0\leq \xi_{t+1} \leq \xi_{t}$ (Monotonicity);
~~  (iii) for all $t$:
\(
\pr[\xi_{t+1} \leq \xi_{t}^{(1-\rho)} \kappa^{\rho} \mid \mathcal F_t] \geq 2/3
\,
\) (Super-Linear Decay).
Then, for any $T$ larger than $ (3/2) ~
( (1/\rho) \max(\log \log(1/\kappa), \log \log (M+1))  + \log(e/\delta) ) $,  
with probability at least $1-\delta$, it holds that 
$\xi_{T} \leq e^2 \kappa $.
\end{lemma}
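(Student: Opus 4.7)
My plan is to decouple the stochastic decay into a deterministic recurrence that depends only on the number of ``good'' steps, and a concentration bound for that count. The first move is a normalization: define $\eta_t := \max(\xi_t/\kappa,\,1)\geq 1$. I will verify by a short case split that on every good step (the event $\xi_{t+1}\leq \xi_t^{1-\rho}\kappa^\rho$) one has $\eta_{t+1}\leq \eta_t^{1-\rho}$: when $\xi_t\geq \kappa$ this is just the super-linear bound divided by $\kappa$, and when $\xi_t<\kappa$ both sides equal $1$. Monotonicity of $\xi_t$ yields $\eta_{t+1}\leq \eta_t$ unconditionally.

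Iterating, and noting that bad steps can only further shrink $\eta$ (so the worst case is to put all good steps first), I obtain the deterministic aggregate
\[ \eta_T \;\leq\; \eta_0^{(1-\rho)^{G_T}}, \]
where $G_T$ is the number of good steps in $[1,T]$. To force $\eta_T\leq e^2$ (equivalently $\xi_T\leq e^2\kappa$), it suffices, taking two logarithms and using $1-\rho\leq e^{-\rho}$, to have $G_T\geq (1/\rho)\log(\tfrac12\log\eta_0)$. Since $\eta_0\leq \max(M/\kappa,1)$, I get $\log\eta_0\leq \log(M+1)+\log(1/\kappa)$ and hence $\log\log\eta_0\leq \log 2+\max(\log\log(M+1),\log\log(1/\kappa))$. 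Absorbing the resulting $O(1/\rho)$ additive slack, the deterministic requirement reduces to
\[ G_T \;\geq\; g^\ast \;:=\; (1/\rho)\,\max\bigl(\log\log(1/\kappa),\;\log\log(M+1)\bigr). \]

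For the concentration step, set $Y_t=\1\{\text{step }t\text{ is bad}\}$, so $\E[Y_t\mid\mathcal{F}_{t-1}]\leq 1/3$. The exponential moment bound $\E[e^{\lambda Y_t}\mid\mathcal{F}_{t-1}]\leq 1+(e^\lambda-1)/3\leq \exp((e^\lambda-1)/3)$ makes $\exp\bigl(\lambda\sum_{s\leq t}Y_s - t(e^\lambda-1)/3\bigr)$ a supermartingale, and a standard Chernoff argument with an appropriate $\lambda$ yields $\pr[G_T<g^\ast]\leq \delta$ whenever $T\geq (3/2)(g^\ast+\log(e/\delta))$. Conditioning on the event $G_T\geq g^\ast$, the deterministic bound above gives $\eta_T\leq e^2$, hence $\xi_T\leq e^2\kappa$, and this event has probability at least $1-\delta$.

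The conceptual content is in the normalization: turning the conditional inequality $\xi_{t+1}\leq \xi_t^{1-\rho}\kappa^\rho$ into the clean power law $\eta_{t+1}\leq \eta_t^{1-\rho}$ is what composes to a double-logarithmic rate. The main obstacle I expect is the concentration step: pinning down the Chernoff constants sharply enough to land the final threshold at $(3/2)(g^\ast+\log(e/\delta))$, rather than a larger multiple, requires a careful choice of $\lambda$ in the supermartingale bound; any reasonable choice still gives the right qualitative $O((1/\rho)\log\log(\cdots)+\log(1/\delta))$ scaling.
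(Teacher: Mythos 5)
Your overall strategy is the same as the paper's: reduce everything to counting the number of ``good'' (super-linear) steps $G_T$, show deterministically that $G_T\geq g^\ast$ forces $\xi_T\leq e^2\kappa$, and then control $G_T$ by a martingale concentration bound. Your deterministic half is correct and in fact a bit cleaner than the paper's: the normalization $\eta_t=\max(\xi_t/\kappa,1)$ turns the hypothesis into the clean power law $\eta_{t+1}\leq\eta_t^{1-\rho}$ on good steps, and your bookkeeping recovers exactly the threshold $g^\ast=(1/\rho)\max(\log\log(1/\kappa),\log\log(M+1))$ that the paper obtains by unrolling $\xi_T\leq\xi_0^{(1-\rho)^m}\kappa^{1-(1-\rho)^m}$ together with a separate Fact.

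The gap is the concentration claim: no choice of $\lambda$ in your supermartingale/Chernoff bound (indeed, no concentration argument at all) gives $\pr[G_T<g^\ast]\leq\delta$ for every $T\geq(3/2)\bigl(g^\ast+\log(e/\delta)\bigr)$. At the threshold $T=(3/2)(g^\ast+\log(e/\delta))$ the conditional mean of $G_T$ is only $g^\ast+\log(e/\delta)$, i.e., it exceeds the target $g^\ast$ by an additive $\log(e/\delta)$ that does not grow with $T$, whereas the fluctuations of $G_T$ are of order $\sqrt{T}$. Concretely, if each step is good independently with probability exactly $2/3$ (an admissible process), then $G_T\sim\mathrm{Bin}(T,2/3)$ and $\pr\bigl[G_T<\tfrac{2}{3}T-\log(e/\delta)\bigr]\to 1/2$ as $T\to\infty$, which is not $\leq\delta$; the slack between $g^\ast$ and the true number of good steps needed does not rescue this in general. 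You flagged this as the delicate point, but it is not a matter of tuning $\lambda$: the inequality you want is false at that constant. (For what it is worth, the paper's own proof contains the identical slip --- its Azuma step implicitly needs $\sqrt{2T\log(1/\delta)}\leq\log(e/\delta)$, which fails once $m^\ast$ is large.) Both your argument and the paper's are repaired by taking $T\geq C\,(g^\ast+\log(e/\delta))$ for a sufficiently large absolute constant $C>3/2$, so that the shortfall below the mean needed for failure is proportional to the mean and your multiplicative Chernoff/supermartingale bound applies (alternatively, add a term of order $\sqrt{T\log(1/\delta)}$ to the threshold); this change of constants is immaterial to the $O(d\log\log n\,\log(1/\delta))$ mistake bound that the lemma feeds into.
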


\subsubsection{Putting Everything Together: Proof of \Cref{thm:sd-learning-sphere}}
For this sketch we shall assume that we have an initialization
$\vec w$ such that $\theta(\vec w, \vec w^\ast)$ is sufficiently small. Let $T= c d\log\log n\log(1/\delta)$ for some sufficiently large absolute constant $c>0$. We split $X$ into $2k$ subsets $U_1,\ldots, U_{2k}$, with $k=n/(2T)$ and let $N=n/(2T)$ be the number of samples in each bucket. We assume that $N$ is greater than a sufficiently large constant; otherwise, $N\leq O(T)$ and the mistake bound would be at most $O(T)$. Note that each set $U_i$ is independent of all others.

Let $\vec w^{(0)}=\vec w$ and $\vec u^{(0)}=\vec w$. We analyze algorithm \Cref{alg:sd-margin-perceptron-sphere-main} for $\vec w^{(0)}$ (as the analysis of
$\vec v^{(0}$ is similar). Step 5 of \Cref{alg:sd-margin-perceptron-sphere-main} runs Margin-Perceptron in each set and goes to the next set when a mistake occurs. Let $\vec w^{(t)}$ be the current hypothesis and $\theta^{(t)}=\theta(\vec w^{(t)},\wstar)$.
From \Cref{lem:update-no-buckets-main}, conditioned on $\vec w^{(t)}$, we have that if a mistake occurred, then we construct a new vector $\vec w^{(t+1)}$ with $\theta^{(t+1)}=\theta(\vec w^{(t+1)},\wstar)$ so that $\tan\theta^{(t+1)}\leq \tan\theta^{(t)}$ and furthermore with probability at least $2/3$ we have that
\(
\tan(\theta^{(t+1)})\leq \tan^{1-1/(8d)}\theta^{(t)} ({C''}/{N})^{1/(8d)}\;,
\)
where $C''>0$ is an absolute constant.
Let $\xi_{t}=\tan\theta^{(t)}$. We have that $0\leq \xi_0= \tan\theta^{(0)}\leq 1$ and that $\xi_{t+1}\leq \xi_t$. Hence, we have that
\(
\pr[\xi_{t+1}\leq \xi_t^{1-(1/8d)} ({C''}/{N})^{1/(8d)}|\vec w^{(t)}]\geq 2/3\;.
\)
Therefore, using \Cref{lem:super-linear-convergence-main}, we get that $\theta_{T}\leq O(1/N)$, with probability at least $1-\delta/4$. 
Therefore, in Step 5a of \Cref{alg:sd-margin-perceptron-sphere-main}, the algorithm made at most $M_1=2T$ mistakes. Next, we bound the number of mistakes in Step 6a. Note that $A=\cup_{i=k}^{2k}U_i$, contains $\Omega(n)$ samples. From \Cref{lem:generalization}, we have that with probability at least $1-\delta/4$ conditioned on the event that $\theta^{(T)}\leq O(T/n)$, \Cref{alg:sd-margin-perceptron-sphere-main}, labels the points in $A$, with at most $O(T)$ mistakes. The same arguments show the same for the hypothesis $\vec u^{(T)}$. Therefore, the number of mistakes is at most $O(d\log\log( n))\log(1/\delta)$, with probability at least $1-\delta$. 
     
\section{Self-Directed Learning on Arbitrary Datasets}
\label{sec:forster}
In this section, we prove our result for self-directed classification
for arbitrary datasets. 
We first state the formal version of \Cref{inf-thm:sd-distribution-free}.

\begin{theorem}[Strong, Self-Directed Learner for Arbitrary Data]
\label{thm:strong-distribution-free-sd}
    Let $\mathcal{C}$ be the class of LTFs on $\R^d$ and let $X$ be a set of $n$ unlabeled points in $\R^d$. There exists a algorithm that runs in
    $\poly(d, n)$ time, makes 
    $\wt{O}(d^2 \log( d / (\epsilon \delta) ) )$ mistakes, and, with probability at least $1-\delta$, correctly classifies a $(1-\epsilon)$-fraction of the points of $X$.  
\end{theorem}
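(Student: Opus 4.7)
The plan is to combine three ingredients: the Forster transform, a margin-perceptron weak learner that exploits the resulting soft-margin condition, and a covering-type boosting reduction. I would first preprocess $X$ using an efficient Forster transform (\cite{AKS20,DTK22}) to obtain a dataset $Y \subset \mathbb{S}_d$ in (approximately) Radially Isotropic Position, i.e., $(1/n) \sum_{\vec y \in Y} \vec y \vec y^\top \approx (1/d) I_d$. Since the Forster map sends $\vec x \mapsto A \vec x / \|A \vec x\|$ for an invertible matrix $A$, $\sgn(\vec w^\ast \cdot \vec x)$ equals $\sgn((A^{-\top}\vec w^\ast) \cdot \vec y)$, so labels transfer under a known change of weight vector and it suffices to design a self-directed learner on RIP data. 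The key consequence of RIP is a soft-margin property: for every unit vector $\vec u$, the identity $\E_{\vec y \sim Y}[(\vec u \cdot \vec y)^2] = \Omega(1/d)$ together with $|\vec u \cdot \vec y| \leq 1$ yields via a Paley--Zygmund argument that at least an $\Omega(1/d)$-fraction of $Y$ satisfies $|\vec u \cdot \vec y| \geq \Omega(1/\sqrt d)$.

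Next, I would build a weak learner adapting \Cref{alg:sd-margin-perceptron-sphere-main}: split $Y$ into $k = \Theta(d \log d)$ equal-size random buckets; in bucket $t$, process points in decreasing order of margin $|\vec w^{(t)} \cdot \vec y|$ and, at the first mistake, apply the margin-perceptron update $\vec w^{(t+1)} \gets \vec w^{(t)} - (\vec w^{(t)} \cdot \vec y)\,\vec y$ and move on to the next bucket. Each bucket inherits approximate RIP with high probability, so the soft-margin property ensures that the max-margin mistake encountered has margin $r = \Omega(1/\sqrt d)$ as long as the disagreement region of $(\vec w^{(t)},\vec w^\ast)$ still has constant density in the bucket. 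By \Cref{lem:update-no-buckets-main}, each such update contracts $\tan^2 \theta^{(t)}$ by a factor $1 - \Omega(1/d)$, so after $O(d \log d)$ mistakes we drive $\tan \theta^{(t)}$ below $\Theta(1/\sqrt d)$. At that point any $\vec y$ with $|\vec w^{(t)} \cdot \vec y| \geq \Omega(1/\sqrt d)$ is necessarily labeled correctly (it is outside the angular band of width $\sin \theta^{(t)}$ around the true boundary), and the soft-margin property guarantees an $\Omega(1/d)$-fraction of the remaining points satisfies this. The weak learner commits the current hypothesis's prediction on those high-margin points and flags the rest as uncovered, producing $\wt O(d)$ mistakes in exchange for correctly labeling an $\Omega(1/d)$-fraction of the input.

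Finally, I would wrap the weak learner in a covering boosting loop: after each invocation, remove the points to which labels have been committed, re-apply the Forster transform to the residual set to restore RIP, and run the weak learner again. Since each round covers an $\Omega(1/d)$-fraction of the surviving points, $T = O(d \log(1/\epsilon))$ rounds leave at most an $\epsilon$-fraction of $X$ uncovered; each round costs $\wt O(d)$ mistakes with failure probability $\delta/T$, and union-bounding recovers the claimed $\wt O(d^2 \log(d/(\epsilon \delta)))$ total. The main obstacle I anticipate is the rigorous analysis of the weak learner under RIP: unlike the spherical case where rotational symmetry yields clean anti-concentration, here RIP is only a second-moment condition, so one must carefully verify that the max-margin mistake retains margin $\Omega(1/\sqrt d)$ throughout the $O(d \log d)$ updates, that random bucketing preserves (approximate) RIP within each bucket, and that re-Forstering the residual set between boosting rounds produces an instance whose approximation error can be absorbed into the subsequent weak-learner guarantees.
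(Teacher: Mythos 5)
Your overall architecture (Forster transform $\to$ soft-margin via RIP $\to$ margin-perceptron weak learner $\to$ remove-and-repeat boosting) matches the paper's, and your soft-margin derivation and boosting accounting are essentially \Cref{lem:soft-margin-distribution-free-main} and \Cref{lem:boosting}. However, the core of your weak-learner analysis has a genuine gap: you analyze it by tracking the angle $\theta^{(t)}=\theta(\vec w^{(t)},\vec w^\ast)$ and invoking \Cref{lem:update-no-buckets-main} to claim each max-margin mistake contracts $\tan^2\theta^{(t)}$ by $1-\Omega(1/d)$. That lemma is proved only for i.i.d.\ uniform samples on $\mathbb S_d$; its margin guarantee for the max-margin mistake comes from rotational symmetry and anti-concentration of the \emph{conditional} distribution on the disagreement region. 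Radial isotropy is only a second-moment condition on the \emph{whole} dataset with respect to a fixed direction: it guarantees that an $\Omega(1/d)$-fraction of all points has margin $\Omega(1/\sqrt d)$ from $\vec w^{(t)}$, but it says nothing about the margins of the points lying in the disagreement region of $(\vec w^{(t)},\vec w^\ast)$. For an adversarial dataset those points can all have margin arbitrarily close to zero (and the disagreement region can have arbitrarily small ``density''), so the per-mistake contraction by $1-\Omega(1/d)$, and hence the conclusion $\tan\theta^{(t)}\le O(1/\sqrt d)$ after $O(d\log d)$ mistakes, cannot be established. Your closing caveat (``verify that the max-margin mistake retains margin $\Omega(1/\sqrt d)$'') is not a technicality to be checked; it is false in general, so the angle-based route does not go through. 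Bucketing and re-Forstering do not repair this, since no distributional symmetry is ever restored.

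The paper's proof of \Cref{pro:weak-distribution-free-sd} avoids the angle entirely via a dichotomy plus a potential argument. Processing points in decreasing order of $|\vec w^{(t)}\cdot\x|$ (\Cref{alg:margin-perceptron-distribution-free-main}), either the first mistake occurs at margin below $1/(2\sqrt k)$ --- in which case every point of the high-margin set (an $\Omega(1/d)$-fraction, by \Cref{lem:soft-margin-distribution-free-main}) has already been predicted correctly and the weak learner terminates having labeled that fraction --- or the mistake has margin at least $1/(2\sqrt k)$, and \Cref{lem:margin-perceptron-main} (the Dunagan--Vempala potential: correlation with $\vec w^\ast$ never decreases, norm shrinks by $1-\Omega(1/d)$ per large-margin mistake, and the random initialization has correlation $\ge 1/(2\sqrt k)$ with constant probability) caps the number of such mistakes at $O(d\log d)$. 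This argument is deterministic over the data, needs no i.i.d.-ness, no bucket splitting, and no bound on $\theta^{(t)}$. Two smaller points: for arbitrary data a full-dimensional Forster transform need not exist, so you must use the subspace version of \Cref{pro:main-algorithmic-forster-transform} (work inside a subspace $V$ containing a $\dim(V)/d$-fraction of $X$) rather than assuming all of $X$ can be put in RIP; and since the weak learner only succeeds with constant probability, the boosting step must amplify this by repeated runs cut off at $M$ mistakes, as in \Cref{lem:boosting}, before the union bound over the $O(d\log(1/\epsilon))$ removal rounds.
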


\subsection{Roadmap of the Proof \Cref{thm:strong-distribution-free-sd}}
\label{sec:roadmap-distribution-free-sd}

\paragraph{Boosting a Weak Self-Directed Learner}
The main ingredient in the proof of \Cref{thm:strong-distribution-free-sd} is 
a \emph{weak-learner} that does $O(d \log d)$ mistakes and correctly labels roughly $\Omega(1/d)$-fraction of the dataset $X$ with non-trivial (say above $1\%$)
probability of success.  We show the following proposition.

\begin{proposition}[A Weak, Self-Directed Learner for Arbitrary Data]
\label{pro:weak-distribution-free-sd}
    Let $\mathcal{C}$ be the class of LTFs on $\R^d$ and let $X$ be a set of $n$ unlabeled points in $\R^d$. There exists a universal constant 
    $c$ and an algorithm that runs in $\poly(d, n)$ time, makes 
    $O(d \log d)$ mistakes, and, with probability at least $c$, correctly classifies an $\Omega(1/d)$-fraction of the points of $X$.  
\end{proposition}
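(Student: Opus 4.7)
\textbf{Proof plan for \Cref{pro:weak-distribution-free-sd}.}

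My plan is to first reduce to a dataset in Radially Isotropic Position (RIP) via the Forster transform, and then run a self-directed margin-perceptron routine analogous to \Cref{alg:sd-margin-perceptron-sphere-main}. I would compute an approximate Forster transform $A$ of $X$ using one of the efficient algorithms of \cite{HardtM13,AKS20,DTK22} and replace $X$ by $\tilde X = \{A\x/\|A\x\|_2 : \x \in X\}$. This map preserves halfspace labels, with target direction $\tilde{\vec w}^\ast \propto A^{-\top}\vec w^\ast$ (normalized to unit length), so it suffices to design a weak self-directed learner on $\tilde X \subset \mathbb S_d$ satisfying $\sum_{\x \in \tilde X} \x\x^\top \approx (n/d)\, I$.

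The key property of RIP I would exploit is a \emph{soft-margin} bound: for every unit vector $\vec u$, combining $\sum_{\x \in \tilde X}(\vec u\cdot \x)^2 = n/d$ with $(\vec u\cdot \x)^2 \leq 1$ gives, by a reverse-Markov argument, at least $n/(2d)$ points of $\tilde X$ with $|\vec u \cdot \x| \geq 1/\sqrt{2d}$. Applied to $\vec u = \vec w^{(t)}$, this says that in every iteration the top $n/(2d)$ points by margin with respect to the current hypothesis have margin at least $1/\sqrt{2d}$; applied to $\vec u = \tilde{\vec w}^\ast$, it produces a ``truly large-margin'' subset $S \subseteq \tilde X$ with $|S| \geq n/(2d)$, each of whose points is correctly classified by any hypothesis sufficiently close to $\tilde{\vec w}^\ast$.

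The algorithm initializes $\vec w^{(0)}$ uniformly on $\mathbb S_d$; with constant probability, $\vec w^{(0)} \cdot \tilde{\vec w}^\ast \geq \Omega(1/\sqrt d)$ and hence $\tan^2 \theta_0 \leq O(d)$, where $\theta_t = \theta(\vec w^{(t)}, \tilde{\vec w}^\ast)$. It then repeatedly sorts the remaining unlabeled points in decreasing order of $|\vec w^{(t)}\cdot \x|$, processes them in order while its predictions are correct, and on the first mistake performs the margin-perceptron update $\vec w^{(t+1)} \gets \vec w^{(t)} - (\vec w^{(t)}\cdot \x)\,\x$; it halts as soon as either $n/(2d)$ points have been labeled correctly or the mistake count reaches $T = O(d\log d)$. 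The crucial observation is that whenever strictly fewer than $n/(2d)$ points have been labeled correctly, any mistake lies among the top $n/(2d)$ points of the current sort and therefore has margin $\mu \geq 1/\sqrt{2d}$; since the invariant $\|\vec w^{(t)}\|_2 \leq 1$ is preserved by the margin-perceptron update, plugging $\mu \geq 1/\sqrt{2d}$ into the derivation from \Cref{lem:update-no-buckets-main} yields $\tan^2 \theta^{(t+1)} \leq (1 - 1/(2d))\tan^2 \theta^{(t)}$ whenever $\sin^2 \theta^{(t)} \geq 1/(2d)$. Iterating from $\tan^2 \theta_0 = O(d)$, after $T = O(d \log d)$ mistakes we reach $\tan^2 \theta^{(T)} \leq 1/(4d)$, equivalently $\sin \theta^{(T)} \leq 1/(2\sqrt{d})$.

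At this point every $\x$ in the disagreement region between $\vec w^{(T)}$ and $\tilde{\vec w}^\ast$ satisfies $|\vec w^{(T)}\cdot \x| \leq \|\vec w^{(T)}\|_2 \sin \theta^{(T)} < 1/\sqrt{2d}$, so all top $n/(2d)$ points by margin with respect to $\vec w^{(T)}$ lie outside the disagreement region and are labeled correctly in the next sweep, delivering the desired $\Omega(1/d)$-fraction with at most $T = O(d\log d)$ mistakes overall. The main obstacle is that the soft-margin lower bound $\mu \geq 1/\sqrt{2d}$ is just strong enough to drive a multiplicative contraction of $\tan^2 \theta$ by a factor of $1-1/(2d)$, and no stronger; this is precisely why the stopping threshold must be placed near $\sin \theta \approx 1/\sqrt{2d}$, and why one cannot push $\theta$ further using the same argument. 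Secondary bookkeeping issues -- the constant failure probability of the random initialization, absorbed into the allowed constant $c$ in the weak-learning guarantee, and the approximate nature of the Forster transform, which only perturbs constants in the soft-margin inequality -- are standard.
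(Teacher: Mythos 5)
Your proposal follows essentially the same route as the paper: Forster transform to put the data in approximate radially isotropic position, the reverse-Markov ``soft-margin'' bound (\Cref{lem:soft-margin-distribution-free-main}), a random initialization with correlation $\Omega(1/\sqrt{d})$, and a sort-by-margin sweep with the margin-perceptron update. Your potential function $\tan^2\theta^{(t)} = \|\vec w^{(t)}\|_2^2/(\vec w^{(t)}\cdot \vec w^\ast)^2 - 1$ is the same quantity the paper tracks implicitly via \Cref{lem:margin-perceptron-main} (non-decreasing correlation, multiplicatively shrinking norm), so the contraction $\tan^2\theta^{(t+1)} \leq (1-1/(2d))\tan^2\theta^{(t)}$ and the resulting $O(d\log d)$ bound are the same argument in different clothing; your explicit geometric termination criterion ($\sin\theta^{(T)} < 1/\sqrt{2d}$ forces the next sweep to clear the large-margin set) is equivalent to the paper's proof by contradiction.

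There is one concrete gap: you apply the Forster transform to all of $X$, but for an arbitrary dataset no such transform need exist --- e.g., if more than a $k/d$-fraction of the points lies in a $k$-dimensional subspace (in the extreme, all points are scalar multiples of one vector), no invertible $\vec A$ places $S_{\vec A}(X)$ in approximate radially isotropic position. This is exactly why \Cref{pro:algorithmic-forster-transform} returns a subspace $V$ together with the matrix: the guarantee is only that $X\cap V$ (which contains at least a $\dim(V)/d$-fraction of $X$) can be put in RIP, and the paper's algorithm runs entirely inside $V$, replacing $d$ by $k=\dim(V)$ throughout and checking at the end that $|U|/(4k) \geq n/(4d)$ so the labeled fraction of the \emph{original} $X$ is still $\Omega(1/d)$. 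Your argument needs this restriction to go through on degenerate inputs; once added, everything else you wrote survives with $d\to k$. A second, minor point: your inference ``the mistake lies among the top $n/(2d)$ points of the current sort, hence has margin $\geq 1/\sqrt{2d}$'' is not quite right once previously labeled points have been removed from the pool (some of the removed points may themselves have been large-margin); the clean statement is the contrapositive --- if the first mistake of a sweep has margin below $1/\sqrt{2d}$, then all remaining large-margin points were predicted correctly in that sweep, so together with the at most $n/(2d)$ already-labeled points the algorithm has reached its quota and halts. This is bookkeeping rather than a missing idea.
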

We give a generic boosting algorithm that allows one to obtain a strong learner
and prove \Cref{thm:strong-distribution-free-sd}. At a high-level
one can iteratively use the weak-learner to label a fraction of points, 
remove it from the dataset, and reuse the weak-learner on the remaining data. 
\begin{lemma}[Boosting]
\label{lem:boosting}
Let $\mathcal{A}$ be a distribution-free self-directed learner 
that makes $M$ mistakes and correctly labels a $(1-\alpha)$-fraction of $X$ 
for some fixed $\alpha \in (0,1)$, with probability at least $c\in(0,1)$.  Then, there exists a strong 
self-directed learner that makes $\wt{O}((M/c) ~ \log(1/(\delta \epsilon)) / \log(1/\alpha) )$ 
mistakes and labels $(1-\epsilon)$-fraction of $X$
with probability at least $1-\delta$.
\end{lemma}
We remark, that this ``label-then-remove'' approach crucially relies 
on the weak-learner being able to handle arbitrary datasets 
(as the distribution of the remaining data 
is no-longer the same as the one that generated the data initially).
We present the details of our boosting approach in \Cref{ssec:boosting}.

\paragraph{Weak Learning via Forster Transform and Margin Perceptron}

Similarly to our algorithm for spherical data, at a high-level, 
our algorithm relies on picking the ``easiest'' examples
first, i.e., picking the samples with the maximum possible margin from the current hypothesis.  We then use the margin-perceptron update 
as we did in the distribution specific setting, see \Cref{eq:margin-perceptron}.
However, as we observed in \Cref{sec:spherical-roadmap}, picking examples
that have good margin with the current hypothesis is crucial and 
since an arbitrary dataset $X$ is not guaranteed to have margin, 
the margin-perceptron update \emph{may make small or even zero progress}.
To overcome this issue we perform a pre-processing step to ensure that the resulting dataset has \emph{soft-margin} with respect to every
halfspace while at the same time remaining linearly separable.

We observe that given any dataset $X$ one can perform an (invertible) 
linear transformation $\vec A$ on the points of $X$ and obtain a dataset that 
is still linearly separable: assuming that the initial dataset 
is separable by $\vec w^\ast$ then 
for every $\x \in X$ we have 
$\vec w^{\ast} \cdot \x =  
(\vec A^{-1} \vec w^{\ast}) \cdot  (\vec A \vec x)  $
and therefore the vector $\vec A^{-1} \vec w^\ast$ corresponds
to the normal vector of a linear separator of the transformed
dataset.  Moreover, we can preserve linear separability by rescaling
each $\x$ to lie on the unit-sphere $\x \mapsto \x/\|\x\|_2$.
Forster transform combines the two transformations for some 
invertible matrix $\vec A$, i.e., $\x \mapsto \vec A \x / \|\vec A \x\|_2$
and transforms the dataset so that it is in (approximate) Radially Isotropic
Position.  There are several efficient algorithms (see, e.g., \cite{AKS20,DKT21}) to compute such an invertible matrix $\vec A$ and more recently in \cite{DTK22}
a strongly polynomial-time algorithm for computing Forster transforms was given,
see \Cref{pro:main-algorithmic-forster-transform}.
\begin{definition}[Radially Isotropic Position]
\label{def:rip}
Let $X$ be a multiset of $n$ non-zero points of $\R^d$.
We say that $X$ is in $\delta$-approximate Radially Isotropic Position if:
\vspace{-0.3em}
\begin{enumerate}
\itemsep0em
\item For every $\x \in X$, it holds $\|\x\|_2 = 1$. (Unit Norm)
\item For any unit vector $\vec u \in \R^d$, it holds 
$(1/|X|) \sum_{\x \in X} (\vec u \cdot \x)^2 \geq 1/d - \delta$.  (Isotropic Position)
\end{enumerate}
\vspace{-0.3em}
\end{definition}
Assuming that the dataset $X$ is in Radially Isotropic Position, one can show that $X$ has ``soft-margin'' with respect to every halfspace, in the sense
that for every unit vector $\vec w$ it holds that at least $\Omega(1/d)$-fraction
of $X$ has margin $|\vec w \cdot \x| \geq \Omega(1/\sqrt{d})$, see \Cref{lem:soft-margin-distribution-free-main}.  Now that we have this ``soft-margin'' we are able
to show that the margin-perceptron will correctly label a non-trivial ($\Omega(1/d)$-fraction) part of the dataset.  We refer to \Cref{ssec:proof-weak-distribution-free-sd} and \Cref{alg:margin-perceptron-distribution-free-main} 
for more details.

\begin{Ualgorithm}
	\centering
	\fbox{\parbox{5.8in}{
 {\bf Input:} An unlabeled dataset $X \subseteq \R^d$.\\
 {\bf Output:} A sequence of labeled data $(\x^{(t)}, z^{(t)})$.
\begin{enumerate}
     \setlength\itemsep{0.1em}
    \item Find subspace $V$ of dimension $k$ so that 
    $|X \cap V| \geq (k/d) ~ n$  and
    $X \cap V$ is in $1/(2d)$-approximate Radially Isotropic Position
    using \Cref{pro:algorithmic-forster-transform}. Set $U = X \cap V$.
     \item Randomly initialize guess $\vec w^{(0)} \sim \mathbb S_k$.
    \item For $t =0,\ldots, 5k \log k$:
    \begin{enumerate}
      \setlength\itemsep{0.1em}
        \item Obtain $U_{\vec w^{(t)}}$ by sorting the points of $U$ in decreasing order of margin from 
        $\vec w^{(t)}$, i.e., $|\vec x^{(i+1)} \cdot \vec w^{(t)}| \leq
        |\vec x^{(i)} \cdot \vec w^{(t)}|$.
        \item Initialize the set of correctly predicted points $C \gets \emptyset$.
        \item For $\x \in U_{\vec w^{(t)}}$:
        \begin{enumerate}
          \setlength\itemsep{0.1em}
            \item Predict the label of $\x$ with $\vec w^{(t)}$.
            \item If the prediction is incorrect, update 
            $\vec w^{(t+1)} \gets \vec w^{(t)} - (\vec w^{(t)} \cdot \x) ~ \x$, add 
            $(\x, -\sgn(\vec w^{(t)}\cdot \x) ) $ to $C$, and exit the inner loop.
            \item If the prediction is correct, add 
            $(\x, \sgn(\vec w^{(t)}\cdot \x) ) $ to $C$. 
        \end{enumerate}
        \item If $|C| \geq |U|/(4k)$ then return $C$ and exit the loop.
    \end{enumerate}
    
\end{enumerate}
    
    }}

    \vspace{0.1cm}
	\caption{A Weak Self-Directed Learner for an Arbitrary Dataset $X$.}
	\label{alg:margin-perceptron-distribution-free-main}
\end{Ualgorithm}

\subsection{Proof of \Cref{pro:weak-distribution-free-sd}}
\label{ssec:proof-weak-distribution-free-sd}
We shall use the strongly polynomial time algorithmic result to compute
a Forster transform (or show that one does not exist) given in the recent
work of \cite{DTK22}.
\begin{proposition}[Algorithmic Forster Transform, \cite{DTK22}]
\label{pro:main-algorithmic-forster-transform}
Given a set of non-zero points $X$, and an invertible matrix $\vec A \in \R^{d \times d}$, we denote by $S_{\vec A}(X) =\{ \vec A \vec x/\|\vec A \vec x\|_2 : \x \in X\}$.
There exists an algorithm, that given a set of points $X$ in 
$\mathbb Z^d \setminus \{\vec 0\} $ and some
    $\delta > 0$, runs in time $\poly(n, d, \log(1/\delta))$ and returns a subspace $V$ of $\R^d$
    containing at least a $\dim(V)/d$-fraction of the
    points $X$ and an invertible matrix $\vec A \in \R^{d \times d}$ such that $S_{\vec A}(X\cap V)$ is 
    in $\delta$-approximate radially isotropic position.
\end{proposition}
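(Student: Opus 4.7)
The plan is to cast the Forster-transform problem as a geodesically convex optimization and design an iterative rescaling algorithm that either produces a near-optimal transform for all of $X$ or exhibits a combinatorial witness subspace on which to recurse. I would start by defining the \emph{Forster potential}
\[
\Phi(\vec A) \;=\; -\tfrac{2}{d}\log |\det \vec A| \;+\; \tfrac{1}{n}\sum_{\x \in X}\log \|\vec A \vec x\|_2^2,
\]
which is invariant under positive scaling of $\vec A$ and whose stationarity condition $\nabla \Phi(\vec A)=\vec 0$ is exactly $\tfrac{1}{n}\sum_{\x \in X} \vec y(\x)\vec y(\x)^{\!\top} = \vec I/d$ for $\vec y(\x)=\vec A \vec x/\|\vec A \vec x\|_2$, i.e.\ $S_{\vec A}(X)$ in (exact) radially isotropic position. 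Parameterizing by $\vec M = \vec A^{\!\top}\vec A$ on the positive-definite cone with the affine-invariant geodesic structure, $\Phi$ is geodesically convex, so any approximate local minimizer yields an approximately isotropic transform.

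Next, I would analyze the natural scaling iteration: given $\vec A_t$, compute the second-moment matrix $\vec\Sigma_t = \tfrac{1}{n}\sum_{\x \in X} \vec y_t(\x)\vec y_t(\x)^{\!\top}$, and update $\vec A_{t+1} = (d\,\vec\Sigma_t)^{-1/2}\vec A_t$. The key lemma is a one-step potential decrease bound of the form $\Phi(\vec A_t) - \Phi(\vec A_{t+1}) \geq \Omega(\|\vec\Sigma_t - \vec I/d\|_F^2)$, proved by comparing $\log\det$ and trace terms through the AM/GM of the eigenvalues of $d\,\vec\Sigma_t$. Combined with an a priori upper bound on $\Phi(\vec A_0)-\inf \Phi$ (which holds whenever a transform exists), this gives $\poly(d,\log(1/\delta))$ iterations to reach $\|\vec\Sigma_t - \vec I/d\|_F \leq \delta$, i.e.\ $\delta$-approximate RIP.

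The third ingredient handles the case where no global transform exists. By the Hall-type characterization underlying Forster's theorem, $\inf\Phi > -\infty$ iff every proper subspace $W \subsetneq \R^d$ satisfies $|X \cap W| < (\dim(W)/d)\,n$. I would detect failure by monitoring the smallest eigenvalue of $\vec\Sigma_t$: if it drops below an explicit threshold derived from the integrality of $X$ (using a bit-complexity bound on the singular values that can arise from the iteration), its eigenspace gives a candidate obstruction $W$. Using a rounding step that exploits $X \subseteq \Z^d$, I would then extract an exact subspace $W$ with $|X\cap W| \geq (\dim(W)/d)\,n$ and recurse on $X \cap W$ inside $W$. The recursion returns $(V',\vec A')$ with $V' \subseteq W$, $|X \cap V'| \geq (\dim(V')/\dim(W))\cdot|X \cap W| \geq (\dim(V')/d)\,n$, and $\vec A'$ witnessing $\delta$-RIP of $S_{\vec A'}(X \cap V')$ inside $V'$; lifting $\vec A'$ by the identity on $V'^\perp$ produces the required $d\times d$ matrix.

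The hard part is establishing \emph{strongly polynomial} running time, i.e.\ no dependence on the bit length of $X$. Two issues must be controlled: first, the rounding step that converts a near-zero eigenvalue of $\vec\Sigma_t$ into an \emph{exact} integer subspace $W$ violating the Hall condition; this requires a separation bound showing that, on integer input, either $W$ is unambiguous or $\vec\Sigma_t$ is already isotropic within the prescribed tolerance. Second, amortizing the potential across recursion levels, since each of the $\leq d$ recursive calls lives in a strictly smaller ambient dimension and inherits an integral point set, yielding an overall iteration count of $\poly(n,d,\log(1/\delta))$. The remaining routine work is to verify that the returned $(V,\vec A)$ satisfies the two guarantees of the proposition and to check that all matrix inversions and square roots are carried out to sufficient (polylogarithmic in $1/\delta$) precision.
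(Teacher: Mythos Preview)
The paper does not prove this proposition: it is quoted verbatim as a black-box result from \cite{DTK22} (see the sentence preceding \Cref{pro:main-algorithmic-forster-transform} and the restatement as \Cref{pro:algorithmic-forster-transform} in the appendix). There is no ``paper's own proof'' to compare against; the authors simply invoke the algorithm and its guarantees.

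That said, your outline is a reasonable high-level sketch of the \cite{DTK22} argument itself: the Forster potential, the Sinkhorn-style rescaling iteration with a one-step potential drop, and the Hall-type obstruction subspace on which to recurse are exactly the ingredients that paper uses. The part you flag as hard --- extracting an \emph{exact} rational subspace from a numerically small eigenvalue and getting a bound that is strongly polynomial (independent of the bit length of $X$) --- is indeed the crux of \cite{DTK22}, and your sketch does not actually supply the mechanism for it; ``a separation bound showing that $W$ is unambiguous'' and ``amortizing the potential across recursion levels'' are placeholders for the main technical work rather than arguments. If you intend to reprove the cited result, you would need to fill in precisely those two steps; for the purposes of the present paper, however, no proof is required.
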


In the next lemma we show that a dataset in (approximate) Radially Isotropic
Position, satisfies a notion of ``soft-margin'' in the sense that non-trivial
part of the dataset has non-trivial margin with respect to every halfspace. Its proof can be found on \Cref{app:forster}.

\begin{lemma}[Soft-Margin via Radially Isotropic Position]
\label{lem:soft-margin-distribution-free-main}
Let $X$ be a multi-set of non-zero points 
in $1/(2d)$-approximate Radially Isotropic Position.  
For every unit vector $\vec u \in \R^d$, we have 
$\pr_{\x \sim X}[|\vec u \cdot \x| \geq {1}/{(2 \sqrt{d})}] \geq  {1}/{(4 d)}\,.
$
\end{lemma}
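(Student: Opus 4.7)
The plan is to use a simple one-line second-moment (truncation) argument. The definition of $1/(2d)$-approximate Radially Isotropic Position gives, for every unit $\vec u$, the lower bound
\[
\E_{\x \sim X}[(\vec u \cdot \x)^2] \;=\; \frac{1}{|X|}\sum_{\x \in X}(\vec u \cdot \x)^2 \;\geq\; \frac{1}{d} - \frac{1}{2d} \;=\; \frac{1}{2d}\,,
\]
while the unit-norm condition $\|\x\|_2=1$ gives the pointwise upper bound $(\vec u \cdot \x)^2 \leq 1$. The natural target event is $A \;=\; \{\x : |\vec u \cdot \x| \geq 1/(2\sqrt{d})\} \;=\; \{\x : (\vec u \cdot \x)^2 \geq 1/(4d)\}$, so I would like to lower bound $\Pr_{\x \sim X}[\x \in A]$ using only the first and second moment information above.

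First I would split the expectation over the partition $\{A, A^c\}$ and bound each piece by its respective pointwise maximum: the contribution from $A$ is at most $\Pr[\x \in A]$ (using $(\vec u \cdot \x)^2 \leq 1$), and the contribution from $A^c$ is at most $(1-\Pr[\x \in A])/(4d) \leq 1/(4d)$ (by the very definition of $A^c$). Combining with the lower bound on the second moment yields
\[
\frac{1}{2d} \;\leq\; \Pr_{\x \sim X}[\x \in A] \cdot 1 \;+\; \Pr_{\x \sim X}[\x \notin A] \cdot \frac{1}{4d} \;\leq\; \Pr_{\x \sim X}[\x \in A] + \frac{1}{4d}\,.
\]
Rearranging gives $\Pr_{\x \sim X}[\x \in A] \geq 1/(2d) - 1/(4d) = 1/(4d)$, which is exactly the conclusion of the lemma.

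There is no real obstacle here: the argument is a Paley--Zygmund-flavored truncation, and the only thing being used from Radially Isotropic Position is the pairing of (i) a uniform lower bound on the second moment in every direction with (ii) the hard norm cap $\|\x\|_2=1$. The constants are tuned so that the threshold $1/(4d)$ in the event $A$ sits comfortably below the mean $1/(2d)$, leaving a gap of $1/(4d)$, which becomes the final lower bound on the mass of $A$.
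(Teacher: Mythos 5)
Your proof is correct and is essentially the same as the paper's: the paper invokes the ``reverse Markov inequality'' $\pr[z \geq a] \geq \E[z] - a$ for $z = (\vec u \cdot \x)^2 \in [0,1]$, which is precisely the truncation/splitting step you carry out explicitly, applied with the same second-moment lower bound $1/(2d)$ and threshold $a = 1/(4d)$. No issues.
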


 Denote by $N= |U|$ the number of points that are returned in Step 1 of \Cref{alg:margin-perceptron-distribution-free-main}, and note that $N\geq nk/d$. From Lemma 3.2.4 \cite{Ver18}, we get that with probability larger than an absolute constant, the random initialitation gives a point $\vec w^{(0)}$, so that $\vec w^{(0)}\cdot \vec v\geq 1/(2\sqrt k)$. In what follows, we condition
 on the initialization satisfying this correlation bound.
We show that if \Cref{alg:margin-perceptron-distribution-free-main} terminates, then $1/(4d)$-fraction of points is correctly classified.  Note that \Cref{alg:margin-perceptron-distribution-free-main} terminates if the algorithm makes $5d \log d$ mistakes or when $|C| \geq |U|/(4k) \geq (k/d) n /(4k)
\geq n/(4d)$ (and therefore, the algorithm classifies at least 
$1/(4d)$-fraction of $X$ correctly.
Thus the bad event is that algorithm does $5d \log d$ mistakes and $|C| < |U| / (4k)$.  We argue that this cannot happen.  Let $n_i$ be the remaining points in the $i$-th iteration. Note that $N=n_i+|C|$. We make use of the following lemma (a variant of which was shown in \cite{DunaganV04}); its proof can  be found on \Cref{app:forster}.  It shows that
when we are using the margin-perceptron update,
not many mistakes with large margin can occur.
\begin{lemma}[Margin Perceptron \citep{DunaganV04} ] \label{lem:margin-perceptron-main}
    Let $\vec w^\ast, \vec w^{(0)}\in \R^d$ be unit vectors such that $\vec w^\ast \cdot \vec w^{(0)}\geq \alpha$, for some $\alpha>0$. Assume the following: $\vec w^{(t+1)}\gets \vec w^{(t)}-\x^{(t)}(\x^{(t)}\cdot \vec w^{(t)})$ and let $t_0\in \mathbb Z_+$, so that for all $t\in \Z_+$ with $t\leq t_0$, $|\x^{(t)}\cdot \vec w^{(t)}|\geq  \beta\|\vec w^{(t)}\|_2$ and $(\x^{(t)}\cdot \vec w^{(t)}) (\x^{(t)}\cdot \vec v)<0$. Then, $t_0\leq (2/\beta^2)\log(1/\alpha)$.
\end{lemma}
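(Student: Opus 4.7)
The plan is to track two quantities under the margin-perceptron update: the inner product $\vec w^{(t)} \cdot \vec w^\ast$ (which should not decrease) and the squared norm $\|\vec w^{(t)}\|_2^2$ (which should contract geometrically). Combining them via Cauchy-Schwarz will pin down an upper bound on $t_0$. I assume throughout that $\vec v = \vec w^\ast$ in the hypothesis (the statement appears to use them interchangeably) and that $\|\x^{(t)}\|_2 = 1$ (this is how the margin-perceptron update is used in \Cref{alg:margin-perceptron-distribution-free-main}, after the Forster transform).

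First, I would verify the non-decrease of correlation with the target. Expanding the update,
\[
\vec w^{(t+1)} \cdot \vec w^\ast \;=\; \vec w^{(t)} \cdot \vec w^\ast \;-\; (\vec w^{(t)} \cdot \x^{(t)})(\x^{(t)} \cdot \vec w^\ast),
\]
and the sign assumption $(\x^{(t)}\cdot \vec w^{(t)})(\x^{(t)}\cdot \vec w^\ast) < 0$ makes the subtracted term nonpositive. Iterating, $\vec w^{(t)} \cdot \vec w^\ast \geq \vec w^{(0)} \cdot \vec w^\ast \geq \alpha$ for all $t \leq t_0$.

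Second, I would exploit the large-margin assumption to contract the norm. Since $\|\x^{(t)}\|_2 = 1$,
\[
\|\vec w^{(t+1)}\|_2^2 \;=\; \|\vec w^{(t)}\|_2^2 \;-\; (\vec w^{(t)} \cdot \x^{(t)})^2 \;\leq\; \|\vec w^{(t)}\|_2^2 \bigl(1 - \beta^2\bigr),
\]
using $|\vec w^{(t)} \cdot \x^{(t)}| \geq \beta \|\vec w^{(t)}\|_2$. Iterating from $\|\vec w^{(0)}\|_2 = 1$ gives $\|\vec w^{(t_0)}\|_2^2 \leq (1-\beta^2)^{t_0}$.

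Finally, I would combine the two bounds through Cauchy-Schwarz: $\alpha \leq \vec w^{(t_0)} \cdot \vec w^\ast \leq \|\vec w^{(t_0)}\|_2 \leq (1-\beta^2)^{t_0/2}$. Taking logs and using $\log(1-\beta^2) \leq -\beta^2$ yields $t_0 \leq (2/\beta^2)\log(1/\alpha)$, as claimed. The calculations here are entirely routine; the only subtlety is making sure the hypotheses of the lemma actually hold when it is invoked in the analysis of \Cref{alg:margin-perceptron-distribution-free-main}, in particular that $\|\x\|_2 = 1$ on the transformed dataset (which holds by the Unit Norm property of Radially Isotropic Position) and that the initial correlation bound $\alpha = 1/(2\sqrt{k})$ is ensured by the random initialization step; I do not expect a real obstacle in the proof itself.
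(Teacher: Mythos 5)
Your proof is correct and follows essentially the same route as the paper's: show the correlation with the target never decreases (hence stays at least $\alpha$), show the squared norm contracts by a factor $(1-\beta^2)$ per mistake using the margin condition, and combine via Cauchy--Schwarz (the paper phrases this last step as the contradiction $\vec w^{(t)}\cdot\vec v/\|\vec w^{(t)}\|_2>1$, which is the same calculation). Your side remarks — that $\|\x^{(t)}\|_2=1$ is needed for the norm identity and is supplied by the radial isotropy, and that $\vec v$ should be read as $\vec w^\ast$ — correctly resolve the only ambiguities in the statement.
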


Assume that after $t_1=(5 d \log d -1)$ mistakes, $|C|<|U|/(4k)$. That means for all $t \leq t_1$ it holds 
$n_t=|U|-|C|\geq n(k/d-1/(4d))\geq |U|/2$, as $d\geq1$. Let $\mathcal S_t=\{\x^{(i)}: |\vec w^{(t)}\cdot\x^{(i)} |\geq 1/(2\sqrt{k})\}$. From \Cref{lem:soft-margin-distribution-free-main}, it holds that for each $t$, $|\mathcal S_t|\geq |U|/(4k)$ and combining with the fact that $n_t\geq |U|/2$, that means that either in each iteration, the algorithm makes no mistakes in the set $\mathcal S_t$, which means that $|C|\geq |U|/(4k)$ and the algorithm terminates, or that it makes one mistake in the set $\mathcal S_t$, which means that if $\x^{(t)}$ is the vector that $\vec w^{(t)}$ made a mistake then $|\vec w\tth \cdot \x\tth|\geq 1/(2\sqrt{k})$. Hence, conditional on the event that the algorithm did not terminate before the iteration $t_0$, by \Cref{lem:margin-perceptron-main} if $t_0\geq 5d\log d$, then $\vec w^{(t_0)}$ makes no mistakes in the set $\mathcal S_{t_0}$, so it classifies correctly $|U|/(4k)$ points, and the algorithm terminates.

  \bibliographystyle{alpha}
\bibliography{mydb}

\appendix
\newpage

\section*{Appendix}

\section{Preliminaries and Notation}\label{sec:prelims}
For $n \in \Z_+$, let $[n] \eqdef \{1, \ldots, n\}$.  We use small boldface characters for vectors
and capital bold characters for matrices.  For $\bx \in \R^d$ and $i \in [d]$, $\bx_i$ denotes the
$i$-th coordinate of $\bx$, and $\|\bx\|_2 \eqdef (\littlesum_{i=1}^d \bx_i^2)^{1/2}$ denotes the
$\ell_2$-norm of $\bx$.  We will use $\bx \cdot \by $ for the inner product of $\bx, \by \in \R^d$
and $ \theta(\bx, \by)$ for the angle between $\bx, \by$.  We slightly abuse notation and denote
$\vec e_i$ the $i$-th standard basis vector in $\R^d$.  We will use $\1_A$ to denote the
characteristic function of the set $A$, i.e., $\1_A(\x)= 1$ if $\x\in A$ and $\1_A(\x)= 0$ if
$\x\notin A$.
We use the standard $O(\cdot), \Theta(\cdot), \Omega(\cdot)$ asymptotic notation. We also use
$\wt{O}(\cdot)$ to omit poly-logarithmic factors. 
We use $\E_{x\sim \D}[x]$ for the expectation of the random variable $x$ according to the
distribution $\D$ and $\pr[\mathcal{E}]$ for the probability of event $\mathcal{E}$. To simplicity notation, we may omit the distribution when it is clear from the context.  For a set $X$ we use the $\x \sim X$ to denote sampling $\x$ 
uniformly at random from $X$.  For example, $\x \sim \mathbb S_d$ means that we sample
$\x$ uniformly at random from the $d$-dimensional unit sphere.

\section{Random-Order Learners Make $\Omega(d \log n)$ Mistakes}
In this section we show that random- and worst-order learners make
at least $\Omega(d \log n)$ mistakes.  This is true even for weak learning
(i.e., labeling only $1\%$ of the dataset) and even when the dataset $X$
is drawn i.i.d.\ from the unit sphere $\mathbb S_d$.  The proof
relies on a distribution specific (for $\mathbb S_d$) 
PAC learning lower-bound given in \cite{Long:95}.
\begin{proposition}[Mistake Lower Bound for Random-Order]
\label{pro:random-order-lower-bound}
Let $\x^{(1)}, \ldots, \x^{(n)}$ 
be a set of $n$ i.i.d.\ samples 
from $\mathbb S_d$ with ground-truth labels given 
by some halfspace with normal vector $\vec w^\ast$, i.e.,
the label of $\x^{(i)}$ is $\sgn(\vec w^\ast \cdot \x^{(i)})$.
Then any algorithm that predicts
the labels of $\x^{(1)},\ldots, \x^{(n)}$ in random order
makes at least $\Omega(d \log n)$ mistakes
in expectation.
Moreover, this is true even if the labeling algorithm predicts
labels for only $1\%$ of the samples $\x^{(1)},\ldots, \x^{(n)}$.
\end{proposition}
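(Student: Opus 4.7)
The plan is to combine Yao's minimax principle with the distribution-specific PAC lower bound of \cite{Long:95}, and to exploit its ``learning-curve'' form in order to accumulate a logarithmic factor across the sequence of $n$ predictions. By Yao's principle, it suffices to place a distribution on the target and lower-bound the expected mistakes of any deterministic learner; I will take $\wstar \sim \mathrm{Unif}(\mathbb{S}_d)$. Let $k$ denote the number of samples on which the algorithm actually predicts ($k = n$ in the main statement and $k \geq n/100$ in the weak-learning case).

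The core step is a per-prediction PAC argument. For $j \in [k]$, let $h_j$ be the hypothesis that the learner uses at its $j$-th prediction. Because labels are revealed only on predicted samples, $h_j$ depends only on the $j-1$ previously labeled examples and on the unlabeled pool $X$, which is independent of $\wstar$. A short exchangeability argument---using that $X$ is i.i.d.\ from $\mathrm{Unif}(\mathbb{S}_d)$ and the revelation order is a uniformly random permutation---shows that, for any (possibly adaptive) abstain strategy, the $j-1$ previously labeled points together with the $j$-th predicted point are jointly distributed as $j$ i.i.d.\ samples from $\mathrm{Unif}(\mathbb{S}_d)$. Hence the probability that the learner errs on its $j$-th prediction equals the expected $0$-$1$ error of $h_j$, with respect to $\wstar$, evaluated on a fresh $\mathrm{Unif}(\mathbb{S}_d)$-sample.

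Now I invoke \cite{Long:95}: any learner using $m$ i.i.d.\ $\mathrm{Unif}(\mathbb{S}_d)$-labeled samples must, for some target halfspace, produce a hypothesis whose error is at least $c\,d/m$ with probability at least $1/2$. Using the rotational invariance of the halfspace class and of $\mathrm{Unif}(\mathbb{S}_d)$, a random-rotation symmetrization promotes this to the Bayesian form
\[
\E_{\wstar,\,\text{samples}}\bigl[\mathrm{err}(h_j, \wstar)\bigr] \;\geq\; c'\,d/j,
\]
when $\wstar \sim \mathrm{Unif}(\mathbb{S}_d)$, for a universal $c'>0$. Summing over the $k$ predictions and using the harmonic series gives
\[
\E[M] \;=\; \sum_{j=1}^{k}\pr[\text{mistake at prediction }j] \;\geq\; c'\,d\sum_{j=1}^{k}\frac{1}{j} \;=\; \Omega(d\log k) \;=\; \Omega(d\log n),
\]
since $k \geq n/100$; by Yao's principle this yields the claimed worst-case lower bound.

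The main technical obstacle is promoting Long's worst-case, high-probability lower bound into an expectation lower bound under the uniform prior on $\wstar$. This is where the random-rotation symmetrization must be spelled out: given any learner $A$, one considers the symmetrized learner $\bar A(\text{samples}) = R^{-1}A(R\cdot \text{samples})$ for a Haar-random rotation $R$; by rotation-invariance of the error and of $\mathrm{Unif}(\mathbb{S}_d)^m$, the quantity $\E[\mathrm{err}(\bar A, \wstar)]$ is constant in $\wstar$ and equals the Bayesian expected error of $A$, after which Long's worst-case guarantee combined with Markov applied to its probability-$1/2$ statement delivers the bound $\Omega(d/m)$. A minor secondary point, relevant only to weak learning, is to verify that adaptive abstention does not bias the distribution of the next prediction point; this is handled cleanly by exchangeability, since abstention reveals no label and so the collection of $j$ samples examined through the $j$-th prediction is simply an unordered uniform $j$-subset of the i.i.d.\ pool $X$.
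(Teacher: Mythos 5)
Your proof follows the same skeleton as the paper's: lower-bound the probability of a mistake at the $j$-th prediction by the PAC error of a learner trained on the $j-1$ previously labeled uniform samples via \cite{Long:95}, and then sum the harmonic series (truncated at $0.01\,n$ for the weak-learning clause). The one methodological difference is how Long's bound is imported: you assume only a worst-case, constant-probability form and recover the needed average-case bound through Yao's principle plus a Haar-rotation symmetrization. That detour is correct as you sketch it (the symmetrized learner's risk at any fixed target equals the Bayes risk of the original learner, and the worst-case bound applies to randomized learners), but it is unnecessary: the version of Long's result the paper invokes is already an expected-error lower bound of $c\,d/t$ holding for every fixed target $\vec w^\ast$, so one can sum directly with no prior and no symmetrization.

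The place where your write-up over-reaches is the claim that, for \emph{any possibly adaptive} abstention strategy, the $j-1$ labeled points together with the $j$-th predicted point are jointly i.i.d.\ uniform. If the decision to abstain may depend on the location of the arriving point (which it can, since the pool $X$ is known to the learner), the point actually predicted on is a selected, non-uniform sample, so the per-step bound $\Omega(d/j)$ breaks; in fact the $\Omega(d\log n)$ conclusion itself fails in that model: predict on the first $O(d)$ arrivals whatever they are, and thereafter predict only on arriving points on which the entire version space agrees (once the disagreement region has measure below $1/2$, at least half of the remaining pool qualifies), which yields $O(d)$ mistakes while labeling far more than $1\%$ of $X$. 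Hence the weak-learning clause has to be read, as the paper implicitly does, as predicting on the points as they arrive in random order but only on the first (or a random) $1\%$ of them, for which the truncated harmonic sum is still $\Omega(\log n)$; under that reading your exchangeability step is trivially valid and the rest of your argument matches the paper's proof.
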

\begin{proof}
We consider the time $t$ in the labeling algorithm, i.e.,
the algorithm has predicted (and therefore also observed 
the correct labels) of a random subset of $t$ examples.
Since all points $\x^{(1)}, \ldots, \x^{(n)}$ are drawn i.i.d.\
from the uniform distribution on the unit sphere we have 
that any random subset of $t$ points is also an i.i.d.\ sample
of uniformly random points on the sphere.  We are going to
show that any algorithm that has observed the labels of 
the random subset of size $t$, makes a mistake on the next
example (that is also a uniformly random sample on the
unit sphere) with probability at least $\Omega(d/t)$.
Although this is generally given by standard VC bounds since
our distribution is uniform on the sphere, we require the following
result from \cite{Long:95}.
In what follows we shall denote by $\mathcal{A}(\x;S)$ the prediction of 
some generic learning algorithm $\mathcal{A}$ on an example $\x$ given
a labeled dataset $S$.  When the training dataset is clear from the context
we may also simply write $\mathcal A(\x)$.
\begin{lemma}[PAC Learning Halfspaces on the Unit Sphere \citep{Long:95}]
\label{lem:pac-learning-uniform-sphere}
Fix a ground-truth halfspace $f(\x) = \sgn(\vec w^\ast \cdot \x)$
for some weight vector $\vec w^\ast \in \R^d$.
Let $\x^{(1)}, \ldots, \x^{(t)}$ be a set of $t$ i.i.d.\ samples
drawn uniformly at random on the unit sphere.  The expected error of
any learning algorithm $\mathcal A$ that has observed 
$\x^{(1)},\ldots, \x^{(t)}$ (and their ground-truth labels)
is at least $\Omega(d/t)$ 
\[
\E_{\x^{(1)}, \ldots, \x^{(t)} \sim \mathbb S_d}
\left[
\pr_{\x \sim \mathbb S_d}[ \mathcal{A}
\Big(\x; (\x^{(1)}, f(\x^{(1)})),
\ldots, (\x^{(t)}, f(\x^{(t)}) ) \Big)
\neq f(\x)] \right]  \geq  c \frac{d}{t}
\,,
\]
where $c$ is some universal constant.
\end{lemma}
Using \Cref{lem:pac-learning-uniform-sphere}, we obtain that
after predicting the labels on $t$ examples, the expected
probability that any algorithm makes an incorrect prediction
on a fresh example is at least $2/3$.  
Given any prediction algorithm $\mathcal{A}$, we define the 
error of $\mathcal{A}$ to be the probability that $\mathcal{A}$
makes an incorrect prediction on a fresh sample from $\mathbb S_d$,
i.e., $\err(\mathcal{A}, \x) = \1\{\mathcal A(\x) \neq \sgn(\vec w^\ast \cdot \x)\}$.
To simplify notation, we shall denote by $S_t = \{(\x^{(1)}, f(\x^{(1)})),
\ldots, (\x^{(t)}, f(\x^{(t)}))\}$ a set of $t$ labeled examples
and by $\mathcal F_t$ the corresponding filtration (so that $S_t$ 
is adapted to $\mathcal F_t$).  
We have that
\[
\E\Big[\sum_{t=1}^n \err(\mathcal{A}(\cdot; S_t) , \x^{(t+1)})\Big] 
=
\sum_{t=1}^n \E[\err(\mathcal{A}(\cdot; S_t) , \x^{(t+1)})\Big] 
\geq \sum_{t=1}^n c \frac{d}{t} 
\geq c d \log n \,,
\]
where for the last inequality, we used
the fact that the harmonic number $\sum_{t=1}^n 1/t = \Omega(\log n) $.
Finally, we see that the same is true if we only label only $1\% n$ points
since  $\sum_{t=1}^{0.01 n} 1/t = \Omega(\log n)$.

\end{proof}

 \section{Self-Directed Learning on $\mathbb S_d$: Proof Details    }\label{app:uniform}  
 \begin{Ualgorithm}
	\centering
	\fbox{\parbox{5.7in}{
 {\bf Input:} An initialization $\w$. 
 {\bf Output:} A sequence of labeled data $(\x^{(t)}, z^{(t)})$.
\begin{enumerate}
  \setlength\itemsep{0.1em}
\item Initialize guesses $\vec w^{(0)} \gets \vec w$,
~~~
$\vec v^{(0)} \gets \vec w$.
\item Initialize the set of unlabeled data $U \gets X$, $t \gets 0$.
\item Split $U$ in $2 k$ sets $U_{1},\ldots, U_{2k}$.

\item For $t =1,\ldots, k$:
\smallskip
\\
$~~\vec w^{(t)} \gets$ \textsc{Margin-Perceptron}($U_t$, $\vec w^{(t-1)}$), 
$\vec v^{(t)} \gets$ \textsc{Margin-Perceptron}($U_{k + t}$, $\vec v^{(t-1)}$).
\item For $t =1,\ldots, k$:
\\
$~~$ Label points of $U_{k + t}$ with $\vec w^{(k)}$
    and  label points of $U_t$ with $\vec v^{(k)}$.
\end{enumerate}
\centering
	\fbox{\parbox{5.4in}{
\textsc{Margin-Perceptron}($U,\vec w$)\\
 {\bf Input:} An initialization $\w$ and a set of points $U$. 
 {\bf Output:} A vector $\vec w'$.
\begin{enumerate}
  \setlength\itemsep{0.1em}
\item Obtain $U'$ by sorting the points of $U$ in decreasing order of margin from 
        $\vec w$, i.e., $|\vec x^{(i+1)} \cdot \vec w| \leq
        |\vec x^{(i)} \cdot \vec w|$.
\item For $\x\in U'$:
\begin{enumerate}
  \setlength\itemsep{0.1em}
\item Predict the label of $\x$ with $\vec w$.
\item If the prediction is incorrect, 
exit the loop and return $\vec w' \gets \vec w-(\vec w\cdot \x)\x$.
\end{enumerate}
\end{enumerate}}}
    }}
    \vspace{0.1cm}
	\caption{Self-Directed Learning on $\mathbb S_d$}
	\label{alg:sd-margin-perceptron-sphere}
\end{Ualgorithm}

\subsection{Proof of \Cref{thm:sd-learning-sphere}}
We restate and prove \Cref{thm:sd-learning-sphere} in this section.
\begin{theorem}
Let $\delta \in (0,1/2]$ and let $n$ be larger than 
some sufficiently large universal constant.
Let $X$ be a set of $n$ i.i.d.\ samples from $\mathbb S_d$
with true labels given by a homogeneous halfspace,
$f(\x) = \sgn(\vec w^\ast \cdot \x)$.
There exists a self-directed classifier that makes $O(d \log \log n ~  \log(1/\delta))$ mistakes, runs in time $\poly(d,n)$ and classifies 
all points of $X$ with probability at least $1-\delta$.
\end{theorem}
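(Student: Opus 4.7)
The plan is to execute Algorithm \ref{alg:sd-margin-perceptron-sphere} with $k = \Theta(d \log \log n \cdot \log(1/\delta))$ buckets per half, and combine three ingredients already set up in the excerpt: the per-step super-linear decay of $\tan \theta(\vec w^{(t)}, \vec w^*)$ (Lemma \ref{lem:update-no-buckets-main}), the abstract convergence statement for super-linearly decaying stochastic processes (Lemma \ref{lem:super-linear-convergence-main}), and a generalization lemma (Lemma \ref{lem:generalization}) that translates a small angle between the learned hypothesis and $\vec w^*$ into few mistakes on a fresh i.i.d.\ sample on $\mathbb S_d$.

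Before invoking any of this machinery, I first need an initial guess $\vec w$ with $\cos \theta(\vec w, \vec w^*)$ bounded away from zero, as Lemma \ref{lem:update-no-buckets-main} requires. My plan is to label a single point $\x \in X$, observe $y = \sgn(\vec w^* \cdot \x)$, and set $\vec w = y \x$; by spherical symmetry $|\vec w \cdot \vec w^*|$ has standard anticoncentration, so with probability at least a constant we get $1/\cos \theta = O(\sqrt d)$, i.e.\ $\tan \theta^{(0)} = O(\sqrt d)$. Repeating this $O(\log(1/\delta))$ times and keeping any candidate that survives a short validation absorbs at most $O(\log(1/\delta))$ mistakes into the final bound.

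Next, I set $T = C d \log \log n \log(1/\delta)$ for a large absolute constant $C$, partition $X$ uniformly at random into $2k = 2T$ buckets $U_1, \ldots, U_{2k}$ of size $N = n/(2k)$, and run the training phase of Algorithm \ref{alg:sd-margin-perceptron-sphere}: one pass of \textsc{Margin-Perceptron} per bucket. Since the samples are i.i.d.\ on $\mathbb S_d$, each bucket is itself an i.i.d.\ sample and is independent of all others, so $\vec w^{(t)}$ is independent of $U_{t+1}$. This independence is exactly what lets Lemma \ref{lem:update-no-buckets-main} yield, conditional on the filtration $\mathcal F_t$, that $\xi_t \coloneqq \tan \theta(\vec w^{(t)}, \vec w^*)$ is monotone non-increasing and satisfies $\Pr[\xi_{t+1} \leq \xi_t^{1-1/(8d)} (C''/N)^{1/(8d)} \mid \mathcal F_t] \geq 2/3$. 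Lemma \ref{lem:super-linear-convergence-main} applied with $\rho = 1/(8d)$, $\kappa = C''/N$, and $M = O(\sqrt d)$ then implies that after $k$ iterations we have $\xi_k = O(1/N)$, i.e.\ $\theta(\vec w^{(k)}, \vec w^*) = O(T/n)$, with probability at least $1 - \delta/4$; in this phase at most one mistake per bucket occurs, so at most $T$ mistakes total. The symmetric argument on the second $k$ buckets produces $\vec v^{(k)}$ with $\theta(\vec v^{(k)}, \vec w^*) = O(T/n)$ at the cost of another $T$ mistakes and $\delta/4$ failure probability.

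Finally, in the labeling phase $\vec w^{(k)}$ is used on $U_{k+1} \cup \cdots \cup U_{2k}$ and $\vec v^{(k)}$ on $U_1 \cup \cdots \cup U_k$. The key point is that each classifier is independent of the half it is deployed on, and each half is $\Theta(n)$ i.i.d.\ uniform samples on $\mathbb S_d$, so Lemma \ref{lem:generalization} combined with the angle bound $\theta = O(T/n)$ yields at most $O(T)$ mistakes per half with probability at least $1 - \delta/4$. A union bound over the four failure events completes the proof, giving $O(d \log \log n \cdot \log(1/\delta))$ mistakes in total; the polynomial runtime is immediate from the algorithm. The main obstacle in making the plan rigorous is the probabilistic bookkeeping: every invocation of Lemma \ref{lem:update-no-buckets-main} (during training) and Lemma \ref{lem:generalization} (during labeling) must act on data that is independent of the hypothesis being tested, which is exactly why both the bucketing into $2k$ disjoint sets for training and the two-half split for labeling are built into Algorithm \ref{alg:sd-margin-perceptron-sphere}.
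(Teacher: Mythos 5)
Your overall architecture---random bucketing to preserve independence, one margin-perceptron update per bucket, \Cref{lem:update-no-buckets-main} combined with \Cref{lem:super-linear-convergence-main} to drive $\tan\theta$ down to $O(1/N)$, and cross-labeling each half of $X$ with the classifier trained on the other half via \Cref{lem:generalization}---is exactly the paper's. The divergence, and the gap, is in the initialization. The paper initializes with an online learner for halfspaces on the uniform sphere (\Cref{lem:initialization}), which after $O(d\log(1/\delta))$ mistakes returns $\vec w$ with $\cos\theta(\vec w,\vec w^\ast)\geq 1/2$ with probability $1-\delta/2$. Your single-point initialization $\vec w = y\x$ gives only $\cos\theta(\vec w,\vec w^\ast)=\Omega(1/\sqrt d)$, and only with constant probability; the amplification you propose---repeat $O(\log(1/\delta))$ times and keep ``any candidate that survives a short validation'' at a total cost of $O(\log(1/\delta))$ mistakes---does not work as stated. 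Certifying $\cos\theta=\Omega(1/\sqrt d)$ means distinguishing disagreement rate $1/2$ from $1/2-\Omega(1/\sqrt d)$, which requires $\Omega(d)$ labeled validation points per candidate (on roughly half of which you err), i.e., $\Omega(d\log(1/\delta))$ mistakes rather than $O(\log(1/\delta))$. That cost still fits the final budget, but the step must be made precise and accounted for.

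Second, even granting an initialization with $1/\cos\theta^{(0)}=O(\sqrt d)$, \Cref{lem:update-no-buckets-main} then forces the parameter $\zeta$ to be $\Theta(\sqrt d)$, so the per-round decay is toward $\kappa = C''\sqrt d/N$, not $C''/N$ as you write. Applying \Cref{lem:super-linear-convergence-main} with the correct $\kappa$ yields only $\tan\theta^{(k)} = O(\sqrt d/N) = O(\sqrt d\, T/n)$, and \Cref{lem:generalization} then charges $O(\sqrt d\, T)=O(d^{3/2}\log\log n\,\log(1/\delta))$ mistakes to the labeling phase---a factor $\sqrt d$ over the claimed bound. This is repairable: once $\tan\theta$ drops below $1$ the hypothesis satisfies $\cos\theta\geq 1/\sqrt 2$, so you may restart the convergence argument with $\zeta=O(1)$ for another $O(d\log\log n+\log(1/\delta))$ rounds. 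But as written, your invocation of the convergence lemma with $\kappa=C''/N$ and $M=O(\sqrt d)$ is not licensed by \Cref{lem:update-no-buckets-main}. The paper sidesteps both issues by paying $O(d\log(1/\delta))$ mistakes up front for a constant-angle, high-probability initialization; it also disposes separately of the regime where $n/(2T)$ is below a constant, which your write-up should note as well.
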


We first show our anti-concentration result for the maximum-margin 
of the conditional distribution on the disagreement region $C$.
We believe that our tight anti-concentration bound is of independent
interest and may find other applications in convex geometry and 
learning linear classifiers.

\begin{proposition}
\label{pro:conditional-max-margin}
Let $C$ be the indicator of the disagreement region of two homogeneous
halfspaces, i.e., $C = \1\{ (\vec v \cdot \x) 
(\vec u \cdot \x) \leq 0\}$ for some unit vectors
$\vec v, \vec u \in \R^d$ with angle $\theta(\vec v, \vec u) = \theta$.
Let $\mathbb S_d$ denote the uniform distribution on the unit 
sphere and by $\mathbb S_d(C)$ the conditional
distribution on the disagreement region $C$.
\begin{enumerate}
\item 
For any $\alpha \in [0, 1]$, it holds:
\[
\pr_{\x^{(1)},\ldots, \x^{(m)} \sim \mathbb S_d(C)} 
\left[\max_{i=1,\ldots,m} |\vec u \cdot \x^{(i)}| \leq 
\alpha\sin(\theta/2)
\right] \leq \exp\left(-m ~  (1-\alpha^2)^{d/2-1}/2\right)  \,.
\]
\item 
For any $\beta \in [0, 1]$, it holds:
\[
\pr_{\x^{(1)},\ldots, \x^{(m)} \sim \mathbb S_d(C)} 
\left[\max_{i=1,\ldots,m} |\vec u \cdot \x^{(i)}| \leq 
(1-\beta) \sin(\theta)
\right] \leq 
\exp\Big(-m ~ (\beta/2)^{d/2}/2 \Big)
\,.
\]
\end{enumerate}
\end{proposition}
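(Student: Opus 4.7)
The plan is to reduce both parts to a lower bound on the single-sample conditional probability $\pr_{\x \sim \mathbb S_d(C)}[|\vec u \cdot \x| \geq \Delta]$ for the appropriate threshold $\Delta$; once that is established, independence of the $m$ samples together with $(1-p)^m \leq e^{-mp}$ closes the argument. Everything is therefore concentrated in a two-dimensional polar calculation that is common to both parts.

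First I would fix coordinates, taking $\vec u = \vec e_2$ and $\vec v = -\sin\theta\, \vec e_1 + \cos\theta\, \vec e_2$, and recall that the two-dimensional marginal of the uniform measure on $\mathbb S_d$ has density $\frac{d-2}{2\pi}(1-r^2)^{d/2-2}r$ in polar coordinates on $\{(r,\phi): 0 \leq r \leq 1,\ 0 \leq \phi < 2\pi\}$. Rotational invariance gives $\pr_{\x \sim \mathbb S_d}[\x \in C] = \theta/\pi$, and since $C$ is invariant under $\x \mapsto -\x$ the event $\{\x \in C,\ |\vec u \cdot \x| \geq \Delta\}$ has exactly twice the mass of $E_1(\Delta) := \{\x \in C,\ \vec u \cdot \x \geq \Delta\}$. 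In polar form, $E_1(\Delta) = \{(r,\phi): 0 \leq \phi \leq \theta,\ r \sin\phi \geq \Delta\}$, whose defining inequality is coupled through $r\sin\phi$; the key step is to replace $E_1(\Delta)$ with a decoupled sub-rectangle $E_2$ so that the integral factorizes into a product of one-dimensional integrals.

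For Part 1 with $\Delta = \alpha \sin(\theta/2)$, I would take $E_2 = \{(r,\phi): \alpha \leq r \leq 1,\ \theta/2 \leq \phi \leq \theta\}$, which lies inside $E_1(\Delta)$ because $\sin\phi \geq \sin(\theta/2)$ throughout the $\phi$-range (after reducing to $\theta \leq \pi/2$ by the standard swap $\vec v \mapsto -\vec v$ if needed). A single substitution $u = 1-r^2$ gives $\pr[E_2] = \theta(1-\alpha^2)^{d/2-1}/(4\pi)$; multiplying by the symmetry factor $2$ and dividing by $\pr[C] = \theta/\pi$ yields the single-sample probability $(1-\alpha^2)^{d/2-1}/2$, which is precisely the exponent asserted by the first bound. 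For Part 2 with $\Delta = (1-\beta)\sin\theta$, I would take $E_2' = \{(r,\phi): \sqrt{1-\beta/2} \leq r \leq 1,\ \phi_0 \leq \phi \leq \theta\}$ with $\phi_0 := \arcsin(\sqrt{1-\beta/2}\,\sin\theta)$, so that $r \sin\phi \geq (1-\beta/2)\sin\theta \geq (1-\beta)\sin\theta$ on $E_2'$. The $r$-integral produces a factor of order $(\beta/2)^{d/2-1}/(d-2)$; the $\phi$-length is controlled by the mean value theorem for $\sin$, namely $\theta - \phi_0 \geq \sin\theta - \sqrt{1-\beta/2}\,\sin\theta = (1 - \sqrt{1-\beta/2})\sin\theta \geq (\beta/4)\sin\theta$, combined with $\sin\theta \geq 2\theta/\pi$ for $\theta \leq \pi/2$. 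Assembling these, together with the symmetry factor $2$ and the denominator $\pr[C] = \theta/\pi$, gives a single-sample bound of order $(\beta/2)^{d/2}/2$ that matches the second claim.

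The main obstacle I expect is calibrating the decoupled region $E_2'$ in Part 2 so that both the radial range and the angular range each contribute a factor of order $\sqrt{\beta}$; otherwise one loses the sharp $(\beta/2)^{d/2}$ exponent. The choice $r \geq \sqrt{1-\beta/2}$ is natural because it places $E_2'$ in the equatorial shell where the surface measure concentrates, producing the dominant $(\beta/2)^{d/2-1}$ factor, and $\phi_0 = \arcsin(\sqrt{1-\beta/2}\,\sin\theta)$ is the tightest cut that preserves the margin threshold under decoupling. Mild technical nuisances are the reduction to $\theta \leq \pi/2$ and the uniform control of $1-\sqrt{1-\beta/2}$ on $\beta \in [0,1]$, both of which are routine, so there is no genuinely hard step once the sub-rectangles are in hand.
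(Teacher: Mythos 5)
Your Part~1 is essentially the paper's own argument: the same choice of coordinates, the same two-dimensional marginal density $\tfrac{d-2}{2\pi}(1-r^2)^{d/2-2}r$, the same symmetry factor $2$ and normalization by $\pr[C]=\theta/\pi$, and the identical decoupled rectangle $E_2=\{\alpha\le r\le 1,\ \theta/2\le\phi\le\theta\}$ yielding the single-sample bound $(1-\alpha^2)^{d/2-1}/2$. For Part~2 you genuinely diverge: the paper keeps the coupled region $E_1$, integrates out $r$ exactly to get $\tfrac{1}{2\pi}\int_{\sin^{-1}(q)}^{\theta}(1-(q/\sin\phi)^2)^{d/2-1}\,d\phi$, and then restricts the angular range to $\phi\ge\sin^{-1}(s\sin\theta)$ with $s=(1+\alpha)/2$; you instead build a second decoupled rectangle $E_2'$ with radial cut $r\ge\sqrt{1-\beta/2}$ and angular cut $\phi\ge\arcsin(\sqrt{1-\beta/2}\sin\theta)$. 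Your version is arguably cleaner in that both parts go through the same sub-rectangle template, and your calibration of $E_2'$ (each cut contributing its share of the $(\beta/2)^{d/2}$ factor) is correct.

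Two points need repair. First, your chain $\theta-\phi_0\ge(1-\sqrt{1-\beta/2})\sin\theta\ge(\beta/4)\cdot(2\theta/\pi)$ loses a factor $2/\pi$: tracking the constants gives a single-sample probability of $(\beta/2)^{d/2}/\pi$, so you prove $\exp(-m(\beta/2)^{d/2}/\pi)$, which is strictly weaker than the stated $\exp(-m(\beta/2)^{d/2}/2)$. The fix is to bound the angular length as the paper does, via $\arcsin(sx)\le s\arcsin(x)$, giving $\theta-\phi_0\ge(1-\sqrt{1-\beta/2})\,\theta\ge(\beta/4)\theta$ directly and recovering the factor $1/2$. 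Second, the ``reduction to $\theta\le\pi/2$ by the swap $\vec v\mapsto-\vec v$'' is not valid: negating $\vec v$ replaces the disagreement region $C=\{(\vec v\cdot\x)(\vec u\cdot\x)\le 0\}$ by its complement (the agreement region) while changing $\theta$ to $\pi-\theta$, so both the conditioning event and the angle change and the statement for $\theta>\pi/2$ does not follow from the statement for $\theta\le\pi/2$. You need to argue the inclusion $E_2\subseteq E_1$ (and $E_2'\subseteq E_1$) for obtuse $\theta$ directly, or restrict the proposition to the range in which it is actually invoked ($\theta<\pi/2$ in \Cref{lem:update-no-buckets}); the paper itself is silently relying on the latter.
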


\begin{proof}
To simplify notation, set $\Delta = \alpha \sin (\theta)$.
We first compute the probability that a single sample in $C$ 
has $|\vec u \cdot \x| \geq \Delta$.
By the symmetry of the set $C$ and the uniform distribution on the sphere $\mathbb{S}_d$ 
\[
\pr_{\x \sim \mathbb S_d}[ \vec x \in C, |\vec u \cdot \x| \geq \Delta]
=
2 
\pr_{\x \sim \mathbb S_d}[ \vec x \in C, \vec u \cdot \x \geq \Delta] 
=
2 
\pr_{\x \sim \mathbb S_d}[E_1]\, .
\]
where $E_1 = \{\x : \vec x \in C, \vec u \cdot \x \geq \Delta\}$.
Assume, without loss of generality that $\vec u = \vec e_2$
and $\vec v = - \sin \theta \vec e_1 + \cos \theta \vec e_2$.
Observe that the set $E_1$ can now be written as
$E_1 =\{ (\x_1, \x_2): \x_2 \geq \Delta,  \cos \theta \x_2 \leq \sin \theta \x_1 \}$.  
Using polar
coordinates $\x_1 = r \cos \phi$,
$\x_2 = r \sin \phi$ we have that 
$E_1 = \{ (r, \phi): 0 \leq r \leq 1, r \sin \phi \geq \Delta,
r \cos\theta \sin \phi \leq r \sin \theta  \cos\phi\}=
\{ (r, \phi) : 0 \leq r \leq 1, r \sin \phi \geq \Delta,
\sin(\theta-\phi) \geq 0
\}$, where we used the trigonometric identity $
    \sin(\phi - \theta) = \sin \phi \cos \theta - \cos \phi \sin \theta
$ and the fact that $\sin(-z) = - \sin(z)$.  
Moreover, by $r \sin \phi \geq \Delta$ we obtain
that $\sin \phi \geq 0$ and therefore $\phi \in [0, \pi]$.
Combining this with the fact that the angle between two
halfspaces can be at most $\theta \leq \pi$, we obtain
that $\sin(\theta - \phi) \geq 0$ implies that 
$\phi \leq \theta$.
Finally, observe that the constraint 
$r \sin \phi \geq \Delta =
\alpha \sin \theta$ implies that the radius $r \geq \alpha$.
Therefore the set $E_1$ can be equivalently written as
\[
E_1 = \{(r,\phi): \alpha \leq r \leq 1, r \sin \phi 
\geq \alpha \sin \theta, \phi \leq \theta\}
\]
The set $E_1$ has coupled
constraints (i.e., constraints that depend on both $r, \phi$).
The set 
$E_2 = \{(r, \phi): \alpha \leq r \leq 1, 
\theta/2 \leq \phi \leq \theta  \} $ has decoupled constraints and is a subset of $E_1$.  To see that $E_2 \subseteq E_1$, notice that
for $(r, \phi) \in E_2$ it holds 
$r \sin \phi \geq \alpha \sin(\theta/2) = \Delta$
and $\theta - \phi \in [0, \pi/2]$ which implies that $
\sin(\theta - \phi) \geq 0$ and therefore, $(r, \phi) \in E_1$.
We can now directly estimate the probability of the set $E_1$.
The $2$-dimensional projection of the uniform on the sphere 
has density $\frac{d-2}{2 \pi} (1-r^2)^{d/2-2} r$ (in polar coordinates).  
\begin{align*}
\pr_{\x \sim \mathbb S_d}[E_2]
&=\frac{d-2}{2 \pi} \int_{\alpha}^1 \int_{\theta/2}^\theta 
(1-r^2)^{d/2-2} ~ r ~ d \phi d r
=
 \frac{\theta (d-2)}{4 \pi} 
 \int_{\alpha}^1 (1-r^2)^{d/2-2} ~  r
 ~ d r
 \\
 &=
 \frac{\theta}{4 \pi} (1-\alpha^2)^{d/2-1}
 \,.
\end{align*}

By the symmetry of $\mathbb S_d$ we directly obtain that
$\pr_{\x \sim \mathbb S_d}[C] = \theta /\pi$.
We conclude that the conditional probability 
$\pr_{\x \sim \mathbb S_d(C)}[ |\vec u\cdot \x| \geq \Delta]
\geq (1/2) (1-\alpha^2/d)^{d/2-1}$.
We can now bound above the probability that the maximum of 
$m$ independent samples from $\mathbb S_d(C)$ is small.
\begin{align*}
\pr_{\x_1,\ldots, \x_m \sim \mathbb S_d(C)} 
&\left[\max_{i=1,\ldots,m} |\vec u \cdot \x_i| \leq \Delta
\right] 
= (1- \pr_{\x \sim \mathbb S_d(C)}[|\vec u \cdot \x| \geq \Delta])^m
\\
&\leq \exp\left(-m \pr_{\x \sim \mathbb S_d(C)}[|\vec u \cdot \x| \geq \Delta]\right)
\leq  \exp\left(-m ~ (1-\alpha^2)^{d/2-1}/2\right) \,,
\end{align*}    
where, for the first inequality, we used the fact $e^{x} \geq 1 + x$.

We now prove the second inequality that allows us to achieve
correlation arbitrarily close to $\sin \theta$ albeit 
with worse success probability.
To keep the proof similar to the previous one, we shall use continue 
using the parameter $\alpha = 1-\beta$ and replace it with $\beta$
in the final expression for the probability.
Recall that the expression of the set $E_1$ in polar coordinates is
\[
E_1 = \{(r,\phi): \alpha \leq r \leq 1, r \sin \phi 
\geq \alpha \sin \theta, \phi \leq \theta\}\;.
\]
This time, we estimate directly the probability of $E_1$.
To simplify notation, set $q = \alpha \sin \theta$. 
We have:
\begin{align*}
\pr_{\x \sim \mathbb S_d}[E_1]
&=\frac{d-2}{2 \pi} \int_{\sin^{-1}(q)}^\theta 
\int_{\frac{q}{\sin \phi}}^1 
(1-r^2)^{d/2-2} ~ r ~ d r d \phi
=
\frac{1}{2 \pi}
 \int_{\sin^{-1}(q)}^\theta 
\Big(1 - \big(\frac{q}{\sin \phi} \big)^2\Big)^{d/2-1}
 ~ d \phi\;.
\end{align*}
Since the quantity inside the integral is positive, we can 
bound its value from below by slightly increasing the 
lower threshold
to $\sin^{-1}(s\sin \theta) $ 
for $s = (1+\alpha)/2$
(where 
we used that $\sin^{-1}(\cdot)$ is increasing.
We have 
\begin{align*}
\pr_{\x \sim \mathbb S_d}[E_1]
&\geq 
\frac{1}{2 \pi} \int_{\sin^{-1}( s \sin \theta) }^\theta 
\Big(1 - \big(\frac{q}{\sin \phi} \big)^2 \Big)^{d/2-1}
d \phi
\geq 
\frac{1}{2 \pi} \int_{\sin^{-1}( s \sin \theta) }^\theta 
\Big(1 - \big(\frac{\alpha}{s} \big)^2 \Big)^{d/2-1}
d \phi
\,.
\end{align*}
Finally, observe that since $s = (1+\alpha)/2$, it holds that 
$a/s \leq s$ and therefore:
\begin{align*}
\pr_{\x \sim \mathbb S_d}[E_1]
\geq 
\frac{1}{2 \pi} (1-s^2)^{d/2-1} ( \theta - \sin^{-1}(s \sin \theta))
\geq 
\frac{\theta}{2 \pi}\frac{1-\alpha}{2} (1-s^2)^{d/2-1} ( \theta - \sin^{-1}(s \sin \theta)) \,.
\end{align*}
where, for the last inequality, we used the inequality
$\sin^{-1}(\alpha x) \leq \alpha \sin^{-1}(x)$.  
Therefore, we have proved the bound 
\[
\pr_{\x \sim \mathbb S_d}[E_1]
\geq 
\frac{1}{2} s (1-s^2)^{d/2-1} 
=
\frac{1-\alpha}{4} \left(1-\left(\frac{1+\alpha}{2}\right)^2\right)^{d/2-1} 
\,.
\]
We can now switch back to using the parameter $\beta = 1-\alpha$
to obtain the bound
\[
\pr_{\x \sim \mathbb S_d}[E_1]
\geq
\frac{\beta}{4} (1-(1-\beta/2)^2)^{d/2-1} 
\geq (\beta/2)^{d/2} / 2\,,
\]
where we used the inequality $1 - (1-x)^2 \geq x$
for all $x \in [0,1]$.
The final steps to obtain the upper bound for the probability that the maximum is small are the same 
as those of the previous case.

\end{proof}

Using \Cref{pro:conditional-max-margin} we now show considering the original
$n$ i.i.d.\ samples from $\mathbb S_d$ the maximum-margin of those
that fall in the disagreemeent region is going to be significantly larger
than that of a random sample of $\mathbb S_d$.  This is the formal
version of \Cref{pro:informal-unconditional-max-margin}.

\begin{lemma}
\label{cor:unconditional-max-margin}
Let $\vec v, \vec u \in \R^d$ be unit vectors with angle $\theta(\vec v, \vec u) = \theta$.
Let $C$ be the indicator of the disagreement region of the two homogeneous
halfspaces defined by  $\vec v, \vec u$, i.e., $C = \1\{ (\vec v \cdot \x) 
(\vec u \cdot \x) \leq 0\}$.
Furthermore, let $\mathbb S_d$ denote the uniform distribution on the unit sphere.
\begin{enumerate}
\item 
For all $n, s \geq 1$ and $c \geq 2$ such that 
$e^{-d c/4} \leq 4 \pi s/(n \theta) \leq 1$, it holds
\[
\pr_{\x^{(1)},\ldots, \x^{(n)} \sim \mathbb S_d} 
\left[\max_{i=1,\ldots,n} |\vec u \cdot \x^{(i)}| \1\{\x^{(i)} \in C\}
\leq \sqrt{ \frac{\log (n \theta/(4 \pi s) ) }{2 c ~ d} } \sin(\theta) \right]  
\leq 2 e^{-s/2}\,. \]
    \item 
For all $n, s\geq 1$ such that 
$4 \pi s/ (n \theta) \leq 1$ it holds:
it holds 
\[
\pr_{\x^{(1)},\ldots, \x^{(n)} \sim \mathbb S_d} 
\left[\max_{i=1,\ldots,n} |\vec u \cdot \x_i| \1\{\x^{(i)} \in C\}
\leq \Big(1- \big(\frac{4 \pi s}{n \theta}\big)^{2/d}\Big) \sin(\theta) \right]   \leq 2 e^{-s/2}\,. \]
\end{enumerate}
\end{lemma}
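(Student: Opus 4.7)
The plan is to reduce the unconditional anti-concentration statement to the conditional one from \Cref{pro:conditional-max-margin} by controlling, via a Chernoff bound, how many of the $n$ samples land in the disagreement region $C$. Let $N = \sum_{i=1}^{n}\1\{\x^{(i)}\in C\}$. Rotational symmetry of the uniform distribution on the sphere gives $\pr_{\x\sim\mathbb{S}_d}[\x\in C] = \theta/\pi$, so $N\sim\mathrm{Binomial}(n,\theta/\pi)$ with mean $\mu = n\theta/\pi$. A multiplicative Chernoff bound gives $\pr[N \leq \mu/2] \leq \exp(-\mu/8)$, and the hypothesis $4\pi s/(n\theta)\leq 1$ is exactly $\mu \geq 4s$, so this probability is at most $e^{-s/2}$. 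Thus $N \geq m := n\theta/(2\pi)$ with probability at least $1-e^{-s/2}$.

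Conditional on $\{N\geq m\}$, the samples that lie in $C$ are i.i.d.\ draws from $\mathbb{S}_d(C)$, and the maximum margin of $N$ samples stochastically dominates that of any fixed $m$ of them; hence it suffices to bound the conditional probability that $m$ i.i.d.\ samples from $\mathbb{S}_d(C)$ all fall below the target, which is exactly what \Cref{pro:conditional-max-margin} provides. For part (1), I would set $\alpha = \sqrt{2\log(n\theta/(4\pi s))/(cd)}$ in the first bound of that proposition. The lower hypothesis $e^{-cd/4} \leq 4\pi s/(n\theta)$ forces $\alpha^2 \leq 1/2$, so the elementary inequality $1-x \geq e^{-2x}$ on $[0,1/2]$ gives $(1-\alpha^2)^{d/2-1} \geq e^{-\alpha^2(d-2)} \geq (4\pi s/(n\theta))^{2/c}$ (the $(d-2)/d$ slack absorbs into the constant). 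Plugging in, the conditional failure probability is at most $\exp\!\big(-s\,(4\pi s/(n\theta))^{2/c-1}/2\big)$, and since $4\pi s/(n\theta) \leq 1$ and $2/c - 1 \leq 0$ for $c \geq 2$, this is at most $e^{-s/2}$. Finally, $\sin(\theta/2) \geq \sin(\theta)/2$ converts the threshold $\alpha\sin(\theta/2)$ into the advertised $\sqrt{\log(n\theta/(4\pi s))/(2cd)}\,\sin\theta$ (absorbing the factor of $2$ into the definition of $\alpha$).

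For part (2), I would set $\beta = (4\pi s/(n\theta))^{2/d}$ (with an absolute-constant adjustment to absorb the $2^{d/2}$ factor sitting inside $(\beta/2)^{d/2}$). Then $\beta^{d/2} = 4\pi s/(n\theta)$, so the exponent $m(\beta/2)^{d/2}/2$ in the second bound of \Cref{pro:conditional-max-margin} simplifies to a constant multiple of $s$, and the conditional failure probability is at most $e^{-s/2}$. A union bound over the Chernoff failure event and the conditional max-margin failure event yields the advertised total failure probability of $2e^{-s/2}$.

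The main technical subtlety is the calibration of $\alpha$ (resp.\ $\beta$): the polynomial factor $(1-\alpha^2)^{d/2-1}$ (resp.\ the geometric factor $(\beta/2)^{d/2}$) must be large enough that, after multiplying by $m=\Omega(n\theta)$, the resulting exponent is comparable to $s$, while the margin threshold simultaneously matches the advertised form. The lower hypothesis $e^{-cd/4}\leq 4\pi s/(n\theta)$ in part (1) is exactly what keeps $\alpha^2\leq 1/2$, the regime in which the approximation $1-\alpha^2 \approx e^{-\alpha^2}$ is valid; it also demarcates the transition between the two regimes of the statement, namely polynomial-in-$\sqrt{\log}$ decay of the threshold when $n\theta$ is moderate versus inverse-polynomial closeness to $\sin\theta$ when $n\theta$ is exponentially large.
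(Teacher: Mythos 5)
Your route is the same as the paper's: a Chernoff bound showing that at least $m = n\theta/(2\pi)$ of the $n$ points land in $C$ (using $n\theta \geq 4\pi s$, failure probability $e^{-s/2}$), then an application of \Cref{pro:conditional-max-margin} to those conditional samples, with the same calibration $\alpha^2 = 2\log(n\theta/(4\pi s))/(cd)$, the same use of $1-x\geq e^{-2x}$ on $[0,1/2]$ enabled by the hypothesis $e^{-cd/4}\leq 4\pi s/(n\theta)$, and the same $\sin(\theta)/2\leq\sin(\theta/2)$ conversion of the threshold. Part (1) of your argument is correct and matches the paper essentially step for step.

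Part (2) has a genuine gap. With $\beta = (4\pi s/(n\theta))^{2/d}$, the exponent delivered by the second bound of \Cref{pro:conditional-max-margin} is $m(\beta/2)^{d/2}/2 = m\, 2^{-d/2-1}\,\cdot 4\pi s/(n\theta)$, which with $m\approx n\theta/(2\pi)$ is of order $s\,2^{-d/2}$ --- \emph{not} a constant multiple of $s$. The factor $2^{d/2}$ is dimension-dependent, so no ``absolute-constant adjustment'' can absorb it, and the resulting bound $\exp(-\Theta(s\,2^{-d/2}))$ is far weaker than $e^{-s/2}$ for large $d$. The only way to cancel it inside the proposition is to enlarge $\beta$ to $\beta' = 2(4\pi s/(n\theta))^{2/d}$, but then the event you control, $\{\max \leq (1-\beta')\sin\theta\}$, is strictly contained in the advertised event $\{\max \leq (1-\beta)\sin\theta\}$, so the stated inequality does not follow; your fix proves a weaker threshold, not the lemma. (In fairness, the paper's own proof of this case makes the same silent leap: it asserts the exponent equals $m\cdot 2\pi s/(n\theta)$, dropping the same $2^{d/2}$.) A genuine repair needs either a sharper case (2) in \Cref{pro:conditional-max-margin} --- its proof is lossy at the decoupling step, and working with $1-(1-\beta)^2 = \beta(2-\beta)\geq\beta$ rather than reducing to $(\beta/2)^{d/2}$ avoids the $2^{-d/2}$ loss --- or a restatement of the present lemma with threshold $(1- 2(4\pi s/(n\theta))^{2/d})\sin\theta$, which only shifts constants where it is consumed in \Cref{lem:update-no-buckets}.
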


\begin{proof}
Denote by $S$ the set of samples that fall in the disagreement
region $C$ and denote by $m = |S|$ the number of samples that fall
in the disagreement region.
We observe that by the rotational symmetry of the uniform distribution on the sphere, the probability of the set $C$ is exactly 
$\theta/\pi$.  Therefore, on expectation, out of the $n$ samples that
we draw from $\mathbb S_d$, the number of samples that fall in $C$ is 
$\E[|S|] = \mu = n \theta/\pi$. We first show that with high-probability 
we are going to observe at least $\mu/2$ samples in $C$.
Denote by $m$ the number of samples that fall in $C$.
Using Chernoff's bound, we obtain that 
$\pr[m \leq \mu/2] \leq e^{-\mu/8} \leq e^{-n \theta/8} \leq e^{-4 \pi s/8}
\leq e^{-s/2}$.
Therefore, from now on, we condition on the event that at least
$m \geq n \theta/(2\pi)$ samples fall in $C$.  
In other words, out of the $n$ original samples from $\mathbb S_d$,
with probability at least $1-e^{-s/2}$, we have drawn
at least $n \theta/ (2 \pi)$ samples from the conditional distribution
$\mathbb S_d(C)$.

We first prove the second case of \Cref{cor:unconditional-max-margin}.
Using the second case of \Cref{pro:conditional-max-margin} we have that
for $\beta = (4 \pi s / (n \theta))^{2/d}$, it holds that
$ \pr[\max_{i = 1,\ldots, m} |\vec u \cdot \x^{(i)}| \leq (1-\beta) \sin \theta ]
\leq \exp(-m  ~ 2 \pi s /(n \theta))$.  Since we have conditioned
on the event that $m \geq n \theta/ (2 \pi)$, we obtain that
this probability is at most $e^{-s}$.  Combining this probability
with the rejection sampling failure probability 
(that the number of conditional samples, i.e., those that fell in $S$,
is smaller than $n \theta/(2 \pi)$), we obtain that the total
probability of failure is at most $2 e^{-s/2}$.

We now prove the first case of \Cref{cor:unconditional-max-margin}.
Observe first that for $n$, using the fact that 
$\sin(\theta)/2 \leq \sin(\theta/2)$ it holds that
\[
\sqrt{
\frac{\log(n \theta/(4 \pi s))}{ 2c~ d}} \sin \theta\leq 
\sqrt{\frac{2 \log(n \theta/(4 \pi s))}{ c~ d} }  \sin(\theta/2)
\,.
\]
At this point, notice that by the assumption of the first case of \Cref{cor:unconditional-max-margin} that $e^{-cd/4} \leq n \theta/(4 \pi s)$,
we have that $2 \log(n \theta/(4 \pi s))/ (c d) \leq 1/2$.
In particular, we never ask for margin larger than $\sin(\theta/2)/2$ 
(notice that the maximum possible margin is always $\sin(\theta)$).
Using the first case of \Cref{pro:conditional-max-margin}, we have that 
\begin{equation}
\label{eq:max-low-regime-bad-bound}
\pr\left[\max_{i=1,\ldots, m} |\vec u \cdot \x^{(i)}| \leq 
\sqrt{\frac{\log(n \theta/(4 \pi s))}{ 4 d}} \sin \theta
\right]
\leq \exp\Big(-(m/2) ~ \Big(1- 2 \frac{\log(n \theta/(4 \pi s))}{c d} \Big)^{d/2-1}
\Big) 
\,.
\end{equation}
Next, we will use the inequality
$1-x \geq e^{-2 x}$ that holds for all $x \in [0,1/2]$,
to obtain that
\[ \left(1- \frac{2 \log(n\theta/(4 \pi s))}{c d} \right)^{d/2-1} \geq 
\exp\Big(-\frac{2 d - 4}{ c d}  \log(n \theta/(4 \pi s) ) \Big)
\geq \frac{4 \pi s} {n \theta} \,,
\]
where for the first inequality we used the fact that 
$2 \log(n\theta/(4 \pi s))/(c d) \leq 1/2$ (so that we
are able to use the inequality $1-x \geq e^{-2 x}$),
and for the second inequality the fact that $c \geq 1$.
Using the fact that $m \geq n\theta/(4 \pi)$ we have that
this probability of \Cref{eq:max-low-regime-bad-bound} 
is at most $e^{-s/2}$.  Combining this failure probability
with the probability that we do not observe at least 
$n \theta/(2 \pi)$ samples in $C$ we obtain that the total
failure probability is at most $2e^{-s/2}$.
\end{proof}

We prove the following lemma, showing on each mistake
the margin-perceptron has good probability of significantly 
(super-linearly) decreasing $\tan \theta$.  We require that 
the current guess $\vec w$ is not exactly orthogonal with the target
$\vec w^\ast$ (notice the assumption $1/\cos \theta \leq O(\zeta)$).
We show that it is not hard to obtain such an initialization.

\begin{lemma}\label{lem:update-no-buckets}
  Let $\vec w \in \R^d$ and let $\theta(\vec w^\ast,\vec w)=\theta \in [0, \pi/2)$ and assume that for some parameter $\zeta>0$, 
  $1/\cos\theta \leq \zeta/(12 \pi)$. Let $C$ be the indicator of the disagreement region of $\vec w^\ast,\vec w$, i.e., $C = \1\{ (\vec w^\ast \cdot \x) 
(\vec w \cdot \x) \leq 0\}$. Let $ X=\{\x^{(1)},\ldots,\x^{(n)}\}$ be a sample set drawn from $\mathbb{S}_d$ with $n$ larger than a sufficiently large constant and let $\widehat{\x}=\argmax_{\x\in \mathcal X}  |\vec w \cdot \x| \1\{ \x \in C\}$. Denote $\vec w' =\vec w' -
(\vec w\cdot \widehat{\x})\widehat{\x}$ and let $\theta'=\theta(\vec w',\vec w^\ast)$. 
\begin{enumerate}
\item  We have that $\tan^2\theta'\leq \tan^2\theta$. (Monotonicity)
\item  With probability at least $2/3$, it holds 
\(
\tan\theta'\leq \tan^{1-1/(8d)}\theta ({\zeta}/{n})^{1/(8d)}\;.
\)
\end{enumerate}
\end{lemma}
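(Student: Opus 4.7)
The plan is to prove the two parts of the lemma by tracking how the margin-perceptron step $\vec w' = \vec w - (\vec w\cdot\hat\x)\hat\x$ affects the norm $\|\vec w\|_2$ and the correlation $\vec w\cdot\vec w^\ast$ separately, and then combining these via the identity $\tan^2\phi = \|\vec u\|_2^2/(\vec u\cdot\vec w^\ast)^2 - 1$ (recall $\|\vec w^\ast\|_2 = 1$).

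For part (1), since $\hat\x\in C$ means $(\vec w\cdot\hat\x)(\vec w^\ast\cdot\hat\x)\leq 0$, a direct expansion gives $\vec w'\cdot\vec w^\ast = \vec w\cdot\vec w^\ast - (\vec w\cdot\hat\x)(\hat\x\cdot\vec w^\ast)\geq \vec w\cdot\vec w^\ast$, and the assumption $\theta\in[0,\pi/2)$ (so $\vec w\cdot\vec w^\ast\geq 0$) upgrades this to $(\vec w'\cdot\vec w^\ast)^2\geq (\vec w\cdot\vec w^\ast)^2$. Since $\hat\x$ is unit-normed, $\|\vec w'\|_2^2 = \|\vec w\|_2^2 - (\vec w\cdot\hat\x)^2\leq \|\vec w\|_2^2$. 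Combining the two inequalities with the $\tan^2$ identity yields $\tan^2\theta'\leq\tan^2\theta$.

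For part (2), I would apply \Cref{cor:unconditional-max-margin} to the unit vectors $\vec u = \vec w/\|\vec w\|_2$ and $\vec v = \vec w^\ast$, choosing $s$ so that $2e^{-s/2}\leq 1/3$ (e.g., $s=2\log 6$). This provides, with probability at least $2/3$, a lower bound $|\vec w\cdot\hat\x|\geq \|\vec w\|_2\, r\sin\theta$ where either $r^2\geq \log(n\theta/c_1)/(c_2 d)$ (case 1 of the corollary, ``small $n\theta$'' regime) or $r\geq 1-(c_1/(n\theta))^{2/d}$ (case 2, ``large $n\theta$'' regime). Refining the calculation in part (1) shows that in fact $\tan^2\theta'\leq (1-r^2)\tan^2\theta$. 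In either regime, applying $1-x\leq e^{-x}$ and simplifying gives a bound of the form
\[
\tan^2\theta'\leq \tan^2\theta\cdot\Bigl(\tfrac{c_1}{n\theta}\Bigr)^{1/(c_3 d)},
\]
and after tuning the absolute constants so the exponent becomes $1/(8d)$ and taking square roots, this reads $\tan\theta'\leq \tan\theta\cdot(c_1/(n\theta))^{1/(8d)}$.

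The delicate final step is to convert the $\theta$ in the denominator into the desired $\tan\theta$. Here I would invoke the hypothesis $1/\cos\theta\leq \zeta/(12\pi)$ together with the elementary inequality $\tan\theta/\theta\leq 1/\cos\theta$ on $[0,\pi/2)$ (since $\sin\theta\leq\theta$), which implies $1/\theta \leq (\zeta/(12\pi))/\tan\theta$. Substituting gives $\tan\theta'\leq \tan^{1-1/(8d)}\theta\cdot(\zeta/n)^{1/(8d)}$ once the remaining absolute constants are absorbed into $\zeta$, which is precisely why the factor $12\pi$ is built into the hypothesis. I expect the main obstacle to be the bookkeeping of constants: the choice of $s$ (and hence the success probability), the constant $c_2$ in the small-regime exponent, and the numerical factor in $1/\cos\theta\leq\zeta/(12\pi)$ must jointly be tuned so that the clean exponent $1/(8d)$ emerges and all leftover numerical factors get absorbed into $\zeta$; secondarily, I need to verify that under the same choice of exponents the large-$n\theta$ regime of \Cref{cor:unconditional-max-margin} (which uses $r\geq 1-(c_1/(n\theta))^{2/d}$) also implies the same clean form, which should follow because in that regime $r$ is close to $1$ and the bound is even stronger.
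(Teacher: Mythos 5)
Your proposal is correct and follows essentially the same route as the paper's proof: the same norm-decrease/correlation-monotonicity argument giving $\tan^2\theta'\leq(1-r^2)\tan^2\theta$, the same two-regime application of \Cref{cor:unconditional-max-margin} to the max-margin mistake, and the same final conversion of $1/\theta$ into $1/\tan\theta$ via $\sin\theta\leq\theta$ and the hypothesis $1/\cos\theta\leq\zeta/(12\pi)$. The constant bookkeeping you defer (the choice of $s$ versus the hard-coded $12\pi$, and the absorption of the factor $\sqrt{2}$ in the large-$n\theta$ regime) is exactly the part the paper itself handles by its explicit case split at $12\pi/(n\theta)\leq e^{-d/2}$, so no new idea is missing.
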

\begin{proof}
First, we claim that this update rule decreases $\tan\theta$.
\begin{claim}
    \label{lem:update}
    Let $\vec w \in \R^d$ and let $\x\in \{\x: (\vec w^\ast \cdot \x) 
(\vec w \cdot \x) \leq 0\}$. Furthermore, assume that $|\vec w \cdot \x|\geq r\sin\theta$, where $\theta(\vec w^\ast,\vec w)=\theta\in[0,\pi/2)$ and $r>0$. Denote $\vec w'
=\vec w-(\vec w\cdot \x)\x$ and let $\theta'=\theta(\vec w',\vec w^\ast)$. Then
$\tan^2\theta'\leq \tan^2\theta(1-r^2)\;.
$
\end{claim}
\begin{proof}
 We first observe that the correlation with $\vec w^\ast$ does not decrease,
\(
            \w'\cdot\wstar=(\w-(\w \cdot\x)\x)\cdot \wstar =\w\cdot\wstar -(\w
    \cdot\x)(\x\cdot \wstar )\geq \w\cdot\wstar
    \), where we used that the hypothesis $\vec w$ disagrees with the ground-truth
      $\vec w^\ast$ on $\x$, i.e., $(\w \cdot\x)(\x\cdot \vec w^\ast)\leq 0$. 
    Furthermore, since $\vec w\cdot \vec w^\ast \geq 0$ (by the assumption that 
    $\theta \in [0,\pi/2]$) we also have that 
    $(\vec w'\cdot \wstar)^2 \geq (\vec w\cdot \wstar)^2$.
We next show that the norm of $\w'$ decreases multiplicatively.
    We have that
\(
       \|\w'\|_2^2= \|(\w-(\w \cdot\x)\x)\|_2^2=\|\w\|_2^2-(\w \cdot\x)^2
        \leq \|\w\|_2^2(1-r^2\sin^2\theta)
        \,.
        \)
Using (twice) the trigonometric identity $\tan^2\phi =(1/\cos^2\phi)-1$,  
    we show that $\tan^2\theta$ decreases by a factor of $(1-r^2)$:
    \begin{align*}
        \tan^2\theta'&=\frac{\|\vec w'\|_2^2}{(\vec w'\cdot \wstar)^2}-1
\leq  \frac{\|\vec w\|_2^2(1-r^2\sin^2\theta)}{(\vec w\cdot \wstar)^2}-1
         =\frac{1-r^2\sin^2\theta}{\cos^2\theta}-1= (1-r^2) \tan^2\theta\;.
    \end{align*}
\end{proof}

We note that from \Cref{lem:update} whenever we use the update rule on mistakes, it gives that $\tan(\theta')\leq \tan(\theta)$. We show that with constant probability, we can argue that the decrease is significantly larger.

We split our analysis into two cases, one that $12\pi /(n\theta)\leq  \exp(-d/2)$ and the later case is when $1\geq12\pi /(n\theta)\geq  \exp(-d/2)$. We first consider the case where $12\pi /(n\theta)\leq  \exp(-d/2)$. From \Cref{cor:unconditional-max-margin}, we have that with probability at least $2/3$, it holds $|\vec w\cdot \widehat{\x}|\geq (1-(12\pi/(n\theta))^{2/d})\sin\theta$. Therefore, from \Cref{lem:update}, we have that
    \[
\tan\theta'\leq \tan\theta\left(1-(1-(12\pi/(n\theta))^{2/d})^2\right)^{1/2}\leq  \tan\theta\left(2(12\pi/(n\theta))^{2/d}\right)^{1/2}\;,
\]
where we used that $1-(1-x)^2\leq 2x$ for $x>0$. Note that by our assumption $12\pi /(n\theta)\leq  \exp(-d/2)$, therefore $\left(2(12\pi/(n\theta))^{2/d}\right)^{1/2}\leq (12\pi/(n\theta))^{1/(2d)}\leq (12\pi/(n\theta))^{1/(8d)}$.

Next, we consider the case where $1\geq12\pi /(n\theta)\geq  \exp(-d/2)$. In this case,  from \Cref{cor:unconditional-max-margin}, we have that with probability at least $2/3$, 
it holds $|\vec w\cdot \widehat{\x}|\geq \sqrt{\log ( n \theta/(12\pi))/(4d)} \sin\theta$. Therefore, from \Cref{lem:update}, we have that
    \begin{align*}
        \tan\theta'\leq \tan\theta\left(1-\frac{\log(\theta n/(12\pi))}{4d}\right)^{1/2}&\leq \tan\theta\exp\left(-\frac{\log(\theta n/(12\pi))}{8d}\right)
        \\&=\tan\theta\left(12\pi/(n\theta)\right)^{1/(8d)}\;.
    \end{align*}
Therefore, in both cases, with probability at least $2/3$ that $\tan\theta'\leq \tan\theta \left(12\pi/(n\theta)\right)^{1/(8d)}$. Let $\rho=1/(8d)$. We have that
\[
\tan\theta'\leq \tan^{1-\rho}\theta \left(\frac{12\pi\sin\theta}{\theta \cos\theta}\frac{1}{n}\right)^{\rho}\leq \tan^{1-\rho}(\theta) \left(\frac{\zeta}{n}\right)^{\rho}\;,
\]
where we used our assumption that $\cos \theta \geq (12\pi/\zeta)$. This completes the proof of \Cref{lem:update-no-buckets}.
\end{proof}

In the following lemma we show that given a decreasing stochastic process $\xi_t$
that has good probability to decrease in a superlinear-rate then after
$T = O(\log \log(1/\alpha))$ iterations it holds that $\xi_T \leq \alpha$.
\begin{lemma}[Super-Linear Convergence]
\label{lem:super-linear-convergence}
Fix $\kappa, \rho \in (0,1)$.
Consider a stochastic process $\xi_{t}$ adapted to 
a filtration $\mathcal F_t$ that satisfies:
\begin{enumerate}
\item 
$0\leq \xi_0 \leq M$ (Bounded Initialization),
\item 
For all $t$: $0\leq \xi_{t+1} \leq \xi_{t}$ (Monotonicity),
\item 
For all $t$:
\(
\pr[\xi_{t+1} \leq \xi_{t}^{(1-\rho)} \kappa^{\rho} \mid \mathcal F_t] \geq 2/3
\,
\) (Super-Linear Decay).
\end{enumerate}
Then, for any $T$ larger than $ (3/2) ~
( (1/\rho) \max(\log \log(1/\kappa), \log \log (M+1))  + \log(e/\delta) ) $,  
with probability at least $1-\delta$, it holds that 
$\xi_{T} \leq e^2 \kappa $.
\end{lemma}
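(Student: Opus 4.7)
The plan is to split the argument into a deterministic ``how many good steps suffice'' part and a stochastic ``how many trials to collect them'' part. First, assume that $\xi_0 > e^2\kappa$, since otherwise monotonicity immediately gives the conclusion. Define the potential $\phi_t \eqdef \log(\xi_t/\kappa)$, the indicator $I_t \eqdef \1\{\xi_{t+1} \leq \xi_t^{1-\rho}\kappa^\rho\}$ of a good step, and the stopping time $\tau \eqdef \inf\{t : \xi_t \leq e^2\kappa\}$. On a good step at time $t < \tau$ the decay condition rewrites as $\phi_{t+1} \leq (1-\rho)\phi_t$, while on a bad step monotonicity gives $\phi_{t+1} \leq \phi_t$. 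Letting $G_T \eqdef \sum_{t=0}^{T-1} I_t$, an immediate induction yields $\phi_{T\wedge\tau} \leq (1-\rho)^{G_T}\phi_0 \leq \exp(-\rho G_T)\phi_0$, and consequently $G_T \geq k^\star \eqdef \lceil (1/\rho)\log(\phi_0/2)\rceil$ forces $\tau \leq T$ and hence $\xi_T \leq e^2\kappa$.

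Next, bound $k^\star$ in terms of the inputs. Using $\phi_0 \leq \log(M/\kappa) \leq \log(M+1) + \log(1/\kappa)$ together with the elementary inequality $\log(a+b) \leq \log 2 + \max(\log a,\log b)$, one gets $\log(\phi_0/2) \leq \max\{\log\log(M+1), \log\log(1/\kappa)\}$. Thus $k^\star \leq (1/\rho)\max\{\log\log(M+1), \log\log(1/\kappa)\} + 1$, which matches the first term in the theorem, with the additive $1$ easily absorbed into the $(3/2)\log(e/\delta)$ summand of $T$.

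It remains to lower-bound $G_T$ with high probability. By the super-linear decay assumption, $I_t$ is a Bernoulli variable whose conditional mean given $\mathcal F_t$ is at least $2/3$, so $G_T$ stochastically dominates a $\mathrm{Binomial}(T,2/3)$ random variable. A standard Chernoff bound then yields $\pr[G_T < k^\star] \leq \delta$ whenever $T \geq (3/2)(k^\star + \log(e/\delta))$: concretely, applying the exponential Markov inequality to $\sum_{t=0}^{T-1}(1 - I_t)$ with a carefully optimized parameter $s$ (e.g.\ $e^s = 1 + 3\log(e/\delta)/k^\star$) and simplifying produces the required tail estimate. The delicate part of the proof is precisely this constants-matching step: a crude choice like $s = \log 2$ is enough to get $T = O(k^\star + \log(1/\delta))$, but extracting the sharp $3/2$ factor in the statement requires the optimal-parameter form (or an equivalent Bennett/KL-divergence tail bound). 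Once this is in place, the conclusion follows by chaining $\pr[\xi_T > e^2\kappa] \leq \pr[\tau > T] \leq \pr[G_T < k^\star] \leq \delta$, with the remaining details (handling $\xi_0 = 0$, $\xi_0 \leq e^2\kappa$, and the edge case where $\log\log(M+1)$ is negative) being routine bookkeeping.
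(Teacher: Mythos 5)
Your proposal is correct and follows essentially the same route as the paper: your log-potential recursion $\phi_{t+1}\leq(1-\rho)\phi_t$ is just the paper's unrolled bound $\xi_T\leq \xi_0^{(1-\rho)^m}\kappa^{1-(1-\rho)^m}$ written in logarithmic form, and your bound on $k^\star$ matches the paper's Fact on $m^\ast$. The only (cosmetic) difference is the concentration tool --- you use stochastic domination by a Binomial plus a Chernoff bound where the paper applies Azuma--Hoeffding to $\sum_t(I_t-\E[I_t\mid\mathcal F_{t-1}])$ --- and both arguments, yours and the paper's, are equally loose about whether the precise $3/2$ constant survives the tail estimate (it holds only up to a constant factor), which you at least flag as the delicate step.
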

\begin{proof}
    Define the random variable $I_t$ to be the indicator of the event that the super-linear decay step happens at step $t$, i.e.,
    that $\xi_{t+1} \leq (\xi_{t})^{(1-\rho)} \kappa^{\rho}$.
    We first observe that by the fact that the stochastic process 
    is monotone in the sense that $\xi_{t} \leq \xi_{t+1}$ for
    all $t$, it does not matter at which steps the super-linear decay
    happens (but only how many times it does so).
    Assume that $\sum_{k=1}^T I_k=m$, 
    and denote by $t_j$ be the subsequence of $\{1,\ldots, T\}$ 
    of length $m$ where the super-linear decay steps happen.
    Using the monotonicity of $\xi_{t}$, we have that
    \begin{align*}
    \xi_{T} \leq \xi_{t_m}
    &\leq (\xi_{t_m-1})^{1-\rho} \kappa^{\rho}
    \leq  (\xi_{t_{m-1}})^{1-\rho} \kappa^{\rho}
    \leq (\xi_{t_{m-1} - 1})^{(1- \rho)^2} 
    \kappa^{\rho (1-\rho) + \rho}
    \\
    &\leq (\xi_{t_{m-2}})^{(1- \rho)^2} 
    \kappa^{\rho (1-\rho) + \rho}
    \,.
    \end{align*}
    By continuing to unroll the recurrence, we obtain  
$    \xi_{T} \leq \xi_0^{(1-\rho)^m}
    \kappa^{1 - (1-\rho)^m}\,,
    $
    where we used the fact that 
    $\rho + \rho (1-\rho) + \ldots + \rho (1-\rho)^{k-1} 
    = 1 - (1-\rho)^k$.
Using the fact that $\xi_0\leq M$, we have that 
$  \xi_{T}\leq {M\kappa}^{1-(1-\rho)^m}$. We show the following in \Cref{app:uniform}.
\begin{fact}\label{fct:bound-geometric}
   If $m\geq (1/\rho) \max(\log\log(1/\kappa), \log \log(M+1))$, then 
   $M^{(1-\rho)^m}  \kappa^{1 - (1-\rho)^m} \leq e^2 \kappa$.
\end{fact}

\begin{proof}
We first show that with $m \geq \log \log(M+1)/\rho$ 
it holds that $M^{(1-\rho)^m} \leq e$.
We first observe that this is trivially true when $M\leq 1$.
For $M > 1$ we can take logarithms in both sides of 
$M^{(1-\rho)^m} \leq e$ and obtain 
$m \log(1-\rho) + \log \log M \leq 0$
or equivalently $m \geq \log \log M/\log(1/(1-\rho))$.
Since $\log(1/(1-\rho) \geq 1/\rho$ for all $\rho \in (0,1)$
we obtain that for the chosen $m$ the inequality is true.
Next we show that for $m \geq \log \log (1/\kappa)/\rho$ 
it holds that $\kappa^{1-(1-\rho)^m} \leq e \kappa$.
We first observe that we can rewrite this inequality 
as $(1/\kappa)^{(1-\rho)^m} \leq e$.
Using the same argument as in the previous case (by replacing
$M$ with $1/\kappa > 1$), we obtain the result.
\end{proof}

To complete the proof, it remains to show that 
many ``fast-decay'' updates will happen with good probability,
or, in other words, that the number $m$ defined above is at
least \[m^\ast \eqdef (1/\rho) \max(\log \log (1/\kappa), \log \log (M +1)),\]
with good probability.
We show that if the total number of updates 
$T \geq 8 \log(e/\delta) m^\ast$, then, with probability
at least $1-\delta$, $m \geq m^\ast$.
To do this, we shall Azuma's inequality for martingales.
\begin{lemma}[Azuma-Hoeffding]
\label{lem:azuma}
Let $(D_{t})$ be a martingale with bounded increments, i.e.,
$D_{t} - D_{t-1} \leq L$.
It holds that $\pr[D_T \leq D_0 - \lambda] 
\leq e^{-\lambda^2/(2 L^2 T)}$.
\end{lemma}
We define the martingale $D_T = \sum_{t=1}^T (I_t - 
\E[I_t \mid \mathcal F_{t-1}])$, with $D_0 = 0$.
Using the fact that the super-linear decay step happens
with probability at least $2/3$ (see Item 3 of \Cref{lem:super-linear-convergence}) we have that with probability at least $2/3$ 
we have  $I_t = 1$ and therefore $\sum_{t=1}^T \E[I_t \mid \mathcal F_{t-1}] \geq  (2/3) ~ T$.
Moreover, we observe that the increments of $D_T$ are bounded 
by $1$ and therefore Azuma's inequality \Cref{lem:azuma} implies
that 
$\pr[ D_T \leq - \sqrt{2 T \log(1/\delta)}] \leq  \delta \,.
$
Equivalently, we obtain that with probability at least 
$1-\delta$, it holds that the number of super-linear decay steps is
bounded below by 
    $m \geq  \sum_{t=1}^T \E[I_t \mid \mathcal F_{t-1}] \geq  (2/3) ~ T - \sqrt{2 T \log(1/\delta)} $.  For 
    $T = (3/2) m^* + (3/2) \log(e/\delta)$ we obtain
    that $m \geq m^*$ with probability at least $1-\delta$.

\end{proof}
The following lemma shows that a halfspace that has angle $\theta$ with 
the ground-truth $\vec w^\ast$ makes roughly $n\theta$ mistakes 
on a sequence of $n$ i.i.d.\ examples from the uniform distribution on the sphere. 
\begin{lemma}\label{lem:generalization}
    Fix $\vec v,\vec w\in \R^d$ and assume that $\theta(\vec v,\vec w)=\theta$. Let $C$ be the indicator of the disagreement region of $\vec v,\vec w$, i.e., $C = \1\{ (\vec v \cdot \x) 
(\vec w \cdot \x) \leq 0\}$. Let $X=\{\x^{(1)},\ldots,\x^{(n)}\}$ be a sample set drawn i.i.d.\ from $\mathbb{S}_d$. Then, with probability at least $1-\delta$, the set $X \cap C$ has size at most $O(n \theta+ \sqrt{n \theta \log(1/\delta)} ) $.
\end{lemma}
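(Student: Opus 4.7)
The plan is to reduce this to a standard concentration inequality for a sum of independent Bernoulli indicators. The first step is to identify the mean of $|X \cap C|$. By the rotational symmetry of the uniform distribution on $\mathbb{S}_d$ (the same fact used repeatedly in the proof of \Cref{pro:conditional-max-margin}), the probability that a single sample $\x \sim \mathbb{S}_d$ lies in the disagreement region $C = \{\x : (\vec v \cdot \x)(\vec w \cdot \x) \leq 0\}$ between two homogeneous halfspaces at angle $\theta$ depends only on $\theta$ and equals $\theta/\pi$. Hence the random variables $Y_i = \1\{\x^{(i)} \in C\}$ are i.i.d.\ Bernoulli with parameter $p = \theta/\pi$, and $|X\cap C| = \sum_{i=1}^n Y_i$ has mean $\mu \eqdef n\theta/\pi$.

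Next, I would apply a one-sided multiplicative Chernoff bound to control the upper tail of $\sum_{i=1}^n Y_i$. For any $\gamma > 0$,
\[
\pr\!\left[\sum_{i=1}^n Y_i \geq (1+\gamma)\mu\right] \leq \exp\!\left(-\frac{\gamma^2 \mu}{2+\gamma}\right) .
\]
Setting the right-hand side equal to $\delta$ and solving for $\gamma \mu$ yields, in the regime $\mu \geq \log(1/\delta)$, a deviation of order $\sqrt{\mu \log(1/\delta)}$, while in the regime $\mu < \log(1/\delta)$ the deviation is of order $\log(1/\delta)$. Combining the two regimes gives
\[
|X\cap C| \;\leq\; \mu + O\!\left(\sqrt{\mu \log(1/\delta)} + \log(1/\delta)\right) \;=\; O\!\left(n\theta + \sqrt{n\theta\,\log(1/\delta)}\right),
\]
where the additive $\log(1/\delta)$ term is absorbed into the other two (either $\log(1/\delta) \leq \sqrt{n\theta\log(1/\delta)}$ when $n\theta \geq \log(1/\delta)$, or else $\mu$ is already of the same order and a direct union/Markov bound handles the small-$\mu$ case).

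I do not expect a real obstacle here: the only conceptual input is the rotational-symmetry computation of $\pr[\x \in C] = \theta/\pi$, and everything else is a textbook Chernoff estimate on a sum of i.i.d.\ Bernoulli indicators. The one minor point to be careful about is the small-$n\theta$ regime, where the multiplicative form of Chernoff degenerates and one should use the additive/Bernstein form instead; this is handled uniformly by the $2+\gamma$ in the denominator above.
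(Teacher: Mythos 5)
Your reduction is exactly the paper's: set $Z_i = \1\{\x^{(i)}\in C\}$, use rotational symmetry to get $\E[Z_i]=\theta/\pi$, and apply a Chernoff-type tail bound to the i.i.d.\ sum. The only divergence is which tail inequality is invoked, and yours is the more defensible choice: the paper's \Cref{fct:hoef} asserts $\pr[\sum_i z_i \geq np+\eps n]\leq \exp(-\eps^2 n/(2p(1-p)))$, i.e.\ sub-Gaussian upper tails with variance proxy $p(1-p)$, which is false for small $p$ (with $p=1/n$ and deviation $k$ above the mean it claims $e^{-k^2/2}$, while the true Poisson-type tail is only $e^{-\Theta(k\log k)}$). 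Your multiplicative Chernoff bound with the $2+\gamma$ denominator is a correct Bernstein-form inequality.

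The gap is in your final absorption step. What you correctly derive is $\mu + O(\sqrt{\mu\log(1/\delta)}+\log(1/\delta))$, and the additive $\log(1/\delta)$ term genuinely cannot be dropped when $n\theta \ll \log(1/\delta)$: in that regime $\mu$ is \emph{not} of the same order, and Markov only gives a $(1-\delta)$-quantile of order $\mu/\delta$, which is useless. Indeed no argument can remove it, because the statement itself fails there: for $n\theta=\Theta(1)$ and $\delta=e^{-L}$, the $(1-\delta)$-quantile of $\mathrm{Bin}(n,\theta/\pi)$ is $\Theta(L/\log L)$, which exceeds $C(n\theta+\sqrt{n\theta L})=O(\sqrt{L})$ for large $L$ (the paper's own proof breaks at the same point, via the false \Cref{fct:hoef}). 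The honest conclusion of your argument is $|X\cap C|\leq O(n\theta+\sqrt{n\theta\log(1/\delta)}+\log(1/\delta))$, which is all that is needed downstream: the lemma is only invoked with $n\theta=\Theta(T)\gg\log(1/\delta)$, where your first case applies and the extra term is absorbed.
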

\begin{proof}
    Let $Z_i=\1\{\x^{(i)}\in C\}$. From the fact that $\pr[ (\vec v \cdot \x) 
(\vec w \cdot \x) \leq 0]=\theta/\pi$, we have that $\E[Z_i]=\theta/\pi$. We use the following version of the standard Hoeffding bound.
\begin{fact}\label{fct:hoef}
    Let $z_1,\ldots, a_n$ be i.i.d.\ random variables on $\{0,1\}$ with $\E[z_1]=p$. Then, it holds that
    \[
    \pr\left[\sum_{i=1}^n z_i\geq np +\eps n\right]\leq \exp\left(-\frac{\eps^2n}{2p(1-p)}\right)\;.
    \]
\end{fact}
An application of \Cref{fct:hoef}, gives that
\[
\pr\left[\sum_{i=1}^NZ_i\geq n\theta/\pi+n t\right]\leq \exp(-\pi t^2n/\theta)\;.
\]
Choose $t \geq \Omega(\sqrt{\theta/n \log(1/\delta)})$, and the result follows.
\end{proof}

\subsubsection{Putting Everything Together: Proof of \Cref{thm:sd-learning-sphere}}
   First, assume that $n\leq  C d\log\log (d)\log(1/\delta)$, for some large enough absolute constant $C>0$. In this case, even if the algorithm makes a wrong prediction in all the points, the mistake bound will be $n=O(d\log \log (d) \log(1/\delta))$. For the rest of the proof, we assume that $n\geq C d\log \log ( d)\log(1/\delta)$. We use the following algorithm for the initialization process.
    \begin{lemma}[Theorem 2 of \cite{dasgupta2005}]\label{lem:initialization}
    Let $\eps,\delta\in(0,1]$. Consider a stream of data points $\x\tth$ drawn uniformly at random from
the surface of the unit sphere in $\R^d$, and the corresponding labels $y\tth$ are consistent with an LTF $\sign(\wstar\cdot \x)$. There is an algorithm that, if it is applied to this stream of data, then
with probability at least $1-\delta$, after $O(d(\log(1/\eps) +\log(1/\delta)))$ mistakes, we get a halfspace $\w$ with generalization error at most $\eps$.
\end{lemma}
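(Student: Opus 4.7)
The plan is to run a streaming margin-perceptron on $\mathbb{S}_d$ and control $\tan^2\theta_t$ (where $\theta_t=\theta(\vec w^{(t)},\wstar)$) as a supermartingale with expected multiplicative decay $(1-\Omega(1/d))$ per mistake, then invoke Markov's inequality to finish. Concretely, maintain $\vec w^{(t)}$, predict $\sgn(\vec w^{(t)}\cdot\x_t)$ on each streaming example, and on a mistake apply the margin-perceptron update $\vec w^{(t+1)}\gets \vec w^{(t)}-(\vec w^{(t)}\cdot\x_t)\,\x_t$ from \Cref{lem:update}. To secure $\theta_0\leq \pi/3$ before the main phase, I would first run an $O(d)$-mistake warm-up --- e.g., the vanilla perceptron combined with a sign-flip of $\vec w^{(0)}$ whenever the running correlation with the current estimate indicates $\theta_0>\pi/2$; this preface contributes only $O(d)$ mistakes and does not affect the final rate. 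From \Cref{lem:update}, once $\theta_t\leq \pi/3$, each mistake satisfies $\tan^2\theta_{t+1}\leq (1-r_t^2)\tan^2\theta_t$ with $r_t=|\vec w^{(t)}\cdot\x_t|/\sin\theta_t$, and $\tan^2\theta_t$ is monotonically non-increasing.

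The heart of the argument is to show $\E[r_t^2\mid \mathcal F_t,\text{mistake}] = \Omega(1/d)$. By rotational invariance, take $\vec w^{(t)}=\vec e_1$ and $\wstar=\cos\theta_t\,\vec e_1 + \sin\theta_t\,\vec e_2$ and pass to 2D polar coordinates $(\x_1,\x_2)=(r\cos\phi,r\sin\phi)$ using the projection density $\frac{d-2}{2\pi}(1-r^2)^{d/2-2}r$ from the proof of \Cref{pro:conditional-max-margin}. The disagreement region $C_t$ decomposes symmetrically into two wedges of angular width $\theta_t$; on each, writing $\phi=\pi/2+\psi$ with $\psi\in(0,\theta_t)$, one has $(\vec w^{(t)}\cdot\x)^2=r^2\sin^2\psi$. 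A direct integration then yields
\[
\E_{\x\sim\mathbb{S}_d}\!\left[(\vec w^{(t)}\cdot\x)^2\,\1\{\x\in C_t\}\right] = \frac{2\theta_t-\sin(2\theta_t)}{2\pi d}\,.
\]
Dividing by $\pr[\x\in C_t]=\theta_t/\pi$ and expanding for $\theta_t\leq\pi/3$ gives $\E[(\vec w^{(t)}\cdot\x_t)^2\mid\x_t\in C_t]\geq c\,\sin^2\theta_t/d$ for an absolute constant $c>0$ (the ratio tends to $2\sin^2\theta_t/(3d)$ as $\theta_t\to 0$ and remains bounded away from zero on $(0,\pi/3]$). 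Hence $\E[r_t^2\mid\mathcal F_t,\text{mistake}]\geq c/d$, and since $\tan^2\theta_t$ is $\mathcal F_t$-measurable, $\E[\tan^2\theta_{t+1}\mid\mathcal F_t,\text{mistake}] \leq (1-c/d)\tan^2\theta_t$.

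Let $X_m$ be $\tan^2\theta$ after the $m$-th mistake. Monotonicity plus the per-mistake expected decay give $\E[X_m]\leq (1-c/d)^m X_0 \leq 3\,e^{-cm/d}$ (using $X_0\leq \tan^2(\pi/3)=3$). Choosing $m\geq (d/c)\log(3/(\delta\pi^2\eps^2))=O(d(\log(1/\eps)+\log(1/\delta)))$ drives $\E[X_m]\leq \delta\pi^2\eps^2$, so Markov's inequality yields $\pr[\tan\theta_m>\pi\eps]\leq\delta$. Since $\theta\leq\tan\theta$ on $[0,\pi/2)$, with probability $1-\delta$ we conclude $\theta_m\leq\pi\eps$; because the generalization error under the uniform distribution on $\mathbb{S}_d$ equals $\theta_m/\pi$, the returned hypothesis achieves error at most $\eps$, as claimed. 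The main technical obstacle is the polar integration and, more importantly, verifying that the constant $c$ stays bounded away from zero uniformly over $\theta_t\in(0,\pi/3]$; the degeneracy at $\theta_t\uparrow\pi/2$ (where $\cos\theta_t\to 0$ and the wedge density puts most mass on directions with small $r\cos\phi$) is precisely what the warm-up phase is designed to rule out. Everything past the expected-decay bound is a single Markov step, so no delicate concentration argument is required.
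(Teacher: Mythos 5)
Your proposal addresses a statement that the paper itself does not prove: \Cref{lem:initialization} is imported verbatim as Theorem~2 of \cite{dasgupta2005} (the modified-perceptron analysis), and is used in the paper only as a black-box initialization. So yours is an independent reconstruction, and its core is essentially sound: the polar-coordinate computation $\E_{\x\sim\mathbb S_d}\left[(\vec w\cdot\x)^2\,\1\{\x\in C\}\right]=(2\theta-\sin 2\theta)/(2\pi d)$ is correct for the marginal density $\frac{d-2}{2\pi}(1-r^2)^{d/2-2}r$, dividing by $\pr[\x\in C]=\theta/\pi$ does give $\E[r_t^2\mid \text{mistake}]\geq \tfrac{2}{3d}$ uniformly over $\theta\in(0,\pi/2]$, and combining this with \Cref{lem:update}, the monotonicity of $\tan^2\theta_t$, and a single Markov step correctly yields error at most $\eps$ after $O(d(\log(1/\eps)+\log(1/\delta)))$ mistakes \emph{provided} the process starts at $\theta_0\leq\pi/3$ (the conditioning on the mistake point being uniform over the disagreement region is legitimate because the hypothesis is unchanged between mistakes and the stream is i.i.d.).

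The genuine gap is the warm-up. You assert that an ``$O(d)$-mistake warm-up via vanilla perceptron plus a sign flip'' secures $\theta_0\leq\pi/3$, but give no argument, and none of your machinery applies in that regime: \Cref{lem:update} requires $\theta\in[0,\pi/2)$, and when $\vec w\cdot\wstar<0$ the unit-scale update $\vec w-(\vec w\cdot\x)\x$ can make $\cos\theta$ \emph{more} negative (the correlation moves toward zero while the norm shrinks, so their ratio can go the wrong way), so monotone progress is not guaranteed from an adversarial or random start. The sign-flip rule is also unspecified --- you cannot observe $\theta_0$, only an empirical error rate, and the predictions used to estimate it count toward the mistake bound. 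Skipping the warm-up does not work either: a uniformly random $\vec w^{(0)}$ typically has $\tan^2\theta_0=\Theta(d)$, which inflates your Markov bound by an extra $\Theta(d\log d)$ mistakes, exceeding the claimed $O(d(\log(1/\eps)+\log(1/\delta)))$ for constant $\eps,\delta$, and with probability $1/2$ it starts at $\theta_0>\pi/2$ where the potential argument breaks entirely. This initialization difficulty is exactly what the cited result avoids: \cite{dasgupta2005} uses the reflection update $\vec w-2(\vec w\cdot\x)\x$, which preserves $\|\vec w\|_2$ and makes $\vec w\cdot\wstar$ monotonically non-decreasing from an arbitrary (even anti-correlated) initialization, so no separate warm-up is needed. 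To close your proof you would need either to adopt that update and redo your decay estimate for it, or to supply an actual $O(d)$-mistake analysis of the warm-up phase you invoke.
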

    From \Cref{lem:initialization}, we have that with $O(d\log(1/\delta))$ mistakes, we get with probability at least $1-\delta/2$, a halfspace $\vec w\in \R^d$ with generalization error $1/10$, therefore $\theta(\vec w,\wstar)=\pi/10\leq 1/2$, hence $\cos(\theta(\vec w,\wstar))\geq 1/2$.
Let $T= c' d\log\log (n)\log(1/\delta)$ for some sufficiently large absolute constant $c>0$. We split $X$ into $2k$ subsets $U_1,\ldots, U_{2k}$, with $k=n/(2T)$, so that all the subsets contain at least $N=n/T\geq C/c'$, which we can make $C>0$ to be large enough so that $C/c'$ is greater than a sufficiently large absolute constant. Note that each set is independent of the other.

Let $\vec w^{(0)}=\vec w$ and $\vec u^{(0)}=\vec w$. We analyze first the algorithm \Cref{alg:sd-margin-perceptron-sphere} for $\vec w^{(0)}$. Step 5 of \Cref{alg:sd-margin-perceptron-sphere} runs Margin-Perceptron in each set and goes to the next set when a mistake occurs. Let $\vec w^{(t)}$ be the current hypothesis and $\theta^{(t)}=\theta(\vec w^{(t)},\wstar)$.
From \Cref{lem:update-no-buckets}, conditioned on $\vec w^{(t)}$, we have that if a mistake occurred, then we construct a new vector $\vec w^{(t+1)}$ with $\theta^{(t+1)}=\theta(\vec w^{(t+1)},\wstar)$ so that $\tan\theta^{(t+1)}\leq \tan\theta^{(t)}$ and furthermore with probability at least $2/3$ we have that
\[
\tan(\theta^{(t+1)})\leq \tan^{1-1/(8d)}\theta^{(t)} ({C''}/{N})^{1/(8d)}\;,
\]
where $C''>0$ is an absolute constant.
Denote $\xi_{t}=\tan\theta^{(t)}$, we have that $0\leq \xi_0= \tan\theta^{(0)}\leq 1$ and that $\xi_{t+1}\leq \xi_t$. Hence, we have that
\[
\pr[\xi_{t+1}\leq \xi_t^{1-(1/8d)} ({C''}/{N})^{1/(8d)}|\vec w^{(t)}]\geq 2/3\;.
\]
Therefore, using \Cref{lem:super-linear-convergence}, we get that $\xi_{T}\leq e^2 C''/N$, with probability at least $1-\delta/4$. Therefore $\theta^{T}\leq e^2C''/n=e^2 C'' T/n$.  Therefore, in Step 5a of \Cref{alg:sd-margin-perceptron-sphere}, the algorithm made at most $M_1=2T$ mistakes. Next, we bound the number of mistakes in Step 6a. Note that $U=\cup_{i=k}^{2k}U_i$, contains $\Omega(n)$ samples. From \Cref{lem:generalization}, we have that with probability at least $1-\delta/4$ conditioned on the event that $\theta^{(T)}\leq e^2 C''T/n$, \Cref{alg:sd-margin-perceptron-sphere}, labels the points in $U$, with at most $O( T)$ mistakes. The same arguments show the same for the hypothesis $\vec u^{(T)}$. Therefore, the number of mistakes is at most $O(d\log\log( n))\log(1/\delta)$, with probability at least $1-\delta$. 
Combining the two cases above, we obtain that the number of mistakes is
\[
M = O(d \log(1/\delta)) * \begin{cases}
 O( \log \log n )\text{ if } n \geq C d \log \log d\log(1/\delta) \\
O( \log \log d )\text{ otherwise}
\end{cases}
\]
Finally, we may simplify further the above mistake bound by noticing that
if the number of unlabeled points $n \leq d$ the number of mistakes is 
always at most $d$.  Therefore, the total number of mistakes
is $M = O(d \max(\log \log n, 1)) \log(1/\delta)$.

\section{Self-Directed Learning on Arbitrary Datasets}\label{app:forster}

\begin{Ualgorithm}
	\centering
	\fbox{\parbox{6in}{
 {\bf Input:} An unlabeled dataset $X \subseteq \R^d$.\\
 {\bf Output:} A sequence of labeled data $(\x^{(t)}, z^{(t)})$.
\begin{enumerate}
     \setlength\itemsep{0.1em}
    \item Find subspace $V$ of dimension $k$ so that 
    $|X \cap V| \geq (k/d) ~ n$  and
    $X \cap V$ is in $1/(2d)$-approximate Radially Isotropic Position
    using \Cref{pro:algorithmic-forster-transform}. Set $U = X \cap V$.
     \item Randomly initialize guess $\vec w^{(0)} \sim \mathbb S_k$.
    \item For $t =0,\ldots, 5k \log k$:
    \begin{enumerate}
      \setlength\itemsep{0.1em}
        \item Obtain $U_{\vec w^{(t)}}$ by sorting the points of $U$ in decreasing order of margin from 
        $\vec w^{(t)}$, i.e., $|\vec x^{(i+1)} \cdot \vec w^{(t)}| \leq
        |\vec x^{(i)} \cdot \vec w^{(t)}|$.
        \item Initialize the set of correctly predicted points $C \gets \emptyset$.
        \item For $\x \in U_{\vec w^{(t)}}$:
        \begin{enumerate}
          \setlength\itemsep{0.1em}
            \item Predict the label of $\x$ with $\vec w^{(t)}$.
            \item If the prediction is incorrect, update 
            $\vec w^{(t+1)} \gets \vec w^{(t)} - (\vec w^{(t)} \cdot \x) ~ \x$, add 
            $(\x, -\sgn(\vec w^{(t)}\cdot \x) ) $ to $C$, and exit the inner loop.
            \item If the prediction is correct, add 
            $(\x, \sgn(\vec w^{(t)}\cdot \x) ) $ to $C$. 
        \end{enumerate}
        \item If $|C| \geq |U|/(4k)$ then return $C$ and exit the loop.
    \end{enumerate}
    
\end{enumerate}
    
    }}
    \vspace{0.1cm}
	\caption{A Weak Self-Directed Learner for any set $X$.}
	\label{alg:margin-perceptron-distribution-free}
\end{Ualgorithm}

\subsection{Proof of \Cref{pro:weak-distribution-free-sd}}
\label{ssec:proof-weak-distribution-free-sd}
We restate and prove the following proposition giving a weak,
self-directed learner for arbitrary datasets that does $O(d \log d)$
mistakes and with non-trivial success probability (say above $1\%$) 
labels roughly $\Omega(1/d)$-fraction of $X$.
\begin{proposition}[Weak, Self-Directed Learner for Arbitrary Datasets]
    Let $\mathcal{C}$ be the class of LTFs on $\R^d$ and let $X$ be a set of $n$ unlabeled points in $\R^d$. There exists a universal constant 
    $c$ and an algorithm that runs in $\poly(d, n)$ time, makes 
    $O(d \log d)$ mistakes, and, with probability at least $c$, correctly classifies an $\Omega(1/d)$-fraction of the points of $X$.  
\end{proposition}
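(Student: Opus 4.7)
The plan is to instantiate \Cref{alg:margin-perceptron-distribution-free-main} exactly as written: preprocess via Forster, then run the margin-perceptron update on the example with the largest current margin. I would first invoke \Cref{pro:main-algorithmic-forster-transform} with precision $1/(2d)$ to obtain a subspace $V$ of dimension $k$ such that $U := X \cap V$ has size $|U| \geq (k/d)\,n$ and (after the transform) is in $1/(2d)$-approximate radially isotropic position. Invertibility of the transform preserves linear separability, so there is a unit target $\vec v \in \R^k$ consistent with $U$. A standard tail estimate for a uniformly random unit vector (Lemma~3.2.4 of \cite{Ver18}) gives that with some absolute constant probability $c$, the initialization satisfies $\vec w^{(0)}\cdot \vec v \geq 1/(2\sqrt{k})$; I condition on this event and set $\alpha = \beta = 1/(2\sqrt{k})$.

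The core of the argument is to show that so long as the algorithm has not met the termination criterion $|C| \geq |U|/(4k)$, \emph{every mistake it makes has margin at least $\beta \|\vec w^{(t)}\|_2$}. The soft-margin lemma (\Cref{lem:soft-margin-distribution-free-main}) guarantees that $\mathcal{S}_t := \{\x \in U : |\vec w^{(t)} \cdot \x| \geq \beta \|\vec w^{(t)}\|_2\}$ contains at least $|U|/(4k)$ points. Since the inner loop processes $U$ in decreasing order of $|\vec w^{(t)} \cdot \x|$, all points of $\mathcal{S}_t$ appear among the first $|\mathcal{S}_t|$ candidates. If the first mistake at iteration $t$ occurs at some position $j \leq |\mathcal{S}_t|$, then the mistake point is in $\mathcal{S}_t$ and is automatically large-margin. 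Otherwise $j > |\mathcal{S}_t|$, so all of $\mathcal{S}_t$ has already been added to $C$ as correct predictions, giving $|C| \geq |\mathcal{S}_t| \geq |U|/(4k)$ and triggering termination at Step~3(d). Either way, either the algorithm terminates or the mistake is large-margin.

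Finally, I would feed the sequence of large-margin mistakes into \Cref{lem:margin-perceptron-main} with $\alpha = \beta = 1/(2\sqrt{k})$, which bounds the total number of such mistakes by $(2/\beta^2)\log(1/\alpha) = 8k\log(2\sqrt{k}) = O(k\log k)$. For the iteration budget of $5k \log k$ (which dominates $8k\log(2\sqrt k)$ once $k$ exceeds an absolute constant; the small-$k$ regime is trivial), exhausting the budget would force $5k\log k$ large-margin mistakes, contradicting this bound. Hence the algorithm must exit via Step~3(d), at which point it has correctly labeled $|C| \geq |U|/(4k) \geq n/(4d)$ points---an $\Omega(1/d)$-fraction of $X$---using at most $5k\log k = O(d\log d)$ mistakes, with overall success probability the absolute constant $c$ inherited from the initialization. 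The main obstacle is the structural claim in the second paragraph: the soft-margin lemma bounds only the \emph{measure} of large-margin points, and converting this to the statement that \emph{every} undesired mistake is large-margin requires both the decreasing-margin ordering of the inner loop and a careful match between the termination threshold $|U|/(4k)$ and the soft-margin density $1/(4k)$---without this alignment, small-margin mistakes would blow up the margin-perceptron bound and destroy the $O(d\log d)$ mistake complexity.
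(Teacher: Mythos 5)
Your proposal is correct and follows essentially the same route as the paper: Forster transform to put a $\geq k/d$ fraction of $X$ in approximate radially isotropic position, a random initialization with $\Omega(1/\sqrt{k})$ correlation, the soft-margin lemma to show that the decreasing-margin ordering forces every pre-termination mistake to have margin $\Omega(1/\sqrt{k})\|\vec w^{(t)}\|_2$ (or else $|C|\geq|U|/(4k)$ and the algorithm halts), and the margin-perceptron potential argument to cap the number of such mistakes at $O(k\log k)$. The dichotomy you isolate in your second paragraph is exactly the paper's key step, stated if anything a bit more cleanly.
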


We start by defining the ``linear-map plus rescaling'' transformation that puts
the dataset in Radially Isotropic Position.

\begin{definition}[Normalized Linear Transformation]
Let $\vec A \in \R^{d \times d}$ be an invertible matrix.
Given a non-zero vector $\x \in\R^d$, we denote by 
$S_{\vec A}(\vec x) = \vec A \vec x/\|\vec A \vec x\|_2$.
We will also overload notation and, given a set of points $X$, we denote
by $S_{\vec A}(X) =\{ S_{\vec A}(\x): \x \in X\}$.
\end{definition}

We shall use the strongly polynomial time algorithmic result to compute
a Forster transform (or show that one does not exist) given in the recent
work of \cite{DTK22}.

\begin{proposition}[Algorithmic Forster Transform, \citep{DTK22}]
\label{pro:algorithmic-forster-transform}
There exists an algorithm, that given a set of points $X$ in 
$\mathbb Z^d \setminus \{\vec 0\} $and some
    $\delta > 0$, runs in time $\poly(n, d, \log(1/\delta))$ and returns a subspace $V$ of $\R^d$
    containing at least a $\dim(V)/d$-fraction of the
    points $X$ and an invertible matrix $\vec A \in \R^{d \times d}$ such that $S_{\vec A}(X\cap V)$ is 
    in $\delta$-approximate radially isotropic position.
\end{proposition}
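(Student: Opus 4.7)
The plan is to reduce the problem to maximizing a suitable potential function, and to handle the possible ``Forster obstructions'' by recursion on subspaces. Define the potential
\[
\Phi(\vec A) \;=\; \log|\det \vec A| \;-\; \frac{d}{|X|} \sum_{\x \in X} \log \|\vec A \vec x\|_2
\]
over invertible $\vec A \in \R^{d \times d}$. A direct gradient computation shows that $\vec A$ is a stationary point of $\Phi$ if and only if
\[
\frac{1}{|X|}\sum_{\x \in X} \frac{(\vec A \vec x)(\vec A \vec x)^\top}{\|\vec A \vec x\|_2^2} \;=\; \frac{1}{d}\, \vec I,
\]
which is exactly the condition that $S_{\vec A}(X)$ is in exact radially isotropic position. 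Thus maximizing $\Phi$ to sufficient accuracy yields a $\delta$-approximate Forster transform.

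Next I would exploit the dichotomy going back to Forster: $\sup_{\vec A} \Phi(\vec A) < \infty$ if and only if for every proper subspace $V' \subsetneq \R^d$ one has $|X \cap V'|/|X| < \dim(V')/d$; otherwise, contracting $\vec A$ in the directions orthogonal to $V'$ sends $\Phi \to +\infty$. My algorithm would attempt to optimize $\Phi$ on the full ambient space and, whenever it detects an obstruction (preferably via a direct linear-algebraic subspace test on the current iterate rather than by running the iteration to divergence), it would identify a witnessing subspace $V'$ with $|X \cap V'|/|X| \geq \dim(V')/d$ and recurse on $X \cap V'$. The recursion has depth at most $d$, and the lower bound on $|X \cap V|$ propagates through each level so that the finally returned subspace $V$ satisfies $|X \cap V| \geq (\dim(V)/d)\,|X|$ as required.

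For the optimization itself I would use a Sinkhorn-style multiplicative update: given the current iterate $\vec A^{(t)}$, set
\[
\vec M^{(t)} \;=\; \frac{d}{|X|}\sum_{\x \in X} \frac{(\vec A^{(t)}\vec x)(\vec A^{(t)}\vec x)^\top}{\|\vec A^{(t)}\vec x\|_2^2}, \qquad \vec A^{(t+1)} \;=\; (\vec M^{(t)})^{-1/2}\,\vec A^{(t)}.
\]
Fixed points of this map correspond exactly to the stationary condition above. A standard geodesic-convexity argument for $-\Phi$ on the manifold of positive-definite matrices (with the affine-invariant metric) shows that $\Phi$ is monotonically non-decreasing along the iteration and converges at a rate polynomial in $d$ and $\log(1/\delta)$, assuming the obstruction-free condition holds.

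The hard part will be achieving the \emph{strongly} polynomial running time: the count of bit operations, not merely arithmetic operations, must be $\poly(n, d, \log(1/\delta))$ and independent of the bit complexity of the input integers. This requires two ingredients: (i) a rounding scheme between iterations that keeps intermediate matrices at controlled bit length without destroying the monotone increase of $\Phi$, and (ii) a linear-algebraic certificate that flags a Forster-violating subspace $V'$ from an approximate iterate in polynomial time, avoiding the need to wait for $\Phi$ to diverge. Combining these two ingredients with the subspace recursion yields the strongly polynomial bound claimed in the proposition, which is precisely the technical content of \cite{DTK22}.
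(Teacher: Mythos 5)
This proposition is quoted from \cite{DTK22} and the paper offers no proof of it at all---it is used as a black box---so there is no internal argument to compare your write-up against. Judged on its own, your sketch correctly reproduces the standard framework behind algorithmic Forster transforms: the potential $\Phi(\vec A)=\log|\det\vec A|-\tfrac{d}{|X|}\sum_{\x}\log\|\vec A\x\|_2$, the fact that its stationary points are exactly the radially isotropic positions, Forster's subspace dichotomy, and the recursion that yields a subspace $V$ with $|X\cap V|\ge(\dim(V)/d)\,|X|$. These ingredients are all correct and are indeed the skeleton of the arguments in \cite{forster2002,HardtM13,AKS20,DTK22}.

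The gap is in the middle of your argument, where you assert that the Sinkhorn-style iteration ``converges at a rate polynomial in $d$ and $\log(1/\delta)$, assuming the obstruction-free condition holds.'' That is not a standard consequence of geodesic convexity: the convergence rate of such fixed-point schemes degrades as the dataset approaches a Forster obstruction, i.e., it depends on the gap $\min_{V'}\bigl(\dim(V')/d-|X\cap V'|/|X|\bigr)$, which for integer inputs can be exponentially small in the bit length. This is exactly why earlier algorithms \cite{HardtM13,AKS20} are only weakly polynomial, and why the strongly polynomial bound of \cite{DTK22}---the one the proposition actually claims, since its running time has no dependence on the input bit complexity---requires the rounding scheme and the subspace-detection certificate that you correctly list at the end but do not supply. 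As written, your proposal establishes the structure of the algorithm but defers precisely the step that makes the stated running time true; it is an outline of the cited result rather than a proof of it.
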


In the next lemma we show that a dataset in (approximate) Radially Isotropic
Position, satisfies a notion of ``soft-margin'' in the sense that non-trivial
part of the dataset has non-trivial margin with respect to every halfspace.

\begin{lemma}[Soft-Margin via Radially Isotropic Position]
\label{lem:soft-margin-distribution-free}
Let $X$ be a multi-set of non-zero points 
in $1/(2d)$-approximate Radially Isotropic Position.  
For every unit vector $\vec u \in \R^d$, we have 
$\pr_{\x \sim X}[|\vec u \cdot \x| \geq {1}/{(2 \sqrt{d})}] \geq  {1}/{(4 d)})\,.
$
\end{lemma}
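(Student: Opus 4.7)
The plan is a direct Paley--Zygmund style argument applied to the random variable $Z=(\vec u\cdot\x)^2$ for $\x\sim X$. First, I would unpack the two defining properties of $1/(2d)$-approximate Radially Isotropic Position: every $\x\in X$ has unit norm, and for every unit vector $\vec u$, the empirical second moment satisfies
\[
\E_{\x\sim X}[(\vec u\cdot\x)^2] \;=\; \frac{1}{|X|}\sum_{\x\in X}(\vec u\cdot \x)^2 \;\geq\; \frac{1}{d}-\frac{1}{2d} \;=\; \frac{1}{2d}.
\]
Combined with $\|\x\|_2=\|\vec u\|_2=1$, Cauchy--Schwarz gives the pointwise bound $Z\leq 1$.

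Next, the key step is to lower-bound the probability that $Z$ exceeds a threshold $\tau=1/(4d)$ by splitting the expectation according to whether $Z<\tau$ or $Z\geq \tau$:
\[
\frac{1}{2d}\;\leq\;\E[Z]\;=\;\E[Z\,\1\{Z<\tau\}]+\E[Z\,\1\{Z\geq \tau\}]\;\leq\; \tau\cdot \pr[Z<\tau]+1\cdot\pr[Z\geq \tau].
\]
Bounding $\pr[Z<\tau]\leq 1$ and plugging in $\tau=1/(4d)$, this rearranges to $\pr[Z\geq 1/(4d)]\geq 1/(2d)-1/(4d)=1/(4d)$. Since $Z\geq 1/(4d)$ is precisely the event $|\vec u\cdot \x|\geq 1/(2\sqrt{d})$, this yields the claim.

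There is essentially no obstacle here; this is a textbook ``mean + boundedness imply anti-concentration'' step. The only thing to be careful about is choosing the threshold $\tau$ so that the small-$Z$ contribution $\tau\pr[Z<\tau]$ is strictly less than the lower bound $1/(2d)$ on $\E[Z]$, leaving a positive residual for $\pr[Z\geq \tau]$; taking $\tau=1/(4d)$ is the clean symmetric choice that simultaneously produces both the margin $1/(2\sqrt{d})$ and the mass $1/(4d)$ claimed in the statement. No use of the Forster transform algorithm itself is required for this lemma, only the geometric consequence of being in approximate Radially Isotropic Position.
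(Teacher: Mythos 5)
Your proof is correct and is essentially the paper's argument: the paper invokes the reverse Markov inequality $\pr[z \geq a] \geq \E[z] - a$ for a $[0,1]$-valued random variable as a black box, applied to $z = (\vec u \cdot \x)^2$ with $a = 1/(4d)$, and your threshold decomposition is precisely the standard one-line proof of that inequality. No further comment needed.
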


\begin{proof} 
Since the set $X$ is in Radially Isotropic Position, we have that $\|\x\|_2 \leq 1$
for every $\x \in X$, and therefore, by Cauchy-Schwarz, $|\vec u \cdot \x| \leq 1$.
For a random variable $z$ taking values in $[0,1]$, the following reverse Markov inequality holds, $\pr[z \geq a] \geq \E[z] - a$ (see, e.g., Appendix B1 in \cite{SB14}).  We obtain that
\[
\pr_{\x \in X}
\left[|\vec u \cdot \x| \geq \frac{1}{2 \sqrt{d}} \right]
=
\pr_{\x \in X}
\left[(\vec u \cdot \x)^2 \geq \frac{1}{4 d} \right]
\geq 
\E_{\x \sim X}[(\vec u \cdot \x)^2] - \frac{1}{4d} 
\geq \frac{1}{2d} - \frac{1}{4d} = \frac{1}{4d} \,,
\]
where we used the fact that the set $X$ is in $1/(2 d)$-approximate Radially Isotropic Position to replace $\E_{\x \sim X}[(\vec u \cdot \x)^2]$ by its lower
bound $1/d - 1/(2d) = 1/(2d)$.
\end{proof}

 Denote by $N$ the number of points that are returned in Step 1 of \Cref{alg:margin-perceptron-distribution-free}, and note that $N\geq nk/d$. From Lemma 3.2.4 in \cite{Ver18}, we get that with probability larger than an absolute constant, the random initialitation gives a point $\vec w^{(0)}$, so that $\vec w^{(0)}\cdot \vec v\geq 1/(2\sqrt k)$. We start the analysis of our algorithm. We show that if \Cref{alg:margin-perceptron-distribution-free} terminates, then $1/(4d)$-fraction of points is correctly classified.  Note that \Cref{alg:margin-perceptron-distribution-free} terminates if the algorithm makes $5d \log d$ mistakes or when $|C|\geq n/(4d)$, i.e., the algorithm classified $1/d$-fraction of points correctly which is the goal of the algorithm. The only bad event is if the algorithm terminated after $5d \log d$ mistakes and $|C|<n/(4d)$.  We argue that this cannot happen. Let $n_i$ be the remaining points in $i$ iteration. Note that $N=n_i+|C|$. We make use of the following lemma (a variant of which was shown in \cite{DunaganV04}).

\begin{lemma}[Margin Perceptron \citep{DunaganV04} ] \label{lem:margin-perceptron}
    Let $\vec v, \vec w^{(0)}\in \R^d$ be unit vectors such that $\vec v\cdot \vec w^{(0)}\geq \alpha$, for some $\alpha>0$. Assume the following: $\vec w^{(t+1)}\gets \vec w^{(t)}-\x^{(t)}(\x^{(t)}\cdot \vec w^{(t)})$ and let $t_0\in Z_+$, so that for all $t\in \Z_+$ with $t\leq t_0$, $|\x^{(t)}\cdot \vec w^{(t)}|\geq  \beta\|\vec w^{(t)}\|_2$ and $(\x^{(t)}\cdot \vec w^{(t)}) (\x^{(t)}\cdot \vec v)<0$. Then, $t_0\leq (2/\beta^2)\log(1/\alpha)$.
\end{lemma}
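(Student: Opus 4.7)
The plan is to track two potential quantities along the sequence $\vec w^{(0)},\vec w^{(1)},\ldots$: the correlation $\vec v \cdot \vec w^{(t)}$ (which I will show is non-decreasing) and the squared norm $\|\vec w^{(t)}\|_2^2$ (which I will show shrinks by a multiplicative factor $(1-\beta^2)$ at every step). Combining the two via Cauchy--Schwarz then forces the number of steps to be $O((1/\beta^2)\log(1/\alpha))$. Throughout, I will use the assumption (implicit from the Radially Isotropic Position setup in which this lemma is invoked) that each $\x^{(t)}$ is a unit vector; if this is not taken for granted, the statement should be read with the update's norm-decrease replaced by the corresponding bound.

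For the correlation step, I would expand
\[
\vec v \cdot \vec w^{(t+1)} = \vec v \cdot \vec w^{(t)} - (\x^{(t)}\cdot \vec w^{(t)})(\x^{(t)} \cdot \vec v),
\]
and use the hypothesis $(\x^{(t)}\cdot\vec w^{(t)})(\x^{(t)}\cdot \vec v) < 0$ to conclude $\vec v \cdot \vec w^{(t+1)} \ge \vec v \cdot \vec w^{(t)}$. Iterating gives $\vec v \cdot \vec w^{(t)} \ge \vec v \cdot \vec w^{(0)} \ge \alpha$ for every $t \le t_0$.

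For the norm step, I would expand the square (using $\|\x^{(t)}\|_2 = 1$):
\[
\|\vec w^{(t+1)}\|_2^2 = \|\vec w^{(t)}\|_2^2 - 2(\x^{(t)}\cdot\vec w^{(t)})^2 + (\x^{(t)}\cdot\vec w^{(t)})^2 = \|\vec w^{(t)}\|_2^2 - (\x^{(t)}\cdot\vec w^{(t)})^2.
\]
The large-margin assumption $|\x^{(t)}\cdot \vec w^{(t)}| \ge \beta \|\vec w^{(t)}\|_2$ then gives $\|\vec w^{(t+1)}\|_2^2 \le (1-\beta^2)\|\vec w^{(t)}\|_2^2$. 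Iterating from $\|\vec w^{(0)}\|_2 = 1$ yields $\|\vec w^{(t)}\|_2^2 \le (1-\beta^2)^t$.

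Finally I combine the two bounds via Cauchy--Schwarz: $\alpha \le \vec v \cdot \vec w^{(t)} \le \|\vec w^{(t)}\|_2 \le (1-\beta^2)^{t/2}$. Taking logs and using $\log(1/(1-x)) \ge x$ on $[0,1)$ gives $\log(1/\alpha) \ge (t/2)\beta^2$, i.e., $t \le (2/\beta^2)\log(1/\alpha)$, which applied at $t = t_0$ is the claim. No step here looks like a real obstacle: this is a textbook margin-perceptron analysis and the only subtlety is making sure one uses the mistake/disagreement hypothesis in exactly the right place (so that the correlation is monotone) and that the norm update is the clean one obtained from $\|\x^{(t)}\|_2 = 1$.
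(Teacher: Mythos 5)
Your proof is correct and follows essentially the same route as the paper's: monotonicity of $\vec v\cdot\vec w^{(t)}$ from the mistake condition, multiplicative norm decay $\|\vec w^{(t+1)}\|_2^2\leq(1-\beta^2)\|\vec w^{(t)}\|_2^2$ from the margin condition (both, like the paper, using the unit-norm points implicitly), and a Cauchy--Schwarz contradiction to bound $t_0$. The only cosmetic difference is that you take logarithms explicitly while the paper phrases the final step as $\vec w^{(t)}\cdot\vec v/\|\vec w^{(t)}\|_2>1$ being impossible; these are the same argument.
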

\begin{proof}
  From our assumption, we have that 
    $\w^{(0)}\cdot \vec v\geq \alpha$. We have that
    \[
    \w^{(t+1)}\cdot\vec v=(\w\tth-(\w \tth\cdot\x\tth)\x\tth)\cdot \vec v =\w\tth\cdot\vec v -(\w
    \tth\cdot\x\tth)(\x\tth\cdot \vec v )\geq\w\tth\cdot\vec v\;,
    \]
      where we used that $(\w \tth\cdot\x\tth)(\x\tth\cdot \vec v )\leq 0$. 
    Therefore, for all $\vec w\tth$, we have inner product with the target vector at least as large as the initialization, i.e., $\w\tth\cdot\vec v\geq \alpha$.  
    We show that the norm of $\w$ decreases multiplicatively.
    We have that
    \begin{align*}
        \|\w\tth-(\w\tth \cdot\x\tth)\x\tth\|_2^2&=\|\w\tth\|_2^2-(\w\tth \cdot\x\tth)^2
        \\&\leq \|\w\tth\|_2^2(1-\beta^2)\;.
    \end{align*}
    Hence, after $t$ iterations, we have that $\|\w\tth\|_2\leq (1-\beta^2)^{t/2}\leq \exp(-t\beta ^2/2)$. If $t\geq (2/\beta^2)\log(1/\alpha)$, we would have $\vec w\tth\cdot \vec v/\|\w\tth\|_2>1$, which is a contradiction. Hence, after $t=(2/\beta^2)\log(1/\alpha)$ updates, we have that either $|\x^{(t)}\cdot \vec w^{(t)}|\leq  \beta\|\vec w^{(t)}\|_2$ or $(\x^{(t)}\cdot \vec w^{(t)}) (\x^{(t)}\cdot \vec v)\geq 0$.
\end{proof}
Assume that after the $t_1=(5 d \log d -1)$ mistake, $|C|<n/(4d)$. That means $n_t=N-|C|\geq n(k/d-1/(4d))\geq N/2$, as $d\geq1$. Let $\mathcal S_t=\{\x^{(i)}: |\vec w^{(t)}\cdot\x^{(i)} |\geq 1/(2\sqrt{k})\}$. From \Cref{lem:soft-margin-distribution-free}, we have that for each $t$, we have $|\mathcal S_t|\geq N/(4k)$ and combining with the fact that $n_t\geq N/2$, that means that either in each iteration, the algorithm makes no mistakes in the set $\mathcal S_t$, which means that $|C|\geq N/(4k)$ and the algorithm terminates, or that it makes one mistake in the set $\mathcal S_t$, which means that if $\x^{(t)}$ is the vector that $\vec w^{(t)}$ made a mistake then $|\vec w\tth \cdot \x\tth|\geq 1/(2\sqrt{k})$. Hence, condition to the event that the algorithm did not terminate before the iteration $t_0$, then by \Cref{lem:margin-perceptron} if $t_0\geq 5d\log d$, then $\vec w^{(t_0)}$ makes no mistakes in the set $\mathcal S_{t_0}$, so it classifies correctly $N/(4k)$ points, and the algorithm terminates. To derive the result, note that $N/(4k)\geq n/(4d)$ by definition.

\subsection{Boosting: Obtaining Strong Self-Directed Learners from Weak Learners}
\label{ssec:boosting}
In this section we present our boosting result showing that given
a weak self-directed learner that labels some non-trivial part of the 
dataset one can  obtain a strong self-directed that labels arbitrarily large
fractions of the dataset.
\begin{lemma}[Boosting]
\label{lem:boosting}
Let $\mathcal{A}$ be a distribution-free self-directed learner 
that makes $M$ mistakes and correctly labels a $(1-\alpha)$-fraction of $X$ 
for some fixed $\alpha \in (0,1)$, with probability at least $c\in(0,1)$.  Then, there exists a strong 
self-directed learner that makes $\wt{O}((M/c) ~ \log(1/(\delta \epsilon)) / \log(1/\alpha) )$ 
mistakes and labels $(1-\epsilon)$-fraction of $X$
with probability at least $1-\delta$.
\end{lemma}
\begin{proof}
We first boost the success probability of the self-directed learner
to $1-\delta$ by repeating the algorithm $\log(1/\delta)$ times.
When we perform independent runs of the algorithm, we stop
a run if the algorithms make more than $M$ mistakes.  
By the assumption that the algorithm succeeds in labeling at least $(1-\alpha)$-fraction with probability at least $c$,
in each run, we obtain that after $O((1/c) \log(1/\delta'))$ runs one of them will 
succeed with probability at least $1-\delta'$.
Therefore, we can boost the success probability of the algorithm to $1-\delta'$
by doing $O( (M/c) \log(1/\delta'))$ mistakes.
After performing a single successful run of the algorithm we have that the number 
of remaining unlabeled data is $\alpha n$.  Similarly, after $k$ runs
the number of unlabeled data is going to be at most $\alpha^k n$.
In order for the fraction of unlabeled data to become smaller than $\epsilon n$
we have to pick $k = \log(1/\epsilon)/\log(1/\alpha)$.  Therefore, 
in order to have probability of success above $1-\delta$ overall, we can do a union bound 
over the $k$ repetitions of the algorithm in order to cover $1-\epsilon$-fraction of the data.
Therefore, we have to pick the success probability $\delta'$ of each run of the algorithm to be $\delta' = 1/(k\delta)$. We conclude that the overhead in the number of mistakes is a factor of $O(\log(1/\delta) (\log\log(1/\epsilon) - \log \log(1/\alpha)))$.
\end{proof}
\subsubsection{Proof of \Cref{thm:strong-distribution-free-sd}}
We restate and prove  \Cref{thm:strong-distribution-free-sd} below.
\begin{theorem}
    Let $\mathcal{C}$ be the class of LTFs on $\R^d$ and let $X$ be a set of $n$ unlabeled points in $\R^d$. There exists a algorithm that runs in
    $\poly(d, n)$ time, makes 
    $\wt{O}(d^2 \log( d / (\epsilon \delta) ) )$ mistakes, and, with probability at least $1-\delta$, correctly classifies a $(1-\epsilon)$-fraction of the points of $X$.  
\end{theorem}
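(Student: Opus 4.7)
The plan is to assemble Theorem~\ref{thm:strong-distribution-free-sd} directly from the two pieces already established in the excerpt: the weak self-directed learner of Proposition~\ref{pro:weak-distribution-free-sd} and the generic boosting reduction of Lemma~\ref{lem:boosting}. No new algorithmic ingredient is required; the entire proof amounts to plugging the parameters of the weak learner into the boosting template and simplifying the resulting mistake bound.

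First, I would invoke Proposition~\ref{pro:weak-distribution-free-sd} to obtain a self-directed algorithm $\mathcal{A}$ that, with probability at least a universal constant $c \in (0,1)$, makes at most $M = O(d \log d)$ mistakes while correctly labeling an $\Omega(1/d)$-fraction of any input set $X$. In the language of Lemma~\ref{lem:boosting}, this means $\mathcal{A}$ labels a $(1-\alpha)$-fraction of $X$ for some $\alpha = 1 - \Omega(1/d)$, with confidence $c$. Note that $\mathcal{A}$ is genuinely distribution-free, which is essential because after one successful run we delete the labeled points and feed the (arbitrary) remaining set back into $\mathcal{A}$.

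Next, I would apply Lemma~\ref{lem:boosting} with parameters $M = O(d \log d)$, $c = \Omega(1)$, and $\alpha = 1 - \Omega(1/d)$. The resulting strong learner makes at most
\[
\widetilde{O}\!\left( \frac{M}{c} \cdot \frac{\log(1/(\delta \epsilon))}{\log(1/\alpha)} \right)
\]
mistakes and labels a $(1-\epsilon)$-fraction of $X$ with probability at least $1-\delta$. To simplify, I would use the elementary estimate $\log(1/\alpha) = \log\!\bigl(1/(1-\Omega(1/d))\bigr) = \Omega(1/d)$, which turns the denominator into $\Omega(1/d)$ and contributes the second factor of $d$. Substituting the weak learner's mistake bound yields
\[
\widetilde{O}\!\left( d \log d \cdot d \cdot \log\!\bigl(1/(\delta \epsilon)\bigr) \right) = \widetilde{O}\!\bigl(d^2 \log(d/(\epsilon \delta))\bigr),
\]
matching the claimed bound. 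The running time is polynomial since each invocation of $\mathcal{A}$ runs in $\poly(d,n)$ time and the boosting loop performs only $O((1/c) \log(1/\delta') \cdot \log(1/\epsilon)/\log(1/\alpha)) = \poly(d, \log(1/\epsilon), \log(1/\delta))$ such invocations.

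I do not anticipate a genuine obstacle here: the structural work has been carried out in Proposition~\ref{pro:weak-distribution-free-sd} (which required the Forster transform and the margin-perceptron analysis) and in Lemma~\ref{lem:boosting} (whose ``label-then-remove'' argument crucially exploits the distribution-free nature of the weak learner). The only mild subtlety worth writing out carefully is the estimate $\log(1/(1-1/d)) = \Omega(1/d)$, so that the $1/\log(1/\alpha)$ factor from boosting is correctly accounted as an extra $O(d)$ in the mistake bound, and a union bound over the $O(\log(1/\epsilon)/\log(1/\alpha))$ boosting rounds justifies setting each per-round failure probability to $\delta' = \delta \log(1/\alpha)/\log(1/\epsilon)$ so that the total failure probability stays below $\delta$.
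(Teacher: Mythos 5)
Your proposal is correct and matches the paper's own proof essentially verbatim: the paper also derives Theorem~\ref{thm:strong-distribution-free-sd} by feeding the weak learner of Proposition~\ref{pro:weak-distribution-free-sd} (mistake bound $O(d\log d)$, constant success probability, $\Omega(1/d)$-fraction labeled) into Lemma~\ref{lem:boosting} and then simplifying via $\log(1/(1-1/d)) = \Omega(1/d)$ (stated there as $\log(1+x)\leq x$), which yields the extra factor of $d$ and the claimed $\wt{O}(d^2\log(d/(\epsilon\delta)))$ bound. The only cosmetic difference is that you spell out the per-round confidence allocation explicitly, which the paper leaves inside the proof of Lemma~\ref{lem:boosting}.
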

\begin{proof}
    From \Cref{pro:weak-distribution-free-sd}, we get that there is a weak learner so that with probability at least $c$, for some absolute constant $c>0$, classifies $C/d$-fraction of the points of $X$, where $C>0$ is an absolute constant and makes $O(d\log d)$ mistakes. Applying \Cref{lem:boosting} on this algorithm, we get that the total number of mistakes is $\wt{O}(d \log(d/(\delta\eps)) ) /\log(1/(1-1/d) )$. Using the inequality $\log(1+x)\leq x$ for $x>-1$, we get that the total number of mistakes is $\wt{O}(d^2\log(1/(\delta\eps)))$.
\end{proof}
 
\end{document}